\newtcolorbox{block}[1][]{
    colback={rgb,255:red,248; green,249; blue,250}, 
    colframe={rgb,255:red,248; green,249; blue,250}, 
    boxrule=0pt,
    fonttitle=\bfseries,
    boxsep=2pt,
    left=2pt,
    right=2pt,
    #1
}
\crefname{section}{\S}{\S\S}
\crefname{equation}{Eq.}{Equations}
\crefname{appendix}{App.}{Apps.}
\crefname{thm}{Thm.}{Thms.}
\crefname{cor}{Cor.}{Cors.}
\crefname{prop}{Prop.}{Props.}
\crefname{asm}{Asm.}{Asms.}
\crefname{defn}{Defn.}{Defns.}
\crefname{lemma}{Lem.}{Lems.}
\crefname{exm}{Ex.}{Exs.}
\crefname{clar}{Clar.}{Clars.}
\newcommand{\E}{\mathop{\mathbb{E}}}
\newcommand{\sx}[1]{^{\scriptscriptstyle #1}}
\DeclareRobustCommand{\circled}[1]{%
  \tikz[baseline=(char.base)]\node[draw, shape=circle, inner sep=0.5pt](char){\small #1};%
}
\title{On the Value of Cross-Modal Misalignment in Multimodal Representation Learning}
\author{
  Yichao Cai\thanks{Equal contribution. $^\dag$Correspondence to: \texttt{yuhang.liu01@adelaide.edu.au}.}\qquad
  Yuhang Liu\footnotemark[1]\;\,\footnotemark[2]\qquad
  Erdun Gao\qquad
  Tianjiao Jiang  \\
  \textbf{Zhen Zhang\qquad Anton van den Hengel\qquad Javen Qinfeng Shi}\\
  Australian Institute for Machine Learning\\
  The University of Adelaide, SA 5000, Australia
}
\begin{document}

\maketitle
\addtocontents{toc}{\protect\setcounter{tocdepth}{-1}}

\begin{abstract}

Multimodal representation learning, exemplified by multimodal contrastive learning (MMCL) using image-text pairs, aims to learn powerful representations by aligning cues across modalities. This approach relies on the core assumption that the exemplar image-text pairs constitute two representations of an identical concept. 
However, recent research has revealed that real-world datasets often exhibit \emph{cross-modal misalignment}.
There are two distinct viewpoints on how to address this issue: one suggests mitigating the misalignment, and the other leveraging it. We seek here to reconcile these seemingly opposing perspectives, and to provide a practical guide for practitioners. Using latent variable models we thus formalize cross-modal misalignment by introducing two specific mechanisms: \emph{Selection bias}, where some semantic variables are absent in the text, and \emph{perturbation bias}, where semantic variables are altered---both leading to misalignment in data pairs. Our theoretical analysis demonstrates that, under mild assumptions, the representations learned by MMCL capture exactly the information related to the subset of the semantic variables invariant to selection and perturbation biases. This provides a unified perspective for understanding misalignment. Based on this, we further offer actionable insights into how misalignment should inform the design of real-world ML systems. We validate our theoretical findings via extensive empirical studies on both synthetic data and real image-text datasets, shedding light on the nuanced impact of cross-modal misalignment on multimodal representation learning.
\footnote{The project page is available at \url{https://yichaocai.com/misalignment.github.io}}
\end{abstract} 
\section{Introduction}

Modern multimodal learning has achieved remarkable success in jointly modeling information from heterogeneous sources such as vision, language, and audio. Multimodal contrastive learning (MMCL) on paired data has emerged as a dominant strategy for aligning modalities \citep{radford2021learning,jia2021scaling,wu2022scaling}. This approach is exemplified by vision-and-language models like CLIP, which attempt to identify a common representation space that maximizes the similarity of real image-text pairs while minimizing that of incorrect pairs \citep{radford2021learning}. 
One of the assumptions underpinning this approach is that the training pairs are aligned across modalities, meaning that they convey exactly the same semantic information  \citep{NEURIPS2023_c89f0984,liu2024revealing}. This assumption, though convenient, is often violated in real-world scenarios, where multimodal data is inherently noisy or imprecisely paired \citep{miech2019howto100m,nakada2023understanding}. More critically, text taken from image captions typically only partially describes the paired image content, and often contains descriptive elements that are irrelevant or misleading, a situation referred to as \emph{cross-modal misalignment}. For example, in a large-scale video-text dataset, over 50\% of the purportedly aligned clip-caption pairs were found to be misaligned \citep{miech2019howto100m}. It is worth noting that the impact of cross-modal misalignment is mainly concentrated in the text modality, as text captions, often derived from images, are more susceptible to semantic incompleteness and interpretive variability than the typically consistent visual semantics.

The misalignment\footnote{For the sake of brevity, we use``cross-modal misalignment'' and ``misalignment'' interchangeably throughout the paper, as the context clearly indicates that we are referring to misalignment between different modalities.} discussed above has led to two seemingly opposing viewpoints. On one hand, misalignment is viewed as a form of disruption that should be mitigated \citep{liu2024alignrec,sun2024aligning,wu2022scaling,chen2024weakly,tao2024alignmif,kim2024expediting,xuan2022multimodal,chartsias2020disentangle,lv2021progressive}. For example, cross-modal misalignment can result in ``hallucination'' in multimodal models \citep{sun2024aligning,leng2024mitigating,xie2024v}. It also provides weak, noisy, and even misleading supervision for multimodal pre-training \citep{wu2022scaling,zhang2024rethinking}. On the other hand, an alternative viewpoint suggests that multimodal representations may actually benefit from cross-modal misalignment \citep{wu2023prompt,nakada2023understanding, cai2025clap,Kim_2023_ICCV}. For instance, fine-tuning the representations learned by CLIP through random text augmentation, which deliberately introduces misalignment in style-related information, can lead to more robust representations for zero-shot learning, few-shot learning, and even adversarial attacks \citep{cai2025clap}. This contrast raises a crucial question: 

\begin{center}
\emph{How can we theoretically reconcile these two opposing views on misalignment,\\ and, more importantly, determine which should guide practical applications?}
\end{center}

In light of this, we offer a theoretical perspective that not only facilitates the understanding of misalignment but also provides guidance for real-world applications. Specifically, we formulate the problem using a latent variable model (LVM), which depicts the underlying generative process of image-text data with misalignment, as shown in \Cref{fig:image_text_pairing}. In it, the latent space consists of latent semantic variables representing factors common to both modalities (e.g., object shapes and colors), along with modality-specific subspaces that capture unique variations in images and text. To make the LVM more adaptable to diverse real-world scenarios, we allow for an arbitrary causal structure among the latent semantic variables, providing flexibility in multimodal contexts. To model misalignment, we introduce two mechanisms: selection bias and perturbation bias. Both act on latent semantic information but differ in their effects. Selection bias determines which semantic information is preserved in the text. For example, when describing an object, the text might preserve information about its color (``black'') but omit its texture details or shape. On the other hand, perturbation bias introduces errors, such as misannotating ``black'' (correct color) to ``red'' (incorrect color). Moreover, given their heterogeneity, we model the two modalities with separate generative processes.

Building on the proposed LVM (i.e., the \emph{generative model}), we present a theoretical identifiability analysis within the MMCL framework (i.e., the \textit{inference model}). We show, under mild assumptions, that the subset of semantic variables unaffected by selection and perturbation biases remain block-identifiable (\Cref{def:blockid})---that is, only the unaffected subset of semantic variables in the proposed LVM admits an invertible mapping to the representations learned by MMCL. In contrast, the remaining misaligned semantic variables affected by either bias are inherently excluded from the learned representations, irrespective of the latent causal structure among all semantic variables. This result provides a unified perspective on the seemingly opposing views discussed above. While misalignment can be problematic in tasks that rely on fully preserving semantic information to maximize downstream utility, it may paradoxically become beneficial in scenarios where robustness to distribution shifts is desired. In such cases, cross-modal misalignment naturally acts as a regularizer, encouraging models to focus on stable, invariant semantic factors shared across modalities.

\textbf{Contributions.} Our main contributions are presented below, with an in-depth discussion of related work available in \Cref{sec:app_realtedwork}.
\begin{itemize}[left=1.8em]
    \item We propose an LVM for multimodal data generation that explicitly characterizes cross-modal misalignment through two mechanisms: selection bias and perturbation bias (\Cref{sec:problemformulate}). 
    \item We establish a general identifiability result, showing that MMCL recovers the subset of semantic variables unaffected by these biases, independent of the underlying latent causal structure (\Cref{sec:identifiability}). 
    \item We extend this result to two practical scenarios, tasks requiring common representations and those targeting invariant representations, offering actionable insights into how misalignment should inform real-world applications (\Cref{sec:practical_insights}). 
    \item We empirically validate our theoretical findings through extensive experiments on both real-world and synthetic image-text datasets under various cross-modal misalignment conditions. Additionally, we present a case study on pretrained CLIP models (\Cref{sec:experiments}).
\end{itemize}

\section{Preliminaries: Multimodal Contrastive Learning}

Multimodal contrastive learning (MMCL)~\citep{radford2021learning, jia2021scaling, zhang2022contrastive} aims to learn joint representations by aligning paired samples from different modalities, i.e., \(\mathbf{t} \in \mathcal{T}\) for text and \(\mathbf{x} \in \mathcal{X}\) for images, while pushing apart unpaired (negative) samples. In practice, MMCL typically employs two modality-specific encoders, i.e., \(f_t(\mathbf{t})\) for text and \(f_x(\mathbf{x})\) for images which project observed paired data into a shared representation space. The learning objective is generally formulated as the following contrastive loss:
\begin{equation}
\label{eq:loss_exp}
    \mathcal{L}_{\text{\tiny MMCL}}(f_x, f_t) = -\frac{1}{2K}\left[
    \sum\limits_{i=1}^K \log \frac{e^{\kappa\left(f_x(\mathbf{x}_i), f_t(\mathbf{t}_i)\right)/\tau}}{\sum\nolimits_{j=1}^K e^{\kappa\left( f_x(\mathbf{x}_i), f_t(\mathbf{t}_j)\right)/\tau}}
    +
    \sum\limits_{i=1}^K \log \frac{e^{\kappa\left( f_x(\mathbf{x}_i), f_t(\mathbf{t}_i)\right)/\tau}}{\sum\nolimits_{j=1}^K e^{\kappa\left( f_x(\mathbf{x}_j), f_t(\mathbf{t}_i)\right)/\tau}}
    \right],
\end{equation}
where \(\{\mathbf{x}_i, \mathbf{t}_i\}_{i=1}^K\) are sampled paired data, \(K\) denotes the number of training pairs, \(\tau\) is a temperature hyperparameter controlling the concentration of the similarity distribution, and \(\kappa(\cdot, \cdot)\) denotes a similarity measure. Asymptotically, when \(K\) approaches infinity, and with \(\tau = 1\) and the similarity function defined as the negative squared Euclidean distance, the objective in Eq.~\eqref{eq:loss_exp} reduces to \citep{daunhawer2022identifiability}:
\begin{equation}
\label{eq:loss_thm}
    \mathcal{L}_{\text{\tiny SymAlignMaxEnt}}(f_x, f_t) = \E\nolimits_{(\mathbf{x}, \mathbf{t}) \sim p_{\mathbf{x}, \mathbf{t}}} \Bigl[\|f_x(\mathbf{x}) - f_t(\mathbf{t})\|_2 \Bigr] - \tfrac{1}{2} \Bigl( H\bigl(f_x(\mathbf{x})\bigr) + H\bigl(f_t(\mathbf{t})\bigr) \Bigr),
\end{equation}
where \(H(\cdot)\) denotes differential entropy \citep{wang2020understanding, von2021self}. One of the main advantages of the asymptotic objective in \Cref{eq:loss_thm} is its suitability for theoretical analysis \citep{lyu2022understanding, daunhawer2022identifiability, yao2023multi}. At a high level, \Cref{eq:loss_thm} naturally decomposes into two intuitive terms \citep{wang2020understanding}: the first term encourages minimizing the distance between paired samples, while the second term promotes maximizing the entropy of learned representations. Following prior works \citep{lyu2022understanding, daunhawer2022identifiability, yao2023multi}, we also adopt this objective in our theoretical analysis.

A key feature of our work is its focus on the impact of \emph{cross-modal misalignment}, in contrast to prior studies that assume perfectly aligned paired data. To model this, we introduce a novel latent variable model (\Cref{sec:problemformulate}) that defines a fundamentally different problem setting. Consequently, our theoretical analysis (\Cref{sec:theory}) yields insights that diverge significantly from existing results, highlighting how misalignment influences the learned representations in multimodal contrastive learning.

\section{Problem Formulation via a Generative Perspective}
\label{sec:problemformulate}

In this section, we introduce a novel latent variable model (LVM) to formalize the underlying generative processes of image-text data, explicitly characterizing cross-modal misalignment (\Cref{sec:lvm}). 
Building on this LVM, we introduce the technical assumptions underlying image-text pairs under misalignment for subsequent analysis (\Cref{sec:assumptions}). See \Cref{sec:notation} for a complete notation table.

\subsection{A Latent Variable Model Characterizing Cross-Modal Misalignment}
\label{sec:lvm}

\Cref{fig:image_text_pairing} illustrates the proposed LVM. In the following, we provide a detailed explanation of the model from three aspects: the latent space, and the image and text generation processes.

\paragraph{Latent space.} We partition the entire latent space \(\mathcal{Z}\) into three simply connected, open subspaces, i.e., \( \mathcal{Z} = \mathcal{S} \times \mathcal{M}_x \times \mathcal{M}_t\), where each defines the support of a distinct group of latent variables. We denote the latent variables in \(\mathcal{Z}\) as \(\mathbf{z} = (\mathbf{s}, \mathbf{m}_x, \mathbf{m}_t)\), where \(\mathbf{s}\), \(\mathbf{m}_x\), and \(\mathbf{m}_t\) lie in \(\mathcal{S}\), \(\mathcal{M}_x\), and \(\mathcal{M}_t\), respectively. Below, we describe the characteristics of the latent variables \(\mathbf{s}\), \(\mathbf{m}_x\), and \(\mathbf{m}_t\):

\begin{itemize}[left=1.8em]
    \item \(\mathbf{s} \in \mathcal{S} \subseteq \mathbb{R}^{n_s}\) (\emph{Semantic variables}): Latent variables capturing the semantic content of the data, i.e., information that is interpretable or describable through human knowledge (e.g., object shape, color). We denote the index set of semantic variables as \(\mathbb{I}_\mathbf{s} := [n_s]\)\footnote{Throughout this paper, we use the notation \([d]\) to represent the set \(\{1, \dots, d\}\) for any integer \(d > 1\).} for future reference.
    \item \(\mathbf{m}_x \in \mathcal{M}_x\)\footnote{For simplicity of notation, we omit the dimensions of certain variables throughout this work, such as \(\mathbf{m}_x\).} (\emph{Image-specific variables}): Latent variables capturing non-semantic, image-specific factors that are independent of semantic variables \(\mathbf{s}\) (e.g., camera noise or background artifacts). 
    \item \(\mathbf{m}_t \in \mathcal{M}_t\) (\emph{Text-specific variables}): Latent variables capturing non-semantic, text-specific factors, independent of both semantic variables \(\mathbf{s}\) and image-specific variables \(\mathbf{m}_x\) (e.g., writing style or tone).
\end{itemize}

A key advantage of the proposed latent space structure is its flexibility and applicability to real-world scenarios. To this end, we depart from prior works that impose somewhat restrictive assumptions on the latent structure. Specifically, unlike approaches that enforce certain fixed graphical structures among semantic variables (e.g., assuming content causally determines style)~\citep{daunhawer2022identifiability}, or methods based on nonlinear ICA that assume complete (or conditional on an auxiliary variable) independence among latent variables~\citep{khemakhem2020variational, sorrenson2020disentanglement}, we allow for arbitrary dependency structures among semantic variables \(\mathbf{s}\), since the true latent graph structure among semantic variables is often unknown in practice.

\begin{figure}[t]
    \centering
    \includegraphics[width=0.5\textwidth]{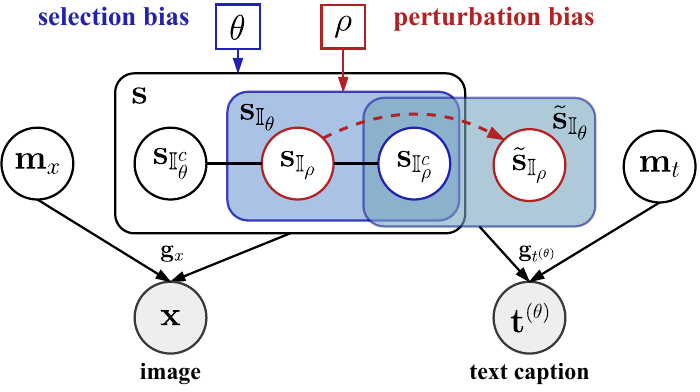}
    \hfill 
    \includegraphics[width=0.4\textwidth]{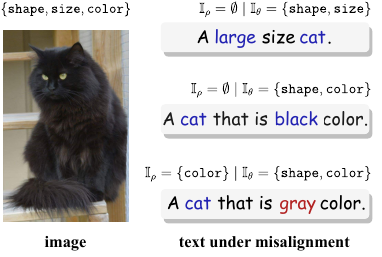}
    \caption{Illustration of the proposed latent variable model (left), with misalignment across modalities modeled via selection and perturbation bias.
    Image $\mathbf{x}$ is generated from semantic variables $\mathbf{s}$ and image-specific variables $\mathbf{m}_x$ via the generative process $g_x$. The corresponding text $\mathbf{t}\sx{(\theta)}$ is generated by \(g_{t\sx{(\theta)}}\) which acts on a biased subset of semantic variables \(\tilde{\mathbf{s}}_{\mathbb{I}_\theta}=(\mathbf{s}_{\mathbb{I}^c_\rho}, \tilde{\mathbf{s}}_{\mathbb{I}_\rho})\),  influenced by {selection bias} $\theta$ and {perturbation bias} $\rho$, along with text-specific variables $\mathbf{m}_t$. Selection bias omits $\mathbf{s}_{\mathbb{I}^c_\theta}$, while perturbation bias marks a subset $\mathbb{I}_\rho \subseteq \mathbb{I}_\theta$ whose components may be randomly replaced to form $\tilde{\mathbf{s}}_{\mathbb{I}_\rho}$. Example image-text pairs (right) illustrate the misalignment induced by these two biases.
    }
    \label{fig:image_text_pairing}
    \vspace{-1em}
\end{figure}

\paragraph{Image generation.}
An image \(\mathbf{x} \in \mathcal{X}\) is generated from latent variables \(\mathbf{z}_x = (\mathbf{s}, \mathbf{m}_x)\) via a diffeomorphism (i.e., a bijection with smooth inverse function) \(g_x: \mathcal{S}\times\mathcal{M}_x \to \mathcal{X}\):
\begin{equation}
\label{eq:img_gen}
\mathbf{s} \sim p_\mathbf{s},\quad \mathbf{m}_x \sim p_{\mathbf{m}_x}, \quad  \mathbf{z}_x = (\mathbf{s}, \mathbf{m}_x),
\quad 
\mathbf{x} = g_x(\mathbf{z}_x).
\end{equation}
Here, \(p_\mathbf{s}\) and \(p_{\mathbf{m}_x}\) are prior distributions over subspaces \(\mathcal{S}\) and \(\mathcal{M}_x\), respectively; \(\mathcal{X}\) defines an observation space that contains all generated images. This generative process formalizes that images fully encapsulate semantics, reflecting the fact that they are both informative and semantically rich.

\paragraph{Text generation.}
Unlike other multimodal data (e.g., camera-LiDAR \citep{NEURIPS2022_43d2b7fb}) acquired via sensors, given an image, its corresponding text inherently exhibits flexibility in semantic richness and may also include distortions introduced by humans or captioning models \citep{li2022blip, xu2024altogether}, leading to cross-modal misalignment. Here, we formalize the root of such misalignment by introducing two types of biases: {selection bias} and {perturbation bias}. Accordingly, before formulating the text generation process, we first provide definitions of these two types of biases.

\begin{restatable}[Selection bias $\theta$]{defn}{selection}
\label{defn:selection}
Let $\mathcal{P}^+(\mathbb{I}_\mathbf{s})$ denote the set of all non-empty subsets\footnote{Without loss of generality, we fix a graded lexicographic order (see \cref{defn:order}) over \(\mathcal{P}^+(\mathbb{I}_\mathbf{s})\). See \cref{clar:fixedorder} for a clarification for this choice of order.} of the index set $\mathbb{I}_\mathbf{s}$, defined as \(\mathcal{P}^+(\mathbb{I}_\mathbf{s}) := \mathcal{P}(\mathbb{I}_\mathbf{s}) \setminus \{\emptyset\}\), where \(\mathcal{P}(\cdot)\) denotes the power set. The \emph{selection bias} \(\theta\) is defined as an integer index in the range \( [2^{n_s} - 1]
\), corresponding to a specific non-empty semantic subset \(\mathbb{I}_\theta \in \mathcal{P}^+(\mathbb{I}_\mathbf{s})\). The complement \(\mathbb{I}_\theta^c = \mathbb{I}_\mathbf{s} \setminus \mathbb{I}_\theta\) denotes the omitted semantic subset.
\end{restatable}

Note that each \(\theta\) uniquely determines a non-empty semantic subset \(\mathbb{I}_\theta \in \mathcal{P}^+(\mathbb{I}_\mathbf{s})\), which defines the semantic information to be expressed in the generated text. It also specifies a text generation mapping \(g_{t\sx{(\theta)}} : \mathcal{S}_{\mathbb{I}_\theta} \times \mathcal{M}_t \to \mathcal{T}\sx{(\theta)}\), selected from a class of diffeomorphisms \(\mathcal{G}_{t}\). Here, \(\mathcal{T}\sx{(\theta)}\) denotes an observation space that contains all the observed text \(\mathbf{t}\sx{(\theta)}\) under selection bias \(\theta\). 

\begin{restatable}[Perturbation bias \(\rho\)]{defn}{perturbation}
\label{defn:perturbation}
Let \(\mathcal{P}_{\text{prop}}(\mathbb{I}_\theta) := \mathcal{P}(\mathbb{I}_\theta) \setminus \{\mathbb{I}_\theta\}\) denote the set of all proper subsets\footnote{Again, we fix a graded lexicographic order over the proper subsets in \(\mathcal{P}_{\text{prop}}(\mathbb{I}_\theta)\) throughout the paper.} of the selected index subset \(\mathbb{I}_\theta\). The \emph{perturbation bias} \(\rho\) is defined as an integer index in the range
\(
\rho \in [2^{|\mathbb{I}_\theta|} - 1],
\)
corresponding to a unique subset \(\mathbb{I}_\rho \in \mathcal{P}_{\text{prop}}(\mathbb{I}_\theta)\) subject to perturbation. The complement \(\mathbb{I}_\rho^c := \mathbb{I}_\theta \setminus \mathbb{I}_\rho\) denotes the semantic subset that remains unperturbed across modalities.
\end{restatable}

\begin{block}
\begin{restatable}{exm}{bias_exm}
\label{exm:bias_exm}
Let the full semantic index set be \(\mathbb{I}_\mathbf{s} = \)\{\texttt{shape}, \texttt{size}, \texttt{color}\}, so that the set of non-empty semantic subsets is \(
\mathcal{P}^+(\mathbb{I}_\mathbf{s}) = \{
\{\texttt{shape}\}, 
\{\texttt{size}\}, 
\dots,
\{\texttt{shape}, \texttt{size}, \texttt{color}\}
\}
\). 
Then, a selection bias \(\theta = 5\) corresponds to the fifth subset, i.e., \(\mathbb{I}_\theta = \{\texttt{shape}, \texttt{color}\}\), with the omitted semantics given by \(\mathbb{I}_\theta^c = \{\texttt{size}\}\). The corresponding text-generation mapping \(g_{t\sx{(\theta)}}\) uses only the selected semantic variables in \(\mathbb{I}_\theta\) to generate text \(\mathbf{t}\sx{(\theta)}\).
The set of proper subsets of \(\mathbb{I}_\theta\) is
\(
\mathcal{P}_{\text{prop}}(\mathbb{I}_\theta) = \{
\emptyset, 
\{\texttt{shape}\}, 
\{\texttt{color}\}
\}
\).
Then, a perturbation bias \(\rho = 3\) corresponds to the subset \(\mathbb{I}_\rho = \{\texttt{color}\}\), and its complement within \(\mathbb{I}_\theta\) is \(\mathbb{I}_\rho^c = \{\texttt{shape}\}\). Under the combined biases \(\theta = 5\) and \(\rho = 3\), only \texttt{shape} is unbiasedly preserved, while \texttt{color} is subject to perturbation. A resulting text might be \emph{"A cat that is red color"}, even though the image shows a large-sized black cat.
\end{restatable}
\end{block}

Building upon the previously defined selection bias \(\theta\) and perturbation bias \(\rho\), we now formalize the text generation process, explicitly characterizing the misalignment induced by these two biases. Consider an image \(\mathbf{x}\) generated by \Cref{eq:img_gen}, with associated semantic variables \(\mathbf{s} = (\mathbf{s}_{\mathbb{I}_\theta}, \mathbf{s}_{\mathbb{I}_\theta^c})\), where the index set \(\mathbb{I}_\theta\) is determined by \(\theta\). For the corresponding text \(\mathbf{t}\sx{(\theta)}\), we define the latent variables as \(\mathbf{z}_{t\sx{(\theta)}} = (\tilde{\mathbf{s}}_{\mathbb{I}_\theta}, \mathbf{m}_t)\), where \(\tilde{\mathbf{s}}_{\mathbb{I}_\theta}\) represents the perturbed semantic variables under perturbation bias \(\rho\), and \(\mathbf{m}_t\) denotes the text-specific latent variables. The text generation process is then formalized as:
\begin{gather}
\tilde{\mathbf{s}}_{\mathbb{I}_\theta} \sim p_{\tilde{\mathbf{s}}_{\mathbb{I}_\theta} \mid \mathbf{s}, \theta, \rho},\quad
\mathbf{m}_t \sim p_{\mathbf{m}_t},\quad
\mathbf{z}_{t\sx{(\theta)}} = (\tilde{\mathbf{s}}_{\mathbb{I}_\theta}, \mathbf{m}_t),\quad  
\mathbf{t}\sx{(\theta)}  
= g_{t\sx{(\theta)}}(\mathbf{z}_{t\sx{(\theta)}}), \label{eq:caption_gen}
\end{gather}
where \(p_{\mathbf{m}_t}\) denotes the prior distribution over the latent subspace \(\mathcal{M}_t\), and \(g_{t\sx{(\theta)}}\) is the diffeomorphic mapping specified by the selection bias \(\theta\). The conditional \(p_{\tilde{\mathbf{s}}_{\mathbb{I}_\theta} \mid \mathbf{s}, \theta, \rho}\) explicitly characterizes misalignment in text generation arises through perturbations within selected semantic dimensions.

\subsection{Model Assumptions for Theoretical Analysis}
\label{sec:assumptions}

Based on the proposed LVM, we now present the assumptions underpinning our theoretical analysis:

\begin{restatable}[Continuous positive densities]{asm}{continuous} 
\label{asm:continuous}
The latent variables \(\mathbf{s}\), \(\mathbf{m}_x\), and \(\mathbf{m}_t\) are continuous and admit strictly positive densities, i.e., \(p_{\mathbf{s}} > 0\), \(p_{\mathbf{m}_x} > 0\), and \(p_{\mathbf{m}_t} > 0\), almost everywhere (a.e.) on their respective supports \(\mathcal{S}\), \(\mathcal{M}_x\), and \(\mathcal{M}_t\).
\end{restatable}

\begin{restatable}[Random perturbations]{asm}{smodify}
\label{asm:pairing}
Given a selection bias \(\theta\) and a perturbation bias \(\rho\), consider a specific image-text pair \((\mathbf{x}, \mathbf{t}\sx{(\theta)})\) generated by \Cref{eq:img_gen} and \Cref{eq:caption_gen}, respectively. The conditional distribution \(p_{\tilde{\mathbf{s}}_{\mathbb{I}_\theta} \mid \mathbf{s}, \theta, \rho}\) is defined via a randomly sampled perturbation subset \(A \subseteq \mathbb{I}_\rho\), such that:
\begin{equation}
    A \sim p_A,\quad 
    p_{\tilde{\mathbf{s}}_{\mathbb{I}_\theta} \mid \mathbf{s}, \theta, \rho}(\tilde{\mathbf{s}}_{\mathbb{I}_\theta} \mid \mathbf{s}, A)  
    = \delta\bigl(\tilde{\mathbf{s}}_{\mathbb{I}_\theta \setminus A} - \mathbf{s}_{\mathbb{I}_\theta \setminus A}\bigr)  
    \cdot p_{\tilde{\mathbf{s}}_A \mid \mathbf{s}_A}(\tilde{\mathbf{s}}_A \mid \mathbf{s}_A). \label{eq:perturbation}
\end{equation}
Here: \emph{\textbf{(i)}} \(A\) is the random subset of semantic indices to be perturbed, with \(p_A\) defined over \(\mathcal{P}(\mathbb{I}_\rho)\). For every \(l \in \mathbb{I}_\rho\), there exists at least one subset \(A \subseteq \mathbb{I}_\rho\) such that \(l \in A\) and \(p_A(A) > 0\); \emph{\textbf{(ii)}} \(\delta(\cdot)\) denotes the Dirac delta function, enforcing that variables outside \(A\) remain unchanged; \emph{\textbf{(iii)}} \(p_{\tilde{\mathbf{s}}_A \mid \mathbf{s}_A}\) is a smooth, strictly positive conditional density over \(\mathcal{S}_A \times \mathcal{S}_A\), where \(\mathcal{S}_A\) is the domain of \(\mathbf{s}_A\), and for each \(\mathbf{s}_A\), the support of \(p_{\tilde{\mathbf{s}}_A \mid \mathbf{s}_A}\) includes a non-empty open subset \(\mathcal{O}_A \subseteq \mathcal{S}_A\).
\end{restatable}

\begin{wrapfigure}{r}{0.40\textwidth}
\centering
    \vspace{-1.2em}
    \includegraphics[width=0.39\textwidth]{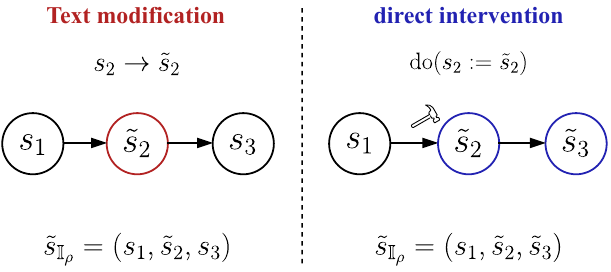}
    \caption{Text modifications affect only the semantics where content words are altered, while direct interventions act on latent semantic variables thereby propagating structural changes.}
    \label{fig:textual_semantic_perturbation}
    \vspace{-1.2em}
\end{wrapfigure}
\paragraph{Remark 3.1.} Note that \Cref{eq:perturbation} essentially implies that, in each \((\mathbf{x}, \mathbf{t}\sx{(\theta)})\), only a random subset \(A\) of \(\mathbb{I}_\rho\) undergoes perturbations, regardless of the underlying causal structure among latent semantic variables. The rationale is that latent semantic variables can only be modified indirectly by altering observations, rather than through direct intervention, unless the latent causal structure is fully identified. In the text modality, it occurs through the misassignment of certain content words to specific image semantics during the captioning process. Unlike a direct intervention, this cross-modal misalignment does not propagate to descendant semantic variables, as illustrated in \Cref{fig:textual_semantic_perturbation}.
\section{Identifiability Results for MMCL under Misalignment}
\label{sec:theory}
In this section, we theoretically analyze how cross-modal misalignment impacts the identifiability of latent semantic variables in the proposed LVM, within the MMCL framework (\Cref{sec:identifiability}). Based on these results, we further provide practical insights into how misalignment should be addressed in real-world applications (\Cref{sec:practical_insights}). Detailed proofs of the theoretical results are provided in \Cref{sec:proofs}. To begin, we restate the definition of \emph{block-identifiability} \citep{von2021self} in the context of our problem setting:

\begin{restatable}[Block-Identifiability]{defn}{blockid} 
\label{def:blockid} 
A subset of latent semantic variables \(\mathbf{s}_{\mathbb{I}^*} \in \mathcal{S}_{\mathbb{I}^*}\), with \(\mathbb{I}^* \subseteq \mathbb{I}_\mathbf{s}\), is said to be \emph{block-identified} by functions \(f_x: \mathcal{X} \to \mathbb{R}^{|\mathbb{I}^*|}\) and \(f_t: \mathcal{T}\sx{(\theta)} \to \mathbb{R}^{|\mathbb{I}^*|}\) if the learned representations \(\hat{\mathbf{z}}_x \in \mathbb{R}^{|\mathbb{I}^*|}\) and \(\hat{\mathbf{z}}_t \in \mathbb{R}^{|\mathbb{I}^*|}\) retain \emph{all} and \emph{only} the information contained in \(\mathbf{s}_{\mathbb{I}^*}\). Formally, there exist invertible mappings \( h_x,  h_t: \mathcal{S}_{\mathbb{I}^*} \to \mathbb{R}^{|\mathbb{I}^*|}\) s.t. \(\hat{\mathbf{z}}_x =  h_x(\mathbf{s}_{\mathbb{I}^*})\) and \(\hat{\mathbf{z}}_t =  h_t(\mathbf{s}_{\mathbb{I}^*})\).
\end{restatable}

\subsection{Identifiability of Latent Semantic Variables under Cross-Modal Misalignment}
\label{sec:identifiability}
Building on the definition above, the formalization of the proposed LVM, and assumptions outlined in \Cref{sec:problemformulate}, we provide the following identifiability result:

\begin{restatable}[Identifiability of latent semantic variables]{thm}{idsemantics}
\label{thm:idsemantics}
Let \((\mathbf{x}, \mathbf{t}\sx{(\theta)})\) be image-text pairs drawn from the data-generating process described in \Cref{sec:problemformulate}, where \(\mathbf{x}\) is generated according to \Cref{eq:img_gen} and \(\mathbf{t}\sx{(\theta)}\) is generated by \Cref{eq:caption_gen}. Further, suppose that \Cref{asm:continuous,asm:pairing} hold. Denote by \(\mathbf{s}_{\mathbb{I}^c_\rho}\) the subset of semantic variables that annotated without bias in the text, and define its dimension as \(n=|\mathbb{I}_{\theta}|- |\mathbb{I}_\rho|\). Let \(f_x: \mathcal{X} \to (0,1)^{n}\) and \(f_t: \mathcal{T}\sx{(\theta)} \to (0,1)^{n}\) be sufficiently flexible, smooth functions. Then, minimizing the loss
\(
\mathcal{L}_{\text{\tiny SymAlignMaxEnt}}
\)
 in \Cref{eq:loss_thm} over samples \((\mathbf{x}, \mathbf{t}\sx{(\theta)})\) guarantees that \(f_x\) and \(f_t\) block-identify the semantic variables \(\mathbf{s}_{\mathbb{I}^c_{\rho}}\) in the sense of \Cref{def:blockid}.
\end{restatable}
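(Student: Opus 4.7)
The plan is to exploit the standard decomposition of $\mathcal{L}_{\text{\tiny SymAlignMaxEnt}}$ in \Cref{eq:loss_thm} into an alignment term $\E[\|\mathbf{f}_x(\mathbf{x}) - \mathbf{f}_t(\mathbf{t}\sx{(\theta)})\|_2]$ and two entropy terms, and to argue separately what each forces on the encoders at the optimum. A now-standard argument in the spirit of \citep{wang2020understanding, daunhawer2022identifiability} shows that, with sufficiently flexible smooth encoders, any joint minimizer satisfies both $\mathbf{f}_x(\mathbf{x}) = \mathbf{f}_t(\mathbf{t}\sx{(\theta)})$ almost surely under $p_{\mathbf{x}, \mathbf{t}\sx{(\theta)}}$ (from the alignment term) and pushes each marginal to the uniform distribution on $(0,1)^n$ (from the entropy term). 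Composing with the diffeomorphisms, I define $\tilde{\mathbf{f}}_x := \mathbf{f}_x \circ \mathbf{g}_x$ and $\tilde{\mathbf{f}}_t := \mathbf{f}_t \circ \mathbf{g}_{t\sx{(\theta)}}$, so that $\tilde{\mathbf{f}}_x(\mathbf{s}, \mathbf{m}_x) = \tilde{\mathbf{f}}_t(\tilde{\mathbf{s}}_{\mathbb{I}_\theta}, \mathbf{m}_t)$ on the joint latent support.

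Next I would peel off the irrelevant dependencies of $\tilde{\mathbf{f}}_x$ and $\tilde{\mathbf{f}}_t$ one at a time. Because $\mathbf{m}_x$ and $\mathbf{m}_t$ are independent of the other latents and have strictly positive densities (\Cref{asm:continuous}), the joint latent support factorizes; since the right-hand side is independent of $\mathbf{m}_x$, smoothness of $\tilde{\mathbf{f}}_x$ together with positivity of $p_{\mathbf{m}_x}$ forces it to be constant in $\mathbf{m}_x$, and symmetrically $\tilde{\mathbf{f}}_t$ is constant in $\mathbf{m}_t$. The more delicate step is removing dependence on the perturbed coordinates $\tilde{\mathbf{s}}_{\mathbb{I}_\rho}$ from $\tilde{\mathbf{f}}_t$. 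Here I would invoke \Cref{asm:pairing}: for each $\mathbf{s}$ and each $A$ with $p_A(A) > 0$, the perturbed coordinates $\tilde{\mathbf{s}}_A$ range over a non-empty open subset $\mathcal{O}_A \subseteq \mathcal{S}_A$ while $\tilde{\mathbf{s}}_{\mathbb{I}_\theta \setminus A}$ remains pinned to $\mathbf{s}_{\mathbb{I}_\theta \setminus A}$. The alignment identity then gives $\tilde{\mathbf{f}}_t(\tilde{\mathbf{s}}_{\mathbb{I}_\theta}, \mathbf{m}_t) = \tilde{\mathbf{f}}_x(\mathbf{s})$ (a quantity constant in $\tilde{\mathbf{s}}_A$) on an open set in $\tilde{\mathbf{s}}_A$, so smoothness makes $\tilde{\mathbf{f}}_t$ locally independent of every coordinate in $A$. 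Taking the union over all $A$ with positive probability and using the covering condition in \Cref{asm:pairing}(i) (every $l \in \mathbb{I}_\rho$ belongs to some such $A$), together with connectedness of $\mathcal{S}_{\mathbb{I}_\rho}$, yields that $\tilde{\mathbf{f}}_t$ depends on $\tilde{\mathbf{s}}_{\mathbb{I}_\theta}$ only through $\mathbf{s}_{\mathbb{I}_\rho^c}$. The alignment identity then propagates back to show $\tilde{\mathbf{f}}_x(\mathbf{s}, \mathbf{m}_x)$ depends on $\mathbf{s}$ only through $\mathbf{s}_{\mathbb{I}_\rho^c}$, with strict positivity of $p_{\mathbf{s}}$ ensuring that the values of $\mathbf{s}_{\mathbb{I}_\theta^c}$ and $\mathbf{s}_{\mathbb{I}_\rho}$ explore open sets as needed.

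With both encoders reduced to smooth maps $\mathbf{h}_x, \mathbf{h}_t : \mathcal{S}_{\mathbb{I}_\rho^c} \to (0,1)^n$ of matching source and target dimension $n = |\mathbb{I}_\rho^c|$, the remaining step is to upgrade smoothness to invertibility. This is where the entropy term enters: differential entropy on the bounded domain $(0,1)^n$ is maximized uniquely by the uniform law, so the maximum-entropy conclusion already in hand implies that $\mathbf{h}_x$ and $\mathbf{h}_t$ push the marginal of $\mathbf{s}_{\mathbb{I}_\rho^c}$ to the uniform distribution on $(0,1)^n$. A standard change-of-variables argument, of the same flavour as in \citep{von2021self, daunhawer2022identifiability}, then forces $\mathbf{h}_x$ and $\mathbf{h}_t$ to be diffeomorphisms, which is exactly the invertibility required by \Cref{def:blockid}.

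The main obstacle I expect is the perturbation-invariance step. \Cref{asm:pairing} only asserts positive density on an open subset $\mathcal{O}_A$ of $\mathcal{S}_A$, not on all of $\mathcal{S}_A$, and the distribution $p_A$ may support only a handful of subsets; so reconstructing full invariance in each coordinate $l \in \mathbb{I}_\rho$ requires carefully patching the local constancies obtained from different $A$'s and promoting them to global constancy via connectedness of the latent domains. This is also precisely the step where Remark 3.1 is essential: if a perturbation on $\tilde{\mathbf{s}}_A$ propagated causally to descendants in $\mathbb{I}_\rho^c$, then the ``fixed'' coordinates $\mathbf{s}_{\mathbb{I}_\rho^c}$ would actually move alongside $\tilde{\mathbf{s}}_A$, and the clean separation underpinning block-identifiability would break down.
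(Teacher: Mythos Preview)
Your proposal is correct and mirrors the paper's proof almost exactly: the paper also (i) shows the loss attains zero with both alignment and uniformity holding at the optimum (via a Darmois-construction lemma), (ii) peels off $\mathbf{m}_x,\mathbf{m}_t$ by independence, (iii) excludes $\mathbf{s}_{\mathbb{I}_\theta^c}$ and $\mathbf{s}_{\mathbb{I}_\rho}$ via contradiction arguments using smoothness and strict positivity, and (iv) concludes invertibility from a ``uniformizing map is a diffeomorphism'' lemma. The only cosmetic difference is ordering---the paper first removes $\mathbf{s}_{\mathbb{I}_\theta^c}$ from the image side and then $\tilde{\mathbf{s}}_{\mathbb{I}_\rho}$ from the text side, whereas you start from the text side---but the substance is the same.
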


\paragraph{Remark 4.1.} \Cref{thm:idsemantics} formally establishes that, in the presence of cross-modal misalignment, the unbiased semantic variables \(\mathbf{s}_{\mathbb{I}^c_\rho}\) that are shared across modalities can be effectively recovered, up to a block-wise indeterminacy, by minimizing the MMCL objective. In contrast, components that are misaligned, specifically \(\mathbf{s}_{\mathbb{I}_\rho}\) and \(\mathbf{s}_{\mathbb{I}^c_\theta}\), are entirely excluded from the learned representations. We emphasize that this result holds regardless of any underlying latent graph structure among semantic variables. At its core, this result highlights the model's capacity to focus exclusively on the aligned semantic aspects of the data. Furthermore, modality-specific, non-semantic factors, i.e., \(\mathbf{m}_x\) and \(\mathbf{m}_t\), are consistently discarded throughout the learning process. This further underscores the model’s ability to extract meaningful, cross-modal semantic information while filtering out semantically uninformative or noise-induced components. A detailed proof of \Cref{thm:idsemantics} is provided in \cref{sec:proofthm}.

\subsection{Insights into Cross-Modal Misalignment for Practice}
\label{sec:practical_insights}
The above result is general and not limited to any specific problem context. We now consider two real-world application scenarios: \textbf{(i)} pretraining with large-scale data and \textbf{(ii)} invariant representation learning. The former aims to capture comprehensive semantic information to support a wide range of downstream tasks, while the latter focuses on learning robust representations for out-of-distribution (OOD) generalization. In what follows, we present corollaries and insights for both scenarios.

\begin{restatable}[Identifiability of full latent semantic variables]{cor}{allsemantics}
\label{cor:allsemantics}
Assume that \Cref{asm:continuous,asm:pairing} hold. Let the selection bias be \(\theta=2^{n_s}-1\) and the perturbation bias be \(\rho=1\), such that the full set of semantic variables \(\mathbb{I}_\mathbf{s}\) is selected, and the perturbable semantic subset is trivial, i.e., \(\mathbb{I}_\rho = \emptyset\). Then, all semantic variables \(\mathbf{s}\) are block-identified via smooth functions \(f_x: \mathcal{X} \to (0,1)^{n_s}\) and \(f_t: \mathcal{T}\sx{(\theta)} \to (0,1)^{n_s}\), when minimizing \(\mathcal{L}_{\text{\tiny SymAlignMaxEnt}}\).
\end{restatable}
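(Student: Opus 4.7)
The plan is to derive \Cref{cor:allsemantics} as a direct specialization of \Cref{thm:idsemantics}, with most of the work being careful index-set bookkeeping rather than new analysis. First, I would unpack the encoding of the biases: under the graded lexicographic order on $\mathcal{P}_+(\mathbb{I}_\mathbf{s})$, the maximal index $\theta = 2^{n_s}-1$ picks out the full subset $\mathbb{I}_\theta = \mathbb{I}_\mathbf{s}$, so $\mathbb{I}_\theta^c = \emptyset$. Under the graded lexicographic order on $\mathcal{P}_{\text{proper}}(\mathbb{I}_\theta)$, the minimal index $\rho = 1$ picks out the empty set $\mathbb{I}_\rho = \emptyset$, giving $\mathbb{I}_\rho^c = \mathbb{I}_\theta = \mathbb{I}_\mathbf{s}$ and $n = |\mathbb{I}_\theta| - |\mathbb{I}_\rho| = n_s$. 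Consequently $\mathbf{s}_{\mathbb{I}_\rho^c} = \mathbf{s}$ and the codomains $(0,1)^{n}$ and $(0,1)^{n_s}$ coincide.

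Next, I would verify that \Cref{asm:pairing} is compatible with the degenerate choice $\mathbb{I}_\rho = \emptyset$. The only admissible perturbation subset is $A = \emptyset$, so $p_A(\emptyset) = 1$, and the universal quantifier ``for every $l \in \mathbb{I}_\rho$\dots'' is vacuously satisfied. The perturbation density then reduces to the deterministic identity $p_{\tilde{\mathbf{s}}_{\mathbb{I}_\theta}\mid\mathbf{s},\theta,\rho}(\tilde{\mathbf{s}}_{\mathbb{I}_\theta}\mid\mathbf{s}) = \delta(\tilde{\mathbf{s}}_{\mathbb{I}_\theta} - \mathbf{s}_{\mathbb{I}_\theta})$, so the text is generated from the full $(\mathbf{s}, \mathbf{m}_t)$ via the diffeomorphism $\mathbf{g}_{t\sx{(\theta)}}$ without any cross-modal noise. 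Together with \Cref{asm:continuous}, all preconditions of \Cref{thm:idsemantics} hold in this instance.

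Finally, I would invoke \Cref{thm:idsemantics} itself. Its conclusion guarantees that minimizers of $\mathcal{L}_{\text{\tiny SymAlignMaxEnt}}$ block-identify $\mathbf{s}_{\mathbb{I}_\rho^c}$ in the sense of \Cref{def:blockid}; substituting $\mathbf{s}_{\mathbb{I}_\rho^c} = \mathbf{s}$ yields invertible smooth maps $\mathbf{h}_x, \mathbf{h}_t : \mathcal{S} \to (0,1)^{n_s}$ such that $\mathbf{f}_x(\mathbf{x}) = \mathbf{h}_x(\mathbf{s})$ and $\mathbf{f}_t(\mathbf{t}\sx{(\theta)}) = \mathbf{h}_t(\mathbf{s})$ almost everywhere, which is exactly the claim.

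The only real obstacle is a sanity check rather than a fresh argument: one must confirm that the proof of \Cref{thm:idsemantics} remains meaningful in the limit $\mathbb{I}_\rho = \emptyset$. In particular, whatever step of that proof separates perturbed from unperturbed coordinates by pairing $\mathbf{s}$ with a $\tilde{\mathbf{s}}$ that disagrees on $\mathbb{I}_\rho$ is used only to \emph{exclude} the perturbed block from the learned representation; when $\mathbb{I}_\rho = \emptyset$ there is nothing to exclude, and the aligned block already spans all of $\mathbf{s}$. Once this degeneracy is noted, the corollary is immediate.
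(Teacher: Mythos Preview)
Your proposal is correct and follows essentially the same approach as the paper: both unpack the indices to obtain $\mathbb{I}_\theta = \mathbb{I}_\mathbf{s}$ and $\mathbb{I}_\rho = \emptyset$, verify that the perturbation kernel degenerates to the Dirac identity so that $\tilde{\mathbf{s}} = \mathbf{s}$ a.s., and then specialize \Cref{thm:idsemantics}. The only cosmetic difference is that the paper re-traces Steps~1--2 of the theorem's proof and applies \Cref{lem:uniformizing} directly (since Steps~3--4 are vacuous), whereas you invoke the theorem as a black box; both routes are valid and equivalent here.
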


\textbf{Insight 4.1} (MMCL pretraining on large-scale data)\textbf{.} Large-scale multimodal datasets (e.g., COCO \citep{lin2014microsoft}, Conceptual Captions \citep{sharma-etal-2018-conceptual}, LAION-5B \citep{schuhmann2022laion}) often exhibit varying caption quality. Our analysis indicates that omitted or perturbed semantics are irretrievably lost in the learned representations, regardless of dataset size, although scale may mitigate sporadic misalignment by averaging its effects. Preserving a breadth of relevant semantic details is therefore crucial when pretraining foundation models, whose primary goal is to support diverse downstream tasks. As noted in \Cref{cor:allsemantics}, achieving this requires detailed and consistent annotation of image semantics. Consequently, improved caption control \citep{li2022blip, fang2015captions, ding2023image} is essential to avoid blind spots in semantic coverage.

\begin{restatable}[Identifiability of invariant semantic variables]{cor}{invsemantics}
\label{cor:invsemantics}
Assume that \Cref{asm:continuous,asm:pairing} hold. Consider an OOD setting in which a subset of semantic variables, \(\mathbb{I}_{inv} \subset \mathbb{I}_\mathbf{s}\), remains invariant between training and testing environments, while the remaining semantic variables, \(\mathbb{I}_{var} = \mathbb{I}_\mathbf{s} \setminus \mathbb{I}_{inv}\), undergo distribution shifts. If the union of omitted and perturbable semantic variables under selection bias \(\theta\) and perturbation bias \(\rho\) coincides with the environment-sensitive subset, i.e.,  
\(
\mathbb{I}_{var} = \mathbb{I}_{\theta}^c \cup \mathbb{I}_\rho,
\)  
then the invariant semantic variables \(\mathbf{s}_{\mathbb{I}_{inv}}\) are block-identified via smooth functions  
\(
f_x: \mathcal{X} \to (0,1)^{|\mathbb{I}_{inv}|}\) and   
\(f_t: \mathcal{T}\sx{(\theta)} \to (0,1)^{|\mathbb{I}_{inv}|},
\)
by minimizing \(\mathcal{L}_{\text{\tiny SymAlignMaxEnt}}\).
\end{restatable}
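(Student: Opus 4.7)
The plan is to derive \Cref{cor:invsemantics} as a direct consequence of \Cref{thm:idsemantics} by showing that, under the stated condition on selection and perturbation biases, the invariant semantic index set $\mathbb{I}_{inv}$ coincides precisely with the unbiased semantic index set $\mathbb{I}_\rho^c$ that \Cref{thm:idsemantics} already guarantees is block-identified.

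First I would verify the set identity $\mathbb{I}_{inv} = \mathbb{I}_\rho^c$. Starting from the hypothesis $\mathbb{I}_{var} = \mathbb{I}_\theta^c \cup \mathbb{I}_\rho$, taking complements inside $\mathbb{I}_\mathbf{s}$ gives
\begin{equation*}
\mathbb{I}_{inv} = \mathbb{I}_\mathbf{s} \setminus \mathbb{I}_{var} = \mathbb{I}_\mathbf{s} \setminus \bigl(\mathbb{I}_\theta^c \cup \mathbb{I}_\rho\bigr) = \bigl(\mathbb{I}_\mathbf{s} \setminus \mathbb{I}_\theta^c\bigr) \cap \bigl(\mathbb{I}_\mathbf{s} \setminus \mathbb{I}_\rho\bigr).
\end{equation*}
The first factor equals $\mathbb{I}_\theta$ by definition of the complement in \Cref{defn:selection}. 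Since \Cref{defn:perturbation} forces $\mathbb{I}_\rho \subseteq \mathbb{I}_\theta$, intersecting with $\mathbb{I}_\mathbf{s} \setminus \mathbb{I}_\rho$ yields $\mathbb{I}_\theta \setminus \mathbb{I}_\rho$, which is exactly $\mathbb{I}_\rho^c$ in the notation of \Cref{thm:idsemantics}. Hence $|\mathbb{I}_{inv}| = |\mathbb{I}_\theta| - |\mathbb{I}_\rho| = n$, so the target dimensionality of the encoders $\mathbf{f}_x$ and $\mathbf{f}_t$ matches the setting of \Cref{thm:idsemantics}.

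Next I would invoke \Cref{thm:idsemantics} directly: since \Cref{asm:continuous,asm:pairing} are assumed and the generative process of $(\mathbf{x}, \mathbf{t}\sx{(\theta)})$ follows \Cref{eq:img_gen,eq:caption_gen}, minimizing $\mathcal{L}_{\text{\tiny SymAlignMaxEnt}}$ with smooth, sufficiently flexible $\mathbf{f}_x$ and $\mathbf{f}_t$ block-identifies $\mathbf{s}_{\mathbb{I}_\rho^c}$ in the sense of \Cref{def:blockid}. Substituting the set identity established above then yields the claim for $\mathbf{s}_{\mathbb{I}_{inv}}$.

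There is no substantive technical obstacle here—the proof is essentially an exercise in bookkeeping over index subsets. The only subtlety worth flagging explicitly is that $\mathbb{I}_\rho \subseteq \mathbb{I}_\theta$ (which makes the identity $\mathbb{I}_\mathbf{s} \setminus (\mathbb{I}_\theta^c \cup \mathbb{I}_\rho) = \mathbb{I}_\theta \setminus \mathbb{I}_\rho$ valid without degenerate overlap), and that the hypothesis $\mathbb{I}_{var} = \mathbb{I}_\theta^c \cup \mathbb{I}_\rho$ is an assumption on the practitioner's choice of augmentation/misalignment design, not a conclusion—so the corollary should be read as prescriptive: by engineering the selection and perturbation biases to jointly cover all environment-sensitive semantics, MMCL provably isolates the invariant semantic block, thereby justifying the claim in the main text that misalignment can act as a regularizer for OOD generalization.
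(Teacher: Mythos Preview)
Your proposal is correct and follows essentially the same approach as the paper: both reduce the corollary to \Cref{thm:idsemantics} by establishing that $\mathbb{I}_{inv} = \mathbb{I}_\rho^c$ under the hypothesis $\mathbb{I}_{var} = \mathbb{I}_\theta^c \cup \mathbb{I}_\rho$. Your De~Morgan computation is in fact a bit more explicit than the paper's version, which simply decomposes $\mathbb{I}_{var} = \mathbb{I}^1_{var} \cup \mathbb{I}^2_{var}$ with $\mathbb{I}_\rho = \mathbb{I}^1_{var}$ and $\mathbb{I}_\theta^c = \mathbb{I}^2_{var}$ and then invokes \Cref{thm:idsemantics} directly.
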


\textbf{Insight 4.2 }(Invariant representation learning)\textbf{.} In tasks requiring robust OOD performance (e.g., domain generalization) \citep{rojas2018invariant, arjovsky2019invariant}, semantic variables that are vulnerable to distribution shifts can undermine generalization. As noted in \Cref{cor:invsemantics}, misalignment may, counterintuitively, enhance robustness by selectively omitting or perturbing these vulnerable variables. This suggests that MMCL may offer a novel perspective on invariant representation learning \citep{peters2016causal, krueger2021out, eastwood2022probable}, as auditing and curating text is more precise and interpretable, since language is distilled from human knowledge.

\section{Experiments}
\label{sec:experiments}
We conduct extensive experiments to validate our theoretical results, including numerical simulations (\Cref{sec:numeric_exp}), a real-world image-text dataset with independent semantic variables (\Cref{sec:MPIReal3D}), a synthetic dataset with dependent semantic variables. (\Cref{sec:Causal3DIdent}), and a case study with OpenCLIP models (\Cref{sec:openclip}).

\subsection{Numerical Simulation}
\label{sec:numeric_exp}

\paragraph{Experimental setup.}
We simulate data following the generative process in \Cref{sec:problemformulate}. Semantic variables \(\mathbf{s}\sim \mathcal{N}(0, \Sigma_\mathbf{s})\) (dim. 10) and modality-specific variables \(\mathbf{m}_x\sim \mathcal{N}(0, \Sigma_{\mathbf{m}_x}), \mathbf{m}_t\sim \mathcal{N}(0, \Sigma_{\mathbf{m}_t})\) (dim. 5) encode potential causal dependencies via their respective covariances \(\Sigma_{(\cdot)}\). To study cross-modal misalignment, we independently vary: \textbf{(i)} Selection bias: controlling \(\mathbb{I}_\theta = \{1\}, \dots, [10]\), and \textbf{(ii)} Perturbation bias: defining \(\mathbb{I}_\rho = \emptyset, \dots, [9]\), with Gaussian noise added to each \(i\in \mathbb{I}_\rho\) with probability 0.75. In isolation, we fix \(\mathbb{I}_\rho = \emptyset\) when varying \(\mathbb{I}_\theta\), and \(\mathbb{I}_\theta=[10]\) when varying \(\mathbb{I}_\rho\). Image and text data are generated using randomly initialized invertible MLPs \(g_x\) and \(g_{t\sx{\theta}}\). We train two modality-specific MLP encoders for 100,000 steps using the \(\mathcal{L}_{\text{\tiny SymAlignMaxEnt}}\) loss in \Cref{eq:loss_thm}, with embedding dimension matching the number of unbiased semantic components. See \Cref{sec:num_set_details} for further details. 

We conduct two experiments. \textbf{(i)} Semantic identification: A lightweight MLP regressor is trained to recover each ground-truth semantic component from MMCL learned representations; we report mean \(R^2\) over three seeds. \textbf{(ii)} Downstream performance: We evaluate learned representations on four tasks with targets \(y_1, y_2, y_3, y_4\), defined as nonlinear functions over semantic subsets [3], [5], [7], [9]. We additionally binarize \(y_2\) for classification, and apply a heavy-tailed shift to dimensions \(\{9, 10\}\) of semantic variables to evaluate OOD generalization. See \Cref{sec:num_set_details} for full task design.

\paragraph{Identification of semantics.} The results in \Cref{fig:numeric_nonlinear} show that, under the independent latent variable scenario, unbiased semantic variables are clearly block-identified (\(R^2 \approx 1\)), whereas misaligned semantics due to selection bias are effectively discarded (\(R^2 \approx 0\)). In the dependent latent variable scenario, some misaligned semantics become partially predictable, reflecting inherent mutual predictability among strongly dependent variables~\citep{von2021self, yao2023multi}. Modality-specific variables are consistently omitted from the representations across all settings. Similar effects are observed under perturbation bias, as illustrated in \Cref{fig:numeric_nonlinear_perturb}. Notably, although our theoretical results hold only up to invertible mappings, simple linear regression already achieves high \(R^2\) scores, as demonstrated in \Cref{fig:numeric_linear}. These findings consolidate the identifiability results in \Cref{thm:idsemantics}. Additional analyses on misassigned encoding dimensions and combined bias effects, are provided in \Cref{sec:num_ident_details}.

\begin{figure}[ht]
    \centering
    \includegraphics[width=0.271\textwidth]{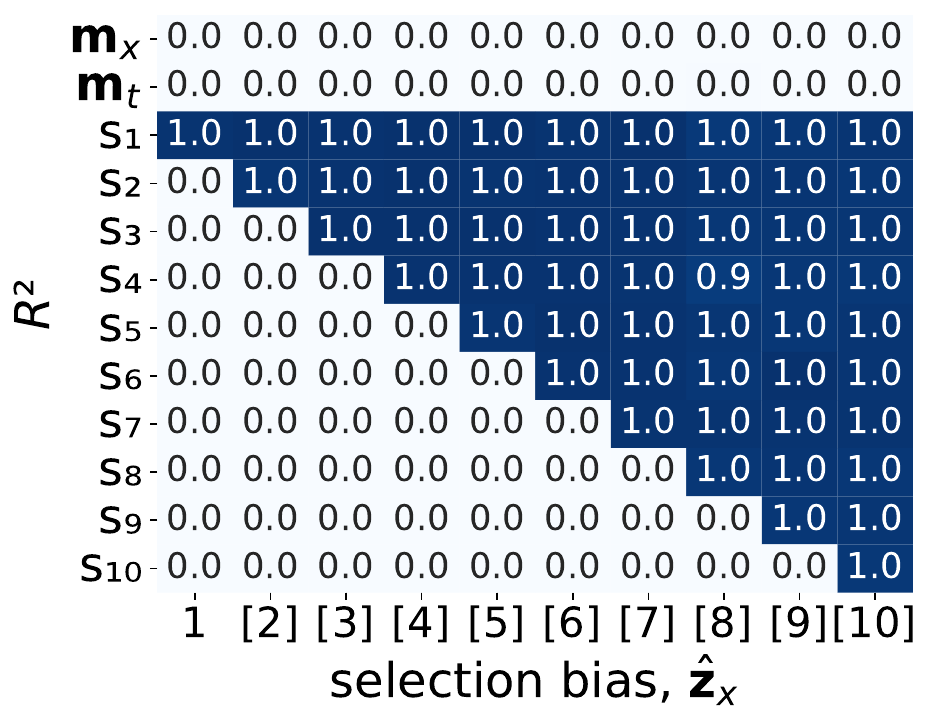}
    \hfill
    \includegraphics[width=0.228\textwidth]{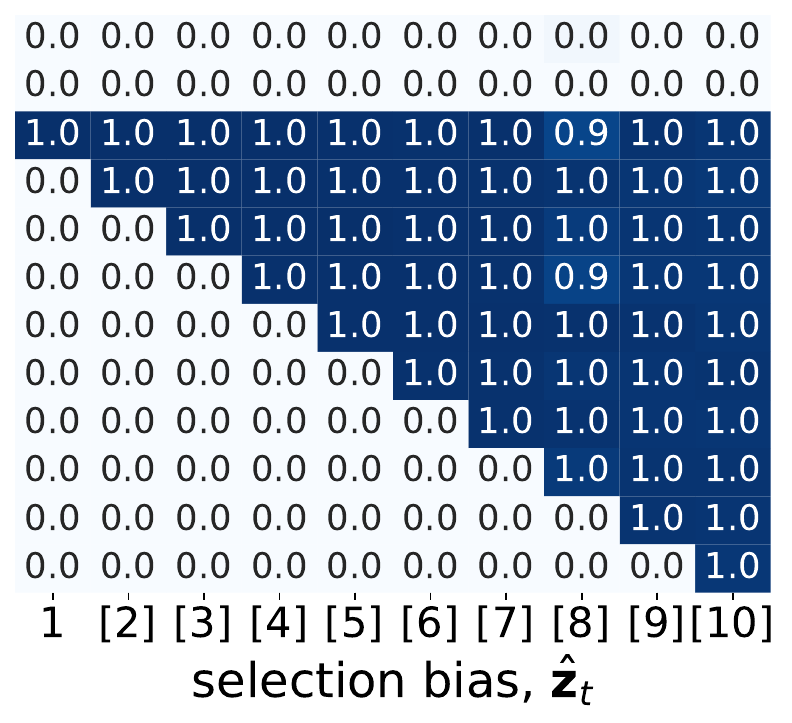}
    \hfill
    \includegraphics[width=0.228\textwidth]{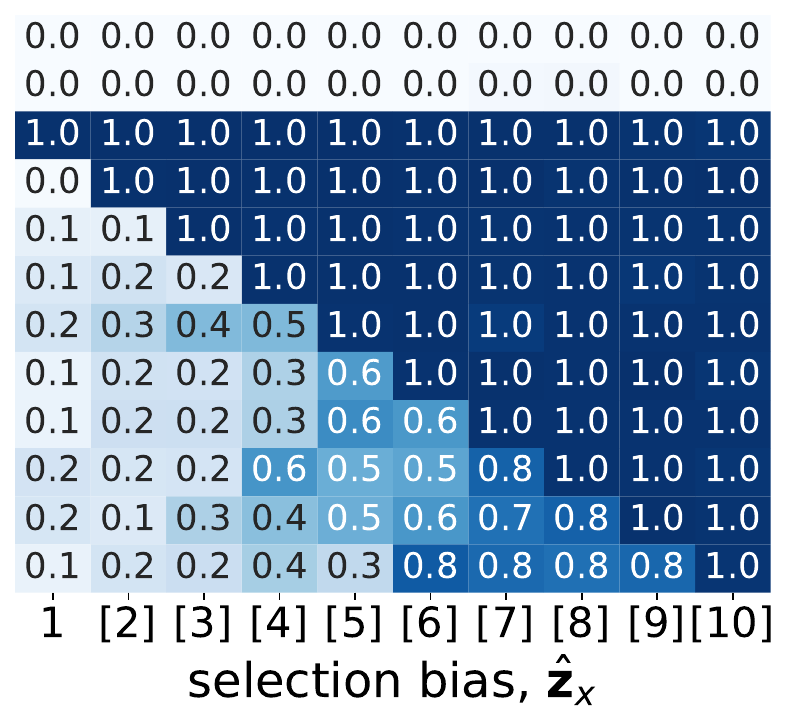}
    \hfill
    \includegraphics[width=0.228\textwidth]{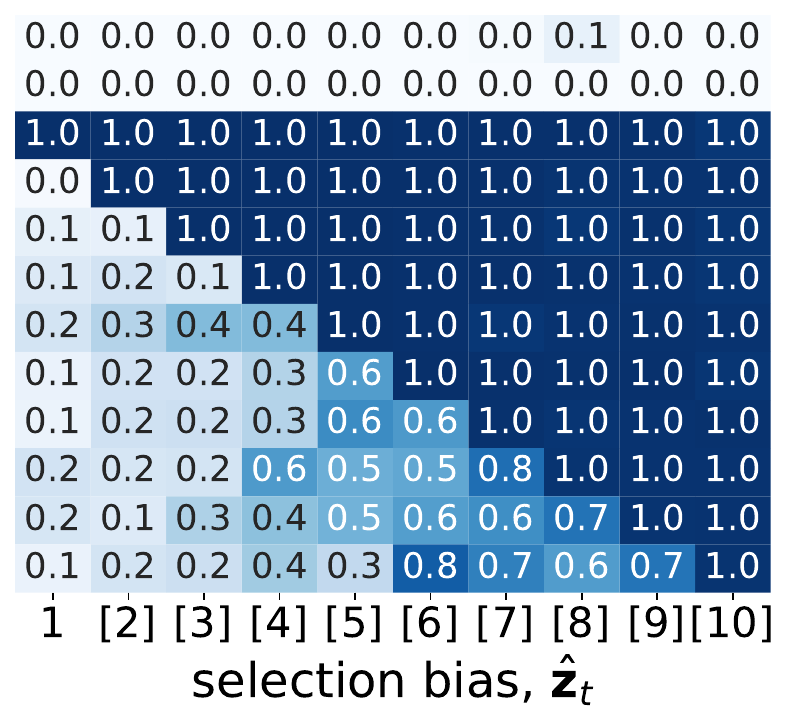}
    \caption{Mean \(R^2\) scores under selection bias settings. From left to right: predictions based on \(\hat{\mathbf{z}}_x\) and \(\hat{\mathbf{z}}_t\) with \textbf{independent} latent semantics, followed by those with \textbf{dependent} latent semantics.}
    \label{fig:numeric_nonlinear}
\end{figure}

\begin{figure}[ht]
    \centering
        \includegraphics[width=0.25\textwidth]{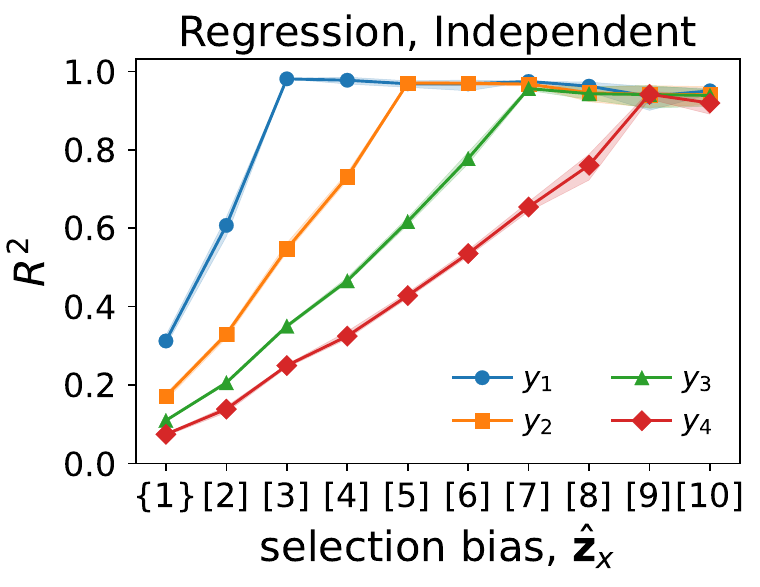}
        \includegraphics[width=0.234\textwidth]{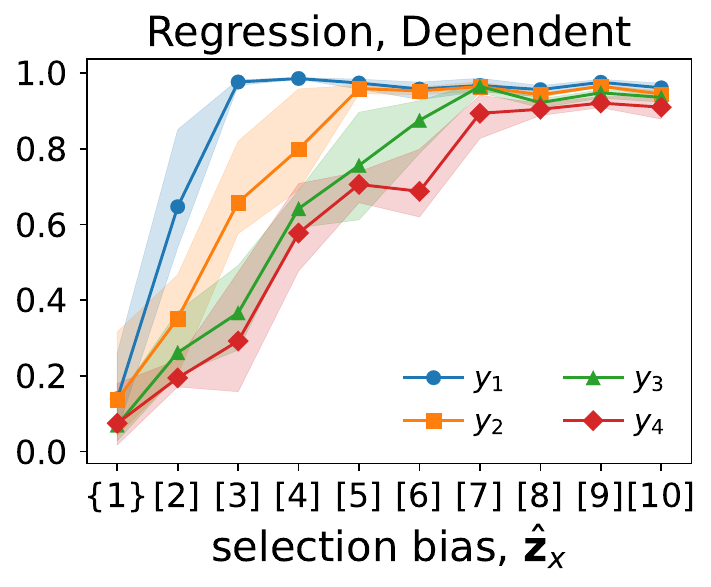}
        \hfill
        \includegraphics[width=0.25\textwidth]{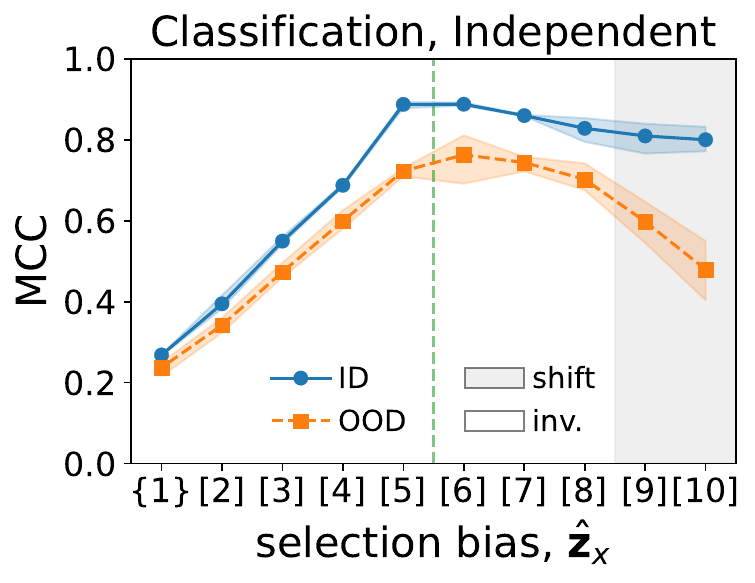}
        \includegraphics[width=0.234\textwidth]{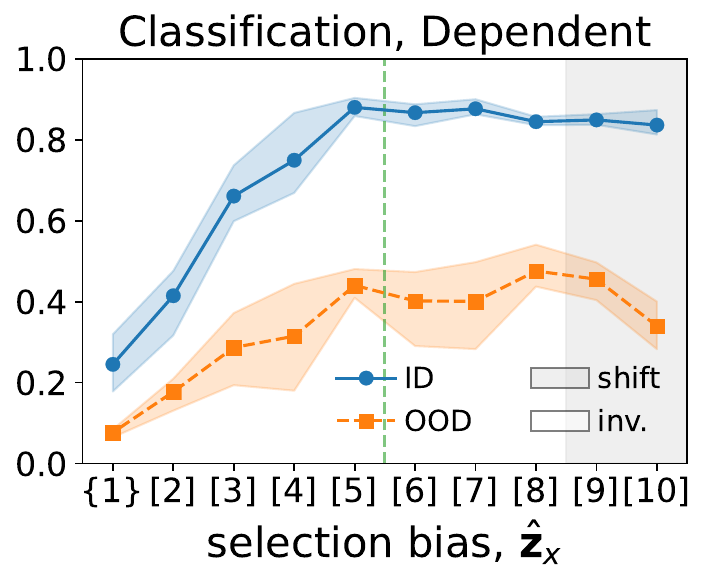}
    \caption{Downstream performance of pretrained \(\hat{\mathbf{z}}_x\) under selection bias. Left: in-distribution (ID) regression performance. Right: ID classification and out-of-distribution (OOD) generalization.}
    \vspace{-1.5em}
    \label{fig:drop_downstream}
\end{figure}

\paragraph{Downstream performance.}  
As shown in \Cref{fig:drop_downstream}, retaining more semantic information during pretraining significantly enhances in-distribution regression performance, consistent with \Cref{cor:allsemantics}. Conversely, under distribution shift scenarios, accurately identifying invariant semantic variables is essential for robust out-of-distribution generalization. In this setting, introducing appropriate selection or perturbation biases effectively removes variables sensitive to distribution shift, supporting the result in \Cref{cor:invsemantics}. Additional results on the effects of perturbation bias are provided in \Cref{sec:num_task_details}.  

\subsection{MPI3D-Complex: Real-World Dataset with Factorized Latent Variables}
\label{sec:MPIReal3D}
\paragraph{Experimental setup.} 
\begin{wrapfigure}{r}{0.35\textwidth}
    \centering
    \vspace{-1.2em}
    \includegraphics[width=\linewidth]{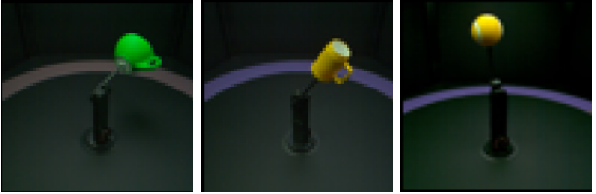}
    \caption{MPI3D-Comp. samples.}
    \vspace{-0.5em}
    \label{fig:mpi_samples}
\end{wrapfigure}
The MPI3D-Complex dataset~\citep{NEURIPS2019_d97d404b} contains real-world images annotated with seven mutually independent, discrete ground-truth factors: object color (\texttt{color}), shape (\texttt{shape}), size (\texttt{size}), camera height (\texttt{cam.}), background color (\texttt{back.}), horizontal position (\texttt{hori.}), and vertical position (\texttt{vert.}). We treat \texttt{hori.} and \texttt{vert.} as image-specific factors, and the remaining five as semantic variables. See \Cref{fig:mpi_samples} for example images from the MPI3D-Complex dataset. Text is generated from the ground-truth semantics using content-word mappings and three manually designed templates. To simulate misalignment, we introduce: \textbf{(i)} selection bias by progressively increasing the subset \(\mathbb{I}_\theta\) of included semantic indices: \circled{1}: {\texttt{color}},
\circled{2}: \{\texttt{color}, \texttt{shape}\},
\circled{3}: \{\texttt{color}, \texttt{shape}, \texttt{size}\},
\circled{4}: \{\texttt{color}, \texttt{shape}, \texttt{size}, \texttt{cam.}\},
\circled{5}: all five; \textbf{(ii)} perturbation bias by fixing \(\mathbb{I}_\theta = \{\texttt{color}, \texttt{shape}, \texttt{size}, \texttt{cam.}, \texttt{back.}\}\) and varying \(\mathbb{I}_\rho\) from \circled{1}: \(\emptyset\) to \circled{5}: \{\texttt{shape}, \texttt{size}, \texttt{cam.}, \texttt{back.}\} in reverse order, replacing selected values with alternatives at \(90\%\) probability. 

Training is performed using the loss \(\mathcal{L}_{\text{\tiny MMCL}}\) defined in \Cref{eq:loss_exp}, following the formulation in~\citep{daunhawer2022identifiability}. Performance is evaluated using the average Matthews Correlation Coefficient (MCC) across three random seeds, based on the prediction of all image latent variables from the learned image and text representations using linear and nonlinear MLP decoders, respectively. Full details on dataset attributes, text generation, bias configurations, and architectures are provided in \Cref{sec:mpi_setup_details}.

\vspace{-0.5em}
\paragraph{Results.} \Cref{fig:mpi3d_nonlinear} presents the nonlinear MCC results with learned representations. The findings demonstrate that, even with discrete latent variables, misaligned semantic variables across modalities, whether due to selection or perturbation biases, are consistently excluded from the representations (MCC \(= 0\)). In contrast, unbiased semantics are well recovered, with MCC scores predominantly approaching 1 and all values \(\geq 0.8\), reinforcing our theoretical findings. Further investigations into MCC using linear classifiers and ablation studies on encoder dimensionality are provided in \Cref{sec:ablation_mpi}.

\begin{figure}[hb]
\centering
    \includegraphics[width=0.306\textwidth]{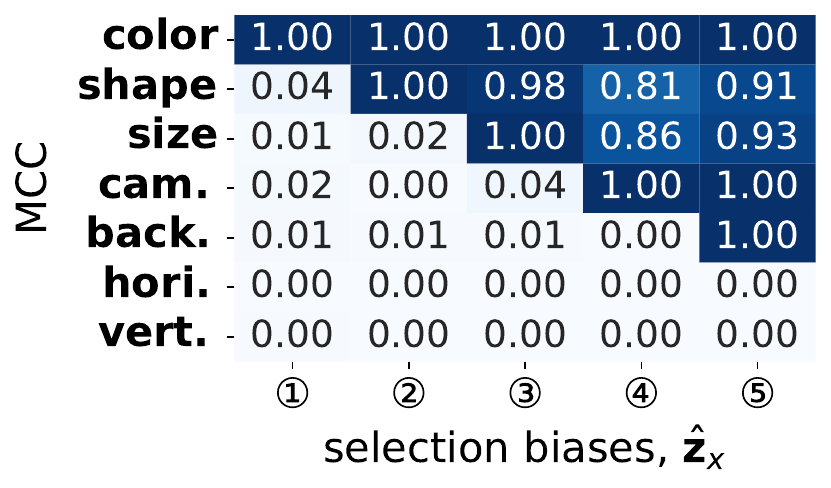}
    \hfill
    \includegraphics[width=0.225\textwidth]{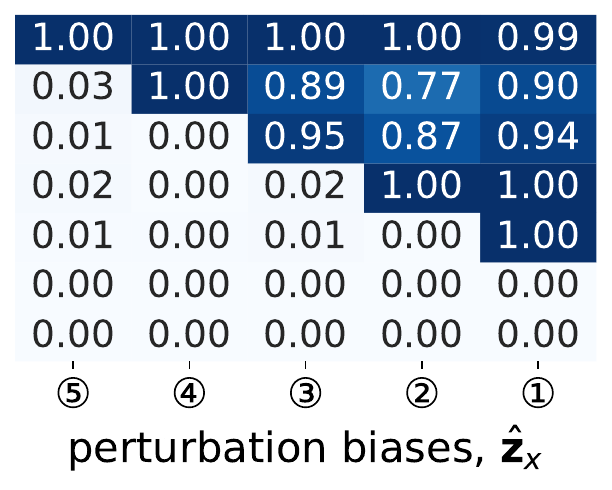}
    \hfill
    \includegraphics[width=0.225\textwidth]{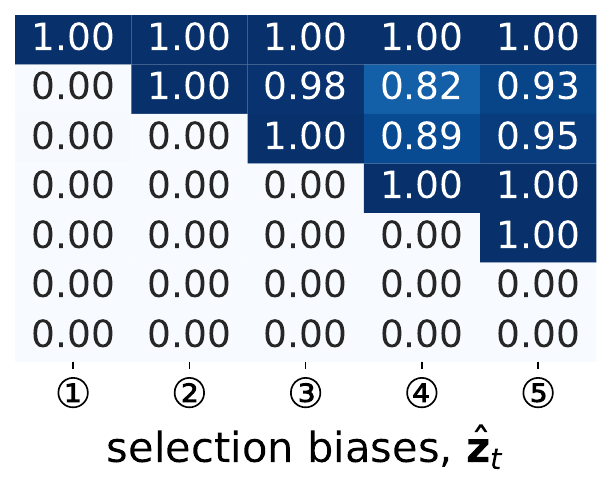}
    \hfill
    \includegraphics[width=0.225\textwidth]{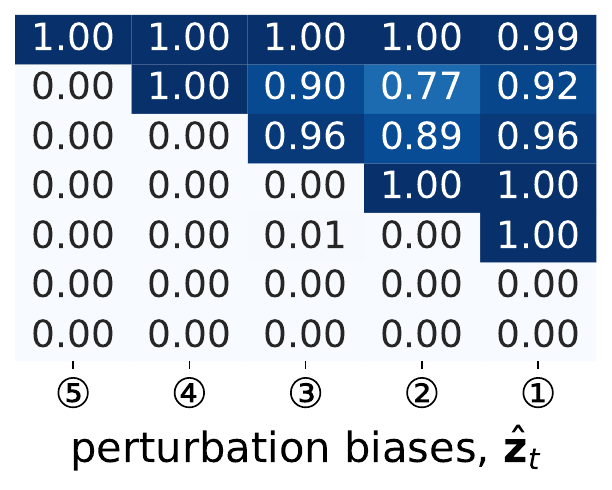}
    \caption{Mean MCC scores under misalignment settings. Left to right: Image features \(\hat{\mathbf{z}}_x\) with selection and perturbation bias settings, text features \(\hat{\mathbf{z}}_t\) under the same bias settings.}
    \label{fig:mpi3d_nonlinear}
    \vspace{-0.5em}
\end{figure}

\subsection{Causal3DIdent: Semi-Synthetic Dataset with Structured Causal Latent Variables}
\label{sec:Causal3DIdent}

\paragraph{Experimental setup.}  
\begin{wrapfigure}{r}{0.35\textwidth}
    \centering
    \includegraphics[width=\linewidth]{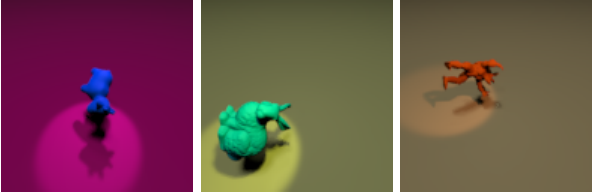}
    \caption{Causal3DIdent samples.}
    \label{fig:c3did_random_samples}
    \vspace{-0.5em}
\end{wrapfigure}
We further conduct experiments on the \emph{Causal3DIdent} dataset~\citep{zimmermann2021contrastive, von2021self, daunhawer2022identifiability, yao2023multi}, a semi-synthetic dataset allowing explicitly defined causal relations among latent variables. Images are generated from 10 latent variables: 3 discrete (\texttt{shape}, \texttt{x\_pos}, \texttt{y\_pos}) and 7 continuous (\texttt{color}, \texttt{s\_pos}, \texttt{s\_color}, \texttt{b\_color}, \texttt{alpha}, \texttt{beta}, \texttt{gamma}). We treat the rotation angles (\texttt{alpha}, \texttt{beta}, \texttt{gamma}) as image-specific, and the remaining as semantic variables following a causal graph among latent variables (\Cref{fig:c3did_gen}). See \Cref{fig:c3did_random_samples} for example images from the dataset. For text, we discretize color-based attributes (\texttt{color}, \texttt{s\_color}, \texttt{b\_color}), retain continuous \texttt{s\_pos} and simulate partial loss in spotlight information when generating text. Text descriptions are rendered using five templates, based on latent semantic variables under selection and perturbation bias. Specifically, selection bias settings (\(\mathbb{I}_\theta\)) vary from \{\texttt{shape}\} to the set of all seven semantic indices, with \(\mathbb{I}_\rho = \emptyset\); perturbation bias settings progressively perturb subsets \(\mathbb{I}_\rho\) from \(\emptyset\) to \{\texttt{x\_pos}, \texttt{y\_pos}, \texttt{s\_pos}, \texttt{color}, \texttt{s\_color}, \texttt{b\_color}\} in reverse order, with full semantic selection.

Training paradigm mirrors that of MPI3D-Complex. We evaluate using \(R^2\) for continuous and MCC for discrete latent variables, averaged over three seeds. See \Cref{sec:c3d_detail_setup} for full experimental details.

\vspace{-0.5em}
\paragraph{Results.}  
\Cref{fig:c3did_nonlinear} presents the prediction performance of a nonlinear MLP classifier or regressor trained on learned image representations. We observe that unbiased semantic variables shared across modalities, whether continuous (e.g., \texttt{s\_pos}) or discrete (e.g., \texttt{shape}, \texttt{x\_pos}, \texttt{y\_pos}), are reliably recovered by MMCL across all settings, with predictive performance approaching perfect (\(R^2 \approx 1\)). For semantic variables that are continuous in the image modality but discretized in the text modality, prediction performance shows some degradation (e.g., \texttt{s\_color} in selection setting \circled{6} or perturbation setting \circled{2}), yet still achieves relatively high \(R^2\) scores. Image-specific variables are consistently excluded from the learned representations, as indicated by \(R^2 = 0\). Likewise, semantic variables omitted due to selection or perturbation bias are generally discarded. For instance, in selection setting \circled{1} or perturbation setting \circled{7}, only \texttt{shape} remains predictable. When factors such as \texttt{x\_pos} or \texttt{y\_pos} are included, other dependent semantic variables, such as \texttt{color}, become partially predictable (e.g., in selection setting \circled{2}). Similarly, identification of \texttt{s\_pos} enhances predictability of \texttt{s\_color} and \texttt{b\_color}, reflecting the latent causal structure. Overall, the results on the Causal3DIdent dataset further support our theoretical findings. Analyses of the text representations are provided in \Cref{sec:c3d_ablations}.

\begin{figure}[ht]
\centering
    \includegraphics[width=0.45\textwidth]{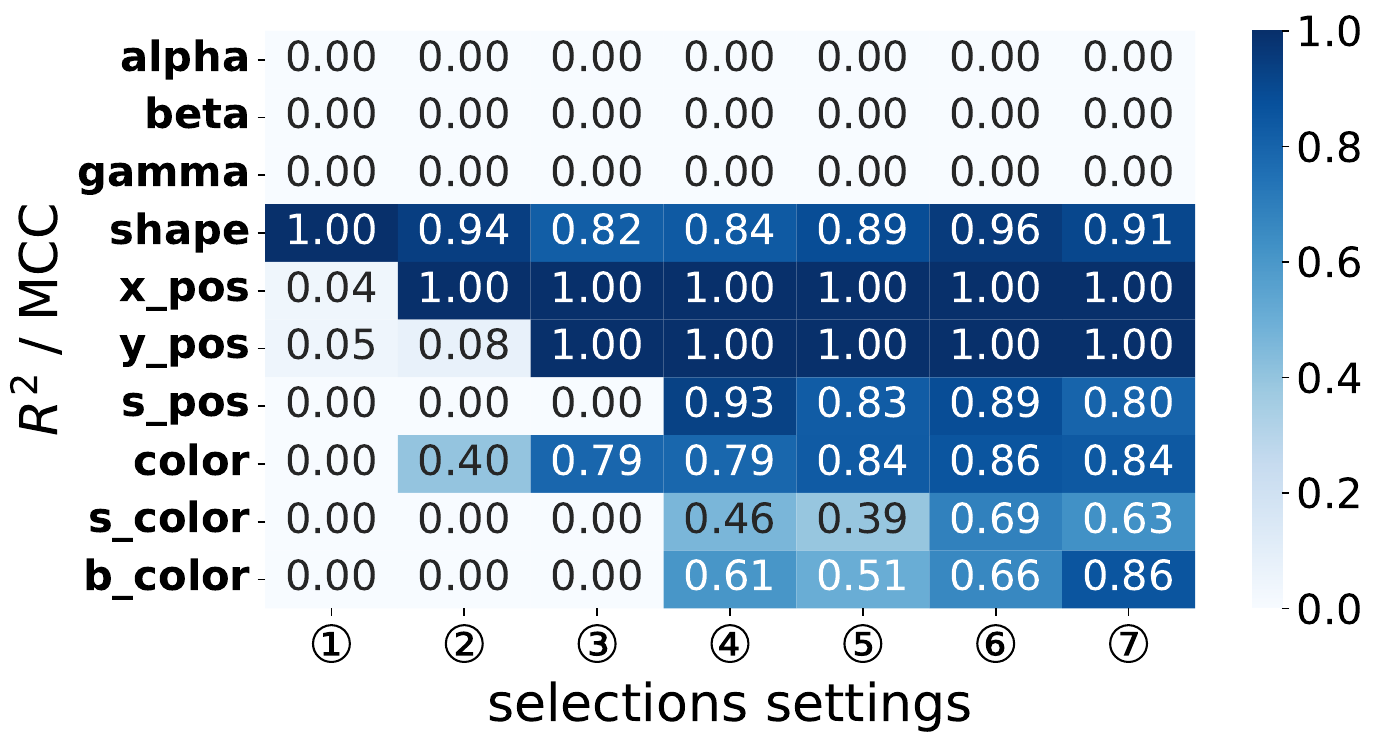}
    \hfill
    \includegraphics[width=0.43\textwidth]{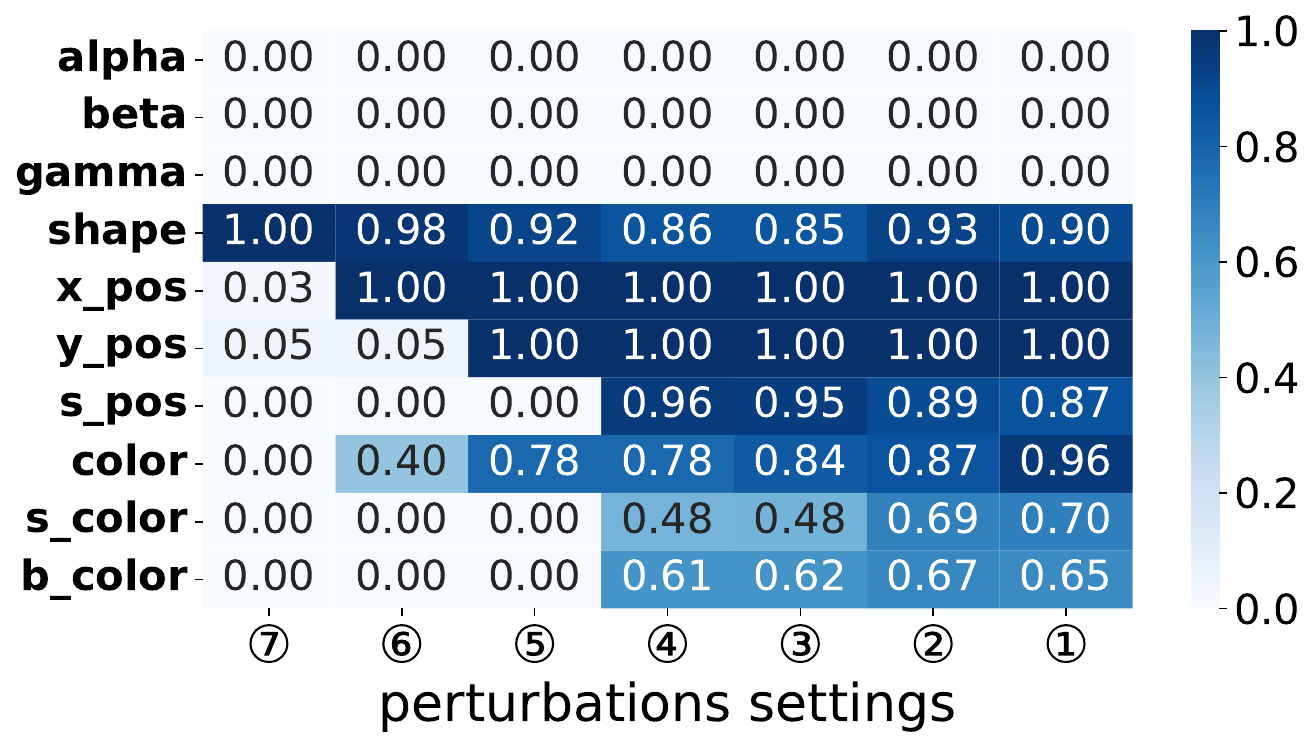}
\caption{Predicting semantic variables under misalignment using image features. \(R^2\) is reported for continuous factors and MCC for discrete factors. The predictions pattern align with our theoretical results, though extra dimensions may be predictable due to statistical correlations.}
\label{fig:c3did_nonlinear}
\vspace{-1em}
\end{figure}

\subsection{Case Study: Zero-Shot Evaluation of OpenCLIP Model Representations}
\label{sec:openclip}

We further validate our theoretical findings through a comprehensive zero-shot evaluation case study on OpenCLIP models \citep{ilharco_gabriel_2021}, a foundation MMCL model pretrained on the LAION-400M dataset \citep{schuhmann2022laion}. To achieve this, we develop a structured taxonomy comprising 146 visual concepts, organized into 15 distinct concept groups with corresponding abbreviations for clarity in plots and analyses: \texttt{Animal}, \texttt{Clothing}, \texttt{Color}, \texttt{Food}, \texttt{Object}, \texttt{Role} (i.e., human roles), \texttt{Scene}, \texttt{Vehicle}, \texttt{Weather}, \texttt{Texture}, \texttt{POV} (i.e., point of view of the photo), \texttt{Emot.} (i.e., emotion or mental state), \texttt{Postproc.} (i.e., post-processing), \texttt{Trait} (i.e., trait judgment by appearance), and \texttt{Stere.} (i.e., stereotypes). For each concept, we collect a dataset of up to 200 images from Flickr\footnote{\url{https://www.flickr.com/}} by prompting with concept names, available under CC licenses. See the concept taxonomy and data collection details in \Cref{sec:app_data_collect}.

\paragraph{Concept coverage in LAION-400M captions.} We begin by analyzing the concept coverage of LAION-400M captions, measuring the average coverage rate, defined as the percentage of captions that explicitly mention each concept or its synonyms, averaged by concept group. As shown in \Cref{fig:group_converage}, concept groups display substantial variation in average coverage: common concepts exhibit relatively high coverage (e.g., \texttt{Object}: 1.63\%, \texttt{Color}: 2.16\%), while some concepts are rarely mentioned (e.g., \texttt{Stere.}: 0.0003\%, \texttt{Postproc.}: 0.0118\%). Importantly, due to the large scale of the dataset, no concept group exhibited 0\% coverage, indicating that complete omission by selection bias is unlikely. Thus, the coverage rate serves as an approximate indicator of selection bias across concept groups.

\paragraph{Zero-shot evaluation of OpenCLIP models.} \Cref{fig:f1_bar} shows the zero-shot F1 performance of OpenCLIP across concept groups. Consistent with our theoretical findings, OpenCLIP models, regardless of scale, perform well on frequently captioned groups (e.g., \texttt{Animal}, \texttt{Object}), but their performance declines sharply for under-captioned groups (e.g., \texttt{Trait}, \texttt{Emot.}). Concept-level confusion matrices further reveal systematic misclassifications, particularly within low-coverage groups (see \Cref{sec:app_zs_details}). Notably, omitting captions for sensitive concepts such as \texttt{Stere.} may be intentional to avoid biased knowledge. In contrast, under-captioned yet valuable concepts like \texttt{Texture} and \texttt{Emot.} should be learned, suggesting a need for more targeted captioning strategies. 

\begin{figure}[ht]
    \centering
    \includegraphics[width=\linewidth]{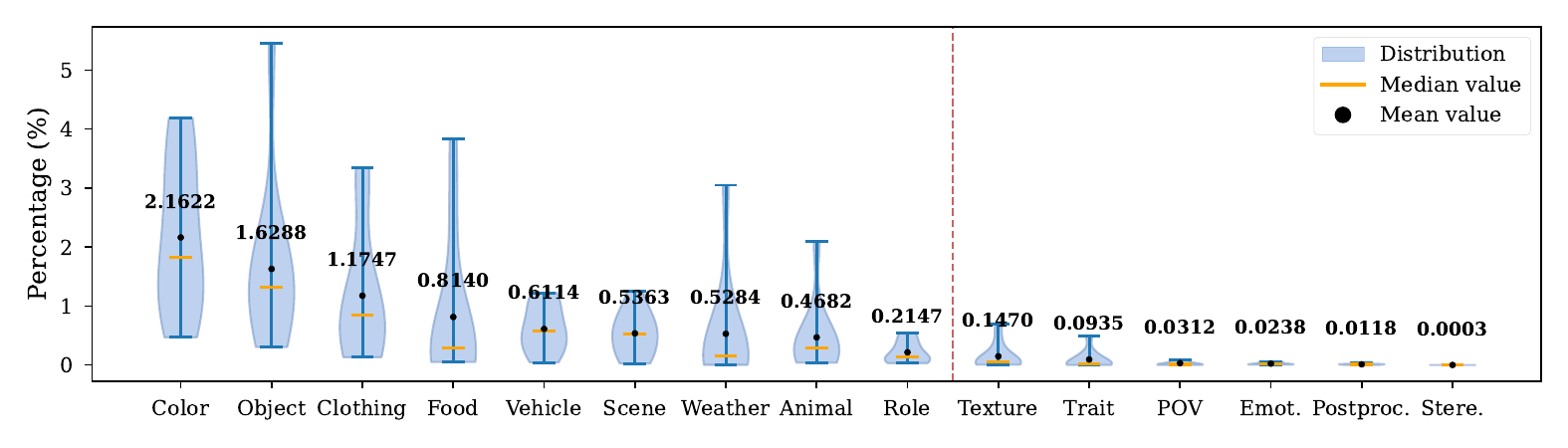}
    \vspace{-1.5em}
    \caption{Concept coverage by group in LAION-400M captions. See \Cref{sec:app_cap_stats} for intra-group details.}
    \label{fig:group_converage}
\end{figure}

\begin{figure}[ht]
    \centering
    \includegraphics[width=\linewidth]{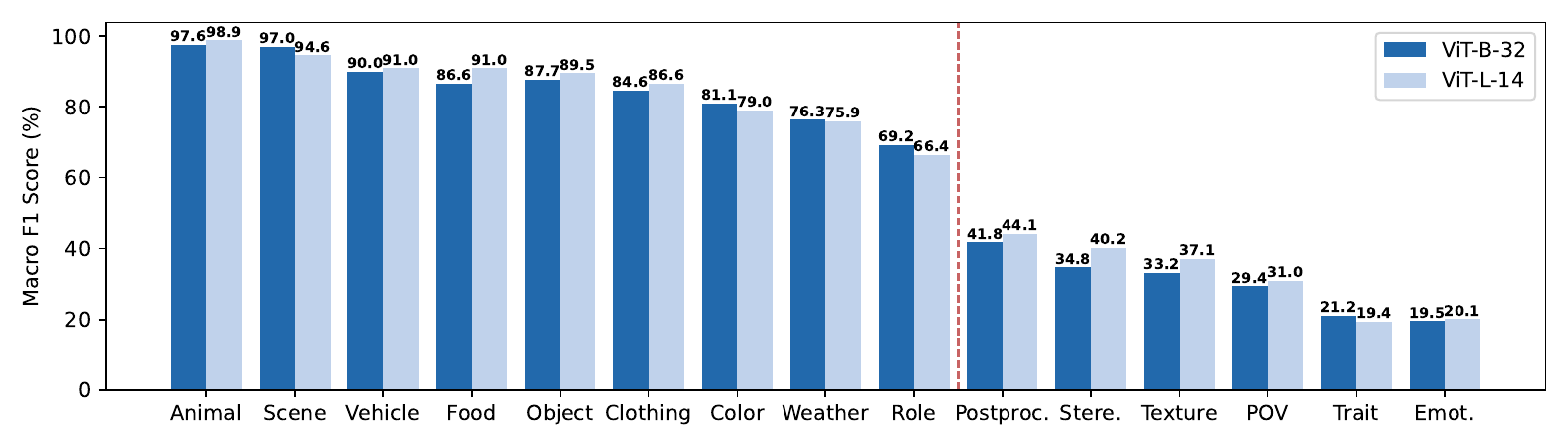}
    \vspace{-1.5em}
    \caption{Zero-shot F1 score of OpenCLIP model trained on LAION-400M in each concept group.}
    \label{fig:f1_bar}
\end{figure}

\section{Conclusion}
\label{sec:conclusion}

In this work, we present a formal analysis of cross-modal misalignment in multimodal data pairs within the MMCL framework, examining its impact on the learned representations. We demonstrate that contrastive multimodal encoders retain only the unbiased shared semantic variables, while discarding misaligned latent variables. When image-text pairs exhibit misalignment due to selection or perturbation biases, the joint embedding prioritizes consistent content, while omitting altered or missing aspects. This trade-off is fundamental: perfectly aligned text captions preserve rich semantic detail, whereas selective or biased text can enhance domain invariance by filtering out distribution-sensitive factors. Our experiments, conducted across simulations and image-text datasets, empirically validate these theoretical findings. These insights highlight the need for multimodal learning frameworks that mitigate misalignment or leverage beneficial biases to enhance representation learning in real-world settings. More discussion of these implications is provided in \Cref{sec:discussion}.

\begin{ack}
The authors gratefully acknowledge support from the Responsible AI Research (RAIR) Centre (J. Q. Shi and Z. Zhang), the Centre for Augmented Reasoning (CAR) (Y. Liu and E. Gao), and the Commonwealth Bank of Australia through the CommonBank Centre for Foundational AI Research (A. van den Hengel). This support was essential to the completion of the research presented in this publication. The authors also thank the anonymous reviewers for their constructive feedback.
\end{ack}


{
\small
\bibliographystyle{abbrv}
\bibliography{references}
}

\clearpage
\renewcommand{\appendixpagename}{%
    \begin{center}
        \rule{\textwidth}{4pt} \\[0.4cm]  
        {\LARGE On the Value of Cross-Modal Misalignment in Multimodal Representation Learning}\\
        {\LARGE (Appendix)}\\
        \rule{\textwidth}{1pt}  
    \end{center}
}
\renewcommand{\contentsname}{\textbf{Contents}}

\clearpage
\begin{appendix}
\appendixpage

\addtocontents{toc}{\protect\setcounter{tocdepth}{2}}
\tableofcontents
\vfill

\clearpage
\section{Notation and Terminology}
\label{sec:notation}

\Cref{tab:notation_summary} provides a summary of the notations and terminologies used throughout the paper.

\begin{table}[ht]
\caption{Notations and terminologies used throughout the paper.}
\label{tab:notation_summary}
\centering
\small
\begin{tabularx}{\linewidth}{lX}
\toprule
\multicolumn{2}{l}{\textbf{Observation and Latent Spaces}} \\
\midrule
\(\mathcal{X}\) & Image observation space (\(\subseteq \mathbb{R}^{d_x}\)) \\
\(\mathcal{T}\sx{(\theta)}\) & Text observation subspace under selection bias \(\theta\) \\
\(\mathcal{S}\) & Latent semantic space (\(\subseteq \mathbb{R}^{n_s}\)) \\
\(\mathcal{M}_x\) & Image-specific non-semantic latent space \\
\(\mathcal{M}_t\) & Text-specific non-semantic latent space \\
\(\mathbb{I}_\mathbf{s}\) & Index set of semantic variables: \([n_s]\) \\
\(\mathbb{I}_{\text{inv}}\) & Index subset of semantic variables that remain invariant under distribution shift\\
\(\mathbb{I}_{\text{var}}\) & Index subset of semantic variables that vary under distribution shift \\
\midrule
\multicolumn{2}{l}{\textbf{Mappings and Functions}} \\
\midrule
\(g_x\) & Differomorphic generative mapping for images: \(\mathcal{S}\times\mathcal{M}_x \to \mathcal{X}\) \\
\(g_{t\sx{(\theta)}}\) & Differomorphic generative mapping for text under selection \(\theta\): \(\mathcal{S}_{\mathbb{I}_\theta}\times\mathcal{M}_t \to \mathcal{T}\sx{(\theta)}\) \\
\(\mathcal{G}_t\) & A diffeomorphism function class where \(g_{t\sx{(\theta)}}\) is selected from\\
\(f_x\) & Image encoder: \(\mathcal{X} \to (0,1)^{n}\) with specified \(n\) \\
\(f_t\) & Text encoder: \(\mathcal{T}\sx{(\theta)} \to (0,1)^{n}\) with specified \(n\) \\
\midrule
\multicolumn{2}{l}{\textbf{Loss Functions}} \\
\midrule
\(\mathcal{L}_{\text{\tiny MMCL}}(f_x, f_t)\) & Symmetric InfoNCE loss for MMCL (\Cref{eq:loss_exp}) \\
\(\mathcal{L}_{\text{\tiny SymAlignMaxEnt}}(f_x, f_t)\) & Alignment and entropy maximization loss (\Cref{eq:loss_thm}) \\
\midrule
\multicolumn{2}{l}{\textbf{Notations for Cross-Modal Misalignment}} \\
\midrule
\(\theta\) & Selection bias, an integer realization in the range \([2^{n_s}-1]\) \\
\(\mathcal{P}^+(\mathbb{I}_\mathbf{s})\) & The set of all non-empty subsets of \(\mathbb{I}_{\mathbf{s}}\)\\
\(\mathbb{I}_{\theta}\) & A selected semantic subset indexed by \(\theta\),  \(\mathbb{I}_{\theta} \in \mathcal{P}^+(\mathbb{I}_\mathbf{s})\) \\
\(\mathbb{I}^c_{\theta}\) & Omitted semantic subset under \(\theta\), \(\mathbb{I}^c_{\theta} = \mathbb{I}_\mathbf{s}\setminus\mathbb{I}_\theta\) \\
\(\rho\) & Perturbation bias, an integer realization in the range \([2^{|\mathbb{I}_\theta|}-1]\)\\
\(\mathcal{P}_{\text{prop}}(\mathbb{I}_\theta)\) & The set of all proper subsets of \(\mathbb{I}_\theta\)\\
\(\mathbb{I}_\rho\) & A subset of \(\mathbb{I}_{\theta}\) subject to perturbation indexed by \(\rho\), \(\mathbb{I}_\rho \in \mathcal{P}_{\text{prop}}(\mathbb{I}_\theta)\) \\
\(\mathbb{I}^c_{\rho}\) & Subset of \(\mathbb{I}_{\theta}\) that always be unbiased under \(\rho\), \(\mathbb{I}^c_{\rho} = \mathbb{I}_{\theta}\setminus\mathbb{I}_\rho\) \\
\(p_{\tilde{\mathbf{s}}_{\mathbb{I}_\theta} \mid \mathbf{s}, \theta, \rho}\) & Perturbation conditional distribution, reflecting misalignment  (\Cref{eq:perturbation}) \\
\(A \subseteq \mathbb{I}_\rho\) & A random subset of semantic variables subject to perturbation, drawn from \(p_{A}\) \\
\midrule
\multicolumn{2}{l}{\textbf{Distributions and Operators}} \\
\midrule
\(p_\mathbf{s},\, p_{\mathbf{m}_x},\, p_{\mathbf{m}_t}\) & Distributions over semantic, image-specific and text-specific variables, respectively \\
\(H(\cdot)\) & Differential entropy \\
\(\delta(\cdot)\) & Dirac delta function \\
\midrule
\multicolumn{2}{l}{\textbf{Random Variables}} \\
\midrule
\(\mathbf{x}\) & Image observation sampled from \(\mathcal{X}\) \\
\(\mathbf{t}\sx{(\theta)}\) & Text observation under selection view \(\theta\), sampled from \(\mathcal{T}\sx{(\theta)}\) \\
\(\mathbf{s}\) & Latent semantic variables in \(\mathcal{S}\) \\
\(\mathbf{s}_{\mathbb{I}_{\theta}}\) & Selected latent semantic variables for generating text \\
\(\mathbf{s}_{\mathbb{I}^c_{\theta}}\) & Omitted latent semantic variables when generating text \\
\(\mathbf{s}_{\mathbb{I}_{\rho}}\) & Perturbable latent semantic variables for generating text \\
\(\mathbf{s}_{\mathbb{I}^c_{\rho}}\) & Unbiased latent semantic variables within the selected part \\
\(\mathbf{m}_x\) & Image-specific non-semantic variable in \(\mathcal{M}_x\) \\
\(\mathbf{m}_t\) & Text-specific non-semantic variable in \(\mathcal{M}_t\) \\
\(\mathbf{z}_x\) & Combined latent variables for images: \((\mathbf{s}, \mathbf{m}_x)\) \\
\(\mathbf{z}_{t\sx{(\theta)}}\) & Combined latent variables for text under \(\theta\): \((\mathbf{s}_{\mathbb{I}_\theta}, \mathbf{m}_t)\) \\
\bottomrule
\end{tabularx}
\end{table}

\newpage
\section{Related Work}
\label{sec:app_realtedwork}

\paragraph{Theoretical multimodal (multi-view) contrastive learning.}  
Recent work has sought to formalize the theoretical foundations of multimodal and multi-view representation learning, particularly under contrastive objectives. Wang et al.~\citep{wang2020understanding} decompose the InfoNCE loss~\citep{oord2018representation} into an alignment term, which pulls positive pairs together, and a uniformity term, which encourages dispersion over a hypersphere---laying the groundwork for subsequent analysis. Zimmermann et al.~\citep{zimmermann2021contrastive} show that contrastive objectives can invert the data-generating process, while Liu et al.~\citep{liu2024revealing} extend this result to multimodal settings. However, these approaches often rely on strong assumptions about latent distributions or manifold structures, which limits their practical applicability. A complementary line of work, such as~\citep{von2021self}, demonstrates that contrastive learning with data augmentation can recover shared content without requiring such strong parametric assumptions. This has been extended to MMCL by~\citep{lyu2022understanding, daunhawer2022identifiability}, and Yao et al.~\citep{yao2023multi} further investigate identifiability under partial observability in multi-view settings. Distinct from prior work, we do not assume fixed content-style decompositions, causal directions, or perfectly aligned pairs. Instead, we analyze MMCL with image-text pairs under cross-modal misalignment and systematically examine its impact on representation learning.

\vspace{-0.5em}
\paragraph{Vision-language models and perspectives on misalignment.}  
Multimodal contrastive learning (MMCL) has achieved significant empirical success, particularly in aligning visual and textual modalities using models such as CLIP~\citep{radford2021learning} and ALIGN~\citep{jia2021scaling}. These successes are partly attributed to the use of massive training corpora, e.g., LAION-5B~\citep{schuhmann2022laion}, which are substantially larger than those used for vision-only foundation models~\citep{he2022masked, oquab2024dinov}. However, real-world multimodal datasets are often imperfectly aligned and noisy~\citep{miech2019howto100m}. Existing empirical methods~\citep{alayrac2020self, lavoie2024modeling} typically treat such misalignment as label noise, employing strategies such as multiple-instance learning or dataset refinement~\citep{alayrac2020self} to mitigate its impact. While some recent work suggests that contrastive models are robust to certain forms of structured misalignment~\citep{nakada2023understanding}, Others suggest augmenting text as a proxy for changes in visual content~\citep{cai2025clap}. Our work suggests that cross-modal misalignment can act as either a barrier or a bridge, depending on the application. Unlike~\citep{nakada2023understanding}, which assumes linear representations without modeling the generative process, our analysis is grounded in a realistic latent variable model that provides a deeper understanding of misalignment from a data-generating perspective.

\vspace{-0.5em}
\paragraph{Identifiability in latent variable models.}
Identifiability analysis addresses the fundamental question of whether the learning process can uniquely recover the latent generative structure or distribution underlying the observed data. This problem has been extensively studied in the context of nonlinear independent component analysis (ICA)~\citep{hyvarinen1999nonlinear, locatello2019challenging, khemakhem2020variational, sorrenson2020disentanglement} and causal representation learning (CRL)~\citep{liu2022identifying, liu2024iclr, pmlr-v235-zhang24br, yao2025unifying, liu2025latent}. In either setting, full identifiability, typically up to permutation, is rarely achievable in practice without strong assumptions, such as access to interventional data.
Consequently, recent works have focused on partial identifiability~\citep{gu2020partial, kong2023partial, gao2025domain} or relaxed equivalence classes, such as identifiability up to linear transformations~\citep{zimmermann2021contrastive, liu2024revealing} or up to group-wise/block-wise indeterminacy~\citep{von2021self, yao2023multi}, which can offer sufficient guarantees for specific tasks or applications. In the context of multimodal representation learning, several recent studies have explored identifiability results~\citep{lyu2022understanding, daunhawer2022identifiability, liu2024revealing}, but largely neglect the presence of cross-modal misalignment. In contrast, our work explicitly models misalignment and adopts a block-identifiability definition to characterize the extent to which semantic factors can be recovered up to an invertible mapping.

\vspace{-0.5em}
\paragraph{Invariant representation learning.}
Invariant representation learning (IRL) seeks to learn representations that remain robust under distributional shifts between environments~\citep{arjovsky2019invariant, eastwood2022probable, gao2025domain}, particularly in settings where empirical risk minimization (ERM)~\citep{NIPS1991_ff4d5fbb} fails to generalize out of distribution. In the absence of such challenge, ERM is sufficient for in-distribution prediction, rendering the objective of IRL unnecessary. From a causal perspective, learning invariant representations, or more ambitiously, autonomous mechanisms \cite{scholkopf2021toward}, requires observing sufficient variations in the latent factors underlying data \cite{liu2022identifying}. Such variability can be introduced through interventional data~\citep{lippe2022citris, lachapelle2022disentanglement}, exchangeability assumptions~\citep{reizinger2025identifiable}, or the use of auxiliary variables such as domain indices~\citep{pmlr-v235-zhang24br, liu2024identifiable}. However, direct interventions on latent variables are typically infeasible in real-world observational data, and auxiliary variable methods often require access to a large number of diverse environments to ensure identifiability---an assumption that is often challenging to satisfy in practice. Our work sheds light on an alternative approach: by leveraging the inherent flexibility of text supervision, we show that manipulating biases, specifically through selective omission or semantic perturbation in text, can serve as a controllable proxy for environmental variation.

\section{Proofs}
\label{sec:proofs}

\subsection{Lemmas}
Before proceeding with the proof, we first establish the following lemmas.

\smallskip
\begin{restatable}[Global Minimum of \(\mathcal{L}_{\text{\tiny SymAlignMaxEnt}}\)]{lemma}{globalminimum}
\label{lem:globalminimum}
In the context of the proposed LVM as outlined in \Cref{sec:lvm}, and \Cref{asm:continuous,asm:pairing}, the global minimum of 
\begin{equation}
\label{eq:loss_thm_app}
\mathcal{L}_{\text{\tiny SymAlignMaxEnt}}(f_x, f_t) = \E_{(\mathbf{x}, \mathbf{t}\sx{(\theta)}) \sim p_{\mathbf{x}, \mathbf{t}\sx{(\theta)}}} \left[\|f_x(\mathbf{x}) - f_t(\mathbf{t}\sx{(\theta)})\|_2 \right] - \frac{1}{2} \Bigl( H\bigl(f_x(\mathbf{x})\bigr) + H\bigl(f_t(\mathbf{t}\sx{(\theta)})\bigr) \Bigr),  
\end{equation}
is 0. This minimum can be attained by the following pair of smooth functions:
\begin{gather}
f_x^* = \mathbf{d} \circ (g^{-1}_{x})_{\mathbb{I}^c_\rho}: \mathcal{X} \to (0, 1)^{n}, \label{eq:fx_ast}   \\
f_t^* = \mathbf{d} \circ (g^{-1}_{t\sx{(\theta)}})_{\mathbb{I}^c_\rho}: \mathcal{T}\sx{(\theta)} \to (0, 1)^{n}, \label{eq:ft_ast} 
\end{gather}
where:
\begin{itemize}
    \item \(g_x\) and \(g_{t\sx{(\theta)}}\) denote the true underlying generative mappings for images and paired text, respectively, as described in \Cref{sec:problemformulate}.
    \item The operator \((\cdot)_{\mathbb{I}^c_\rho}\) extracts the components corresponding to the unbiased semantic variables (i.e., unaffected by the selection bias nor the perturbation bias), with 
    \(
    n = \left|\mathbb{I}^c_\rho\right|
    \)
    being their dimensionality.
    \item \(\mathbf{d} = (d_1, \dots, d_n)\) is defined via the Darmois construction \citep{darmois1951analyse, hyvarinen1999nonlinear, von2021self}, where for each \(i\in [n]\),
    \[
    d_i(\mathbf{s}_{\mathbb{I}^c_\rho}) = \mathrm{CDF}_i\bigl(s_{\mathbb{I}^c_\rho,i} \mid \mathbf{s}_{\mathbb{I}^c_\rho, [i-1]}\bigr) = \mathbb{P}\Bigl(S_{\mathbb{I}^c_\rho,i} \leq s_{\mathbb{I}^c_\rho,i} \,\Big|\, \mathbf{s}_{\mathbb{I}^c_\rho, [i-1]}\Bigr),
    \]
    with \(\mathrm{CDF}_i\) denoting the conditional cumulative distribution of \(s_{\mathbb{I}^c_\rho,i}\) given \(\mathbf{s}_{\mathbb{I}^c_\rho, [i-1]}\).
\end{itemize}
\end{restatable}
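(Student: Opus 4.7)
The plan is to first establish that $\mathcal{L}_{\text{\tiny SymAlignMaxEnt}}$ is bounded below by $0$, and then verify that the proposed candidate $(\mathbf{f}_x^*, \mathbf{f}_t^*)$ attains this bound. For the lower bound, the alignment term is non-negative since it is an expected Euclidean norm. For the entropy terms, because both encoders are constrained to take values in $(0,1)^{n}$, the maximum differential entropy principle on bounded supports gives $H(\mathbf{f}_x(\mathbf{x})) \le 0$ and $H(\mathbf{f}_t(\mathbf{t}\sx{(\theta)})) \le 0$, with equality if and only if the pushforward distribution is uniform on $(0,1)^n$. Hence $-\tfrac{1}{2}(H(\mathbf{f}_x(\mathbf{x})) + H(\mathbf{f}_t(\mathbf{t}\sx{(\theta)}))) \ge 0$, and the overall loss is $\ge 0$.

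Next, I would substitute the candidate $(\mathbf{f}_x^*, \mathbf{f}_t^*)$ and check the two terms separately. For the alignment term, since $\mathbf{x} = \mathbf{g}_x(\mathbf{s}, \mathbf{m}_x)$, invertibility of $\mathbf{g}_x$ yields $(\mathbf{g}_x^{-1}(\mathbf{x}))_{\mathbb{I}^c_\rho} = \mathbf{s}_{\mathbb{I}^c_\rho}$, so $\mathbf{f}_x^*(\mathbf{x}) = \mathbf{d}(\mathbf{s}_{\mathbb{I}^c_\rho})$. Similarly, $\mathbf{t}\sx{(\theta)} = \mathbf{g}_{t\sx{(\theta)}}(\tilde{\mathbf{s}}_{\mathbb{I}_\theta}, \mathbf{m}_t)$ gives $\mathbf{f}_t^*(\mathbf{t}\sx{(\theta)}) = \mathbf{d}(\tilde{\mathbf{s}}_{\mathbb{I}^c_\rho})$. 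The crucial step is to invoke \Cref{asm:pairing}: because $\mathbb{I}^c_\rho = \mathbb{I}_\theta \setminus \mathbb{I}_\rho$ is disjoint from every random perturbation subset $A \subseteq \mathbb{I}_\rho$, the Dirac factor in \Cref{eq:perturbation} forces $\tilde{\mathbf{s}}_{\mathbb{I}^c_\rho} = \mathbf{s}_{\mathbb{I}^c_\rho}$ almost surely. Consequently $\mathbf{f}_x^*(\mathbf{x}) = \mathbf{f}_t^*(\mathbf{t}\sx{(\theta)})$ a.s., and the alignment expectation vanishes.

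For the entropy terms, I would invoke the standard property of the Darmois construction: applying the chained conditional CDFs $d_i(\cdot) = \mathrm{CDF}_i(\,\cdot \mid \mathbf{s}_{\mathbb{I}^c_\rho,[i-1]})$ to the random vector $\mathbf{s}_{\mathbb{I}^c_\rho}$ yields a random vector whose components are independent and uniformly distributed on $(0,1)$. \Cref{asm:continuous} (strictly positive, continuous densities) guarantees that these conditional CDFs are well-defined, smooth, and strictly monotone in their last argument, so the construction is valid and the pushforward distribution is uniform on $(0,1)^n$. Therefore $H(\mathbf{f}_x^*(\mathbf{x})) = H(\mathbf{f}_t^*(\mathbf{t}\sx{(\theta)})) = 0$, and both terms of the loss achieve their minimum simultaneously, giving $\mathcal{L}_{\text{\tiny SymAlignMaxEnt}}(\mathbf{f}_x^*, \mathbf{f}_t^*) = 0$.

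The main obstacle, though largely notational, is carefully tracking the index bookkeeping: one must verify that the indexing conventions (graded lexicographic order fixed in \Cref{defn:selection} and \Cref{defn:perturbation}) make $(\mathbf{g}_x^{-1})_{\mathbb{I}^c_\rho}$ and $(\mathbf{g}_{t\sx{(\theta)}}^{-1})_{\mathbb{I}^c_\rho}$ extract the \emph{same} semantic coordinates in a consistent order, so that the two outputs are literally equal rather than just agreeing up to a permutation. Once that is settled, the argument reduces to the clean chain: (non-negativity of the two terms) $+$ (invariance of $\mathbf{s}_{\mathbb{I}^c_\rho}$ across modalities under \Cref{asm:pairing}) $+$ (uniformity from Darmois under \Cref{asm:continuous}).
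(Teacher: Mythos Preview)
Your proposal is correct and follows essentially the same approach as the paper's proof: both rely on the invariance $\tilde{\mathbf{s}}_{\mathbb{I}^c_\rho} = \mathbf{s}_{\mathbb{I}^c_\rho}$ guaranteed by the Dirac factor in \Cref{asm:pairing} to annihilate the alignment term, and on the Darmois construction (under \Cref{asm:continuous}) to achieve uniformity on $(0,1)^n$ and hence zero entropy. If anything, your version is slightly more complete, since you explicitly argue the lower bound $\mathcal{L}_{\text{\tiny SymAlignMaxEnt}} \ge 0$ via non-negativity of the norm and the maximum-entropy bound on $(0,1)^n$, whereas the paper's proof of this lemma only verifies that the candidate achieves $0$ and defers the non-negativity observation to the proof of \Cref{thm:idsemantics}.
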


\smallskip
\begin{proof}[\textbf{Proof of \Cref{lem:globalminimum}}]
We prove that the candidate functions \(f_x^*\) and \(f_t^*\) in \cref{eq:fx_ast,eq:ft_ast}
yield \(\mathcal{L}_{\text{\tiny SymAlignMaxEnt}}(f_x^*, f_t^*)=0\). Substituting these candidate functions into the loss in \cref{eq:loss_thm_app}, we have
\begin{align*}
\mathcal{L}_{\text{\tiny SymAlignMaxEnt}}(f_x^*, f_t^*) 
&= \E_{(\mathbf{x}, \mathbf{t}\sx{(\theta)}) \sim p_{\mathbf{x},\mathbf{t}\sx{(\theta)}}} \Bigl[ \|f_x^*(\mathbf{x}) - f_t^*(\mathbf{t}\sx{(\theta)})\|_2 \Bigr] \\
&\quad - \frac{1}{2}\Bigl( H\bigl(f_x^*(\mathbf{x})\bigr) + H\bigl(f_t^*(\mathbf{t}\sx{(\theta)})\bigr) \Bigr).
\end{align*}
By the invertibility of the generative processes \(g_x\) and \(g_{t\sx{(\theta)}}\) (see \Cref{sec:problemformulate}), we may change variables to express the expectation over the latent variables:
\begin{align*}
\mathcal{L}_{\text{\tiny SymAlignMaxEnt}}(f_x^*, f_t^*) 
&= \E_{(\mathbf{z}_x, \mathbf{z}_{t\sx{(\theta)}}) \sim p_{\mathbf{z}_x,\mathbf{z}_{t\sx{(\theta)}}}} \Bigl[ \|\mathbf{d}(\mathbf{s}_{\mathbb{I}^c_\rho}) - \mathbf{d}(\tilde{\mathbf{s}}_{\mathbb{I}^c_\rho})\|_2 \Bigr] \\
&\quad - \frac{1}{2}\Bigl( H\bigl(\mathbf{d}(\mathbf{s}_{\mathbb{I}^c_\rho})\bigr) + H\bigl(\mathbf{d}(\tilde{\mathbf{s}}_{\mathbb{I}^c_\rho})\bigr) \Bigr),
\end{align*}
where \(\mathbf{s}_{\mathbb{I}^c_\rho}\) and \(\tilde{\mathbf{s}}_{\mathbb{I}^c_\rho}\) denote the preserved unbiased components of the semantic variables across image-text pairs, respectively.

We now show that these unbiased semantic components are identical across modalities almost everywhere (a.e.). By \Cref{asm:pairing}, for any image-text pair the text is generated via a random perturbation process that modifies only a subset \(A \subseteq \mathbb{I}_\rho\) of the activated semantic variables. Specifically, recall \cref{eq:perturbation}, the perturbation density is defined as
\[
p_{\tilde{\mathbf{s}}_{\mathbb{I}_\theta} \mid \mathbf{s}, \theta, \rho} 
\bigl(\tilde{\mathbf{s}}_{\mathbb{I}_\theta} \mid \mathbf{s}, A\bigr)  
= \delta\bigl(\tilde{\mathbf{s}}_{\mathbb{I}_{\theta} \setminus A} - \mathbf{s}_{\mathbb{I}_{\theta} \setminus A}\bigr)  
\, p_{\tilde{\mathbf{s}}_{A} \mid \mathbf{s}_{A}}\bigl(\tilde{\mathbf{s}}_{A} \mid \mathbf{s}_{A}\bigr).
\]
Since \(A\subseteq \mathbb{I}_\rho\), it follows that the indices in \(\mathbb{I}^c_\rho\) are a subset of those in \(\mathbb{I}_{\theta}\setminus A\); that is,
\(
\mathbb{I}^c_\rho \subseteq \mathbb{I}_{\theta}\setminus A.
\)
Thus, the Dirac delta in the above expression enforces that
\[
\tilde{\mathbf{s}}_{\mathbb{I}^c_\rho} = \mathbf{s}_{\mathbb{I}^c_\rho} \quad \text{almost surely (a.s.)}\quad \forall\, \mathbf{s}\sim p_\mathbf{s}, \tilde{\mathbf{s}}_{\mathbb{I}_\theta} \sim p_{\tilde{\mathbf{s}}_{\mathbb{I}_\theta} \mid \mathbf{s}, \theta, \rho},
\]
under selection bias \(\theta\) and perturbation \(\rho\), regardless of the particular perturbation set \(A\) at each time.

Further, by the properties of the Darmois construction \citep{darmois1951analyse}, the mapping \(\mathbf{d}\) transforms \(\mathbf{s}_{\mathbb{I}^c_\rho}\) into a uniform distribution over \((0,1)^n\) (with \(n=|\mathbb{I}^c_\rho|\)). Since the uniform distribution is the unique maximum entropy (i.e., zero) distribution on a bounded domain (under no further moment constraints) \citep{jaynes1982rationale, cover1999elements}, the entropy terms in the loss are maximized. In the formulation of \(\mathcal{L}_{\text{\tiny SymAlignMaxEnt}}\), this maximal entropy precisely cancels any potential reduction in the loss, ensuring that
\[
\mathcal{L}_{\text{\tiny SymAlignMaxEnt}}(f_x^*, f_t^*) = 0.
\]
Therefore, the global minimum of \(\mathcal{L}_{\text{\tiny SymAlignMaxEnt}}\) is achieved at 0 by the given function pairs \(f_x^*\) and \(f_t^*\), completing the proof.
\end{proof}

\medskip
\begin{restatable}[Uniformizing Mapping Preserves All Information]{lemma}{uniformizing}
\label{lem:uniformizing}
Let \( h : \mathcal{U} \to \mathcal{V}\) be a smooth map between simply connected, open \(\mathcal{C}^1\) manifolds \(\mathcal{U}, \mathcal{V} \subseteq \mathbb{R}^n\). Suppose that \(\mathbf{u}\) is a random variable taking values in \(\mathcal{U}\) with a smooth probability density that is strictly positive a.e.. If the pushforward \(\mathbf{v} =  h(\mathbf{u})\) is uniformly distributed on \(\mathcal{V}\), then \( h\) is a global diffeomorphism; in other words, \( h\) is bijective and depends on every component of \(\mathbf{u}\).\footnote{We do not claim originality for this result due to its fundamental nature in topology and measure theory; rather, we detail it here as a tool for our subsequent arguments.}
\end{restatable}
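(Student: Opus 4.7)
The plan is to prove $\mathbf{h}$ is a global diffeomorphism in two stages: first show that the Jacobian $J_{\mathbf{h}}$ is nonsingular almost everywhere (which already yields the ``depends on every component'' conclusion), and then upgrade the resulting local diffeomorphism structure to a global bijection by exploiting the simple connectedness of $\mathcal{U}$ and $\mathcal{V}$. The tools are Sard's theorem, the change-of-variables formula together with the inverse function theorem, and basic covering-space theory.

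For the first stage, let $C := \{\mathbf{u} \in \mathcal{U} : \det J_{\mathbf{h}}(\mathbf{u}) = 0\}$, which is closed in $\mathcal{U}$ by continuity of $\det J_{\mathbf{h}}$. Sard's theorem gives that $\mathbf{h}(C)$ has Lebesgue measure zero in $\mathcal{V}$, so uniformity of $\mathbf{v}$ forces $P(\mathbf{v} \in \mathbf{h}(C)) = 0$ and hence $P(\mathbf{u} \in C) = 0$. Strict positivity of $p_{\mathbf{u}}$ then forces the Lebesgue measure of $C$ itself to be zero, and being closed, $C$ has empty interior. Consequently $\det J_{\mathbf{h}} \neq 0$ on the dense open set $R := \mathcal{U} \setminus C$, so no column of $J_{\mathbf{h}}$ vanishes identically and $\mathbf{h}$ genuinely depends on every coordinate of $\mathbf{u}$. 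The inverse function theorem then makes $\mathbf{h}|_R$ a local diffeomorphism onto the open set $\mathbf{h}(R) \subseteq \mathcal{V}$, and the same Sard/uniformity argument applied to $\mathcal{V} \setminus \mathbf{h}(R)$ shows this image is dense in $\mathcal{V}$.

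For the second stage, I would argue that $\mathbf{h}|_R$ is actually a covering map onto its image: the pointwise pushforward identity $p_{\mathbf{v}}(\mathbf{v}) = \sum_{\mathbf{u} \in R \cap \mathbf{h}^{-1}(\mathbf{v})} p_{\mathbf{u}}(\mathbf{u})/|\det J_{\mathbf{h}}(\mathbf{u})| = 1/\mathrm{Vol}(\mathcal{V})$ forces the sheet count to be locally finite and locally constant, which combined with the local homeomorphism property is precisely the covering-map condition. Since a connected covering of a simply connected base is a homeomorphism, $\mathbf{h}|_R$ must then be injective, and continuity of $\mathbf{h}$ extends this injectivity across the measure-zero set $C$ to give a bijection $\mathcal{U} \to \mathcal{V}$ whose inverse is smooth on $\mathbf{h}(R)$ by the inverse function theorem and hence everywhere by the density of $\mathbf{h}(R)$ in $\mathcal{V}$.

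The main obstacle is this covering/injectivity step: a smooth local diffeomorphism between simply connected open subsets of $\mathbb{R}^n$ need not be injective without some properness-type condition, so the density-preservation identity must be deployed carefully to rule out escape-to-boundary behavior that would let multiple sheets accumulate. A secondary delicate point is handling the possibly nonempty critical set $C$: one must verify that $C$ does not introduce branch points that would break either global injectivity or smoothness of the inverse, which I plan to handle by showing via continuity of $\det J_{\mathbf{h}}$ that any putative branching at a point of $C$ would propagate into a positive-measure subset of $R$ and contradict the already-established injectivity of $\mathbf{h}|_R$.
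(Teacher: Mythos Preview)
Your overall strategy---establish nonsingularity of the Jacobian, invoke the inverse function theorem for a local diffeomorphism, then use simple connectedness via covering-space theory to upgrade to a global bijection---is the same route the paper takes. Two points of comparison are worth noting. First, your Sard-theorem argument for the first stage is cleaner than the paper's: the paper applies the single-preimage change-of-variables formula $p_{\mathbf{v}}(\mathbf{v}) = p_{\mathbf{u}}(\mathbf{u})\,|\det J_{\mathbf{h}}(\mathbf{u})|^{-1}$ directly to conclude $\det J_{\mathbf{h}} \neq 0$, which is somewhat circular since that formula already presupposes local invertibility. Your Sard-based argument that the critical set has measure zero avoids this circularity. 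Second, the ``main obstacle'' you flag---that a local diffeomorphism between simply connected open sets is not automatically a covering map without a properness-type condition---is real, and the paper does not resolve it either: it simply asserts that ``standard covering space theory implies that $\mathbf{h}$ is a covering map'' and moves on. So your proposal is at least as rigorous as the paper's at every step, and your explicit acknowledgment of the covering-map gap is an improvement in honesty rather than a defect; the sheet-count argument you sketch via the multi-preimage density identity is the natural way to supply the missing properness, though making it fully airtight (controlling accumulation at $\partial\mathcal{U}$) would still require care beyond what either proof provides.
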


\smallskip
\begin{proof}[\textbf{Proof of \Cref{lem:uniformizing}}]
Let \(p_{\mathbf{u}}: \mathcal{U} \to \mathbb{R}\) and \(p_{\mathbf{v}}: \mathcal{V} \to \mathbb{R}\) denote the probability density functions of \(\mathbf{u}\) and \(\mathbf{v}\), respectively. Since \(\mathbf{v} =  h(\mathbf{u})\), the change-of-variables formula (refer to, e.g., \cite{bogachev2007measure}) yields
\[
p_{\mathbf{v}}(\mathbf{v}) = p_{\mathbf{u}}(\mathbf{u}) \cdot \Bigl| \det J( h)(\mathbf{u}) \Bigr|^{-1},
\]
where \(\mathbf{u}\) is any preimage of \(\mathbf{v}\) under \( h\). By assumption, \(\mathbf{v}\) is uniformly distributed on \(\mathcal{V}\); that is, there exists a constant \(C > 0\) such that
\[
p_{\mathbf{v}}(\mathbf{v}) = C \quad \text{for all } \mathbf{v} \in \mathcal{V}.
\]
Thus, for any \(\mathbf{u}\) with \(\mathbf{v} =  h(\mathbf{u})\) we obtain
\[
\Bigl| \det J( h)(\mathbf{u}) \Bigr|^{-1} = \frac{C}{p_{\mathbf{u}}(\mathbf{u})}.
\]
Since \(p_{\mathbf{u}}(\mathbf{u})\) is strictly positive a.e. on \(\mathcal{U}\), it follows that
\[
\Bigl| \det J( h)(\mathbf{u}) \Bigr|^{-1} > 0 \quad \text{a.s.}\quad \forall\,\mathbf{u}\sim p_{\mathbf{u}},
\]
or equivalently, \(\det J( h)(\mathbf{u}) \neq 0\) a.e.. By the Inverse Function Theorem (see, e.g., \cite{rudin1976principles}), this implies that \( h\) is a local diffeomorphism.

Moreover, since \(\mathcal{U}\) and \(\mathcal{V}\) are simply connected, open \(\mathcal{C}^1\) manifolds, standard covering space theory (refer to, e.g., the discussion around Theorem~1.38 in \cite{hatcher2002algebraic}) implies that \( h\) is a covering map. The uniformity of \(p_{\mathbf{v}}\) forces \( h\) to be surjective (otherwise, some points in \(\mathcal{V}\) would have zero density, contradicting uniformity). Since any covering map from a simply connected space is trivial, \( h\) must be equivalent to the identity covering. In other words, \( h\) is a homeomorphism onto \(\mathcal{V}\) and hence both injective and surjective (i.e., a bijection).

Finally, the fact that the Jacobian determinant is nonzero a.e. guarantees that \( h\) depends on all components of \(\mathbf{u}\); if any component were omitted, the rank of the Jacobian would drop, contradicting non-singularity. Furthermore, by the Global Inverse Function Theorem (refer to, e.g., \cite{ruzhansky2015global}), the inverse of \( h\) is smooth.

In summary, \( h\) is a global diffeomorphism from \(\mathcal{U}\) onto \(\mathcal{V}\). Consequently, it preserves all information of \(\mathbf{u}\): every variation in \(\mathbf{u}\) is reflected in \(\mathbf{v} =  h(\mathbf{u})\), and \(\mathbf{u}\) can be uniquely and smoothly recovered from \(\mathbf{v}\). This completes the proof.
\end{proof}

\subsection{Proof of Theorem 4.1}
\label{sec:proofthm}
We now proceed to prove \Cref{thm:idsemantics}. To begin, we restate the theorem for clarity:

\smallskip
\idsemantics*

\smallskip
\begin{proof}[\textbf{Proof of \Cref{thm:idsemantics}}]
The proof is organized into the following five steps:
\begin{itemize}[left=1.8em]
    \item First, we show that the objective function \(\mathcal{L}_{\text{\tiny SymAlignMaxEnt}}(f_x, f_t)\) achieves a global minimum value of \(0\). At this minimum, any pair of smooth functions \(f_x\) and \(f_t\) satisfying this condition must exhibit invariance across modalities. This invariance condition ensures that the learned image representations and text representations must align across all positive \(\mathbf{x}\) and \(\mathbf{t}\sx{(\theta)}\) pairs.
   \item Next, we prove that minimizing \(\mathcal{L}_{\text{\tiny SymAlignMaxEnt}}\) inherently eliminates any dependence of the learned representations on modality-specific variables \(\mathbf{m}_x\) or \(\mathbf{m}_t\). This ensures that the representations are restricted to the dependence on latent semantic variables.
   \item By contradiction, we further establish that any contribution from the omitted semantic variables induced by selection bias \(\theta\), i.e., \(\mathbf{s}_{\mathbb{I}_{\theta}^c}\), would violate the invariance condition established in Step 1. This guarantees that the representations exclude the dependence on omitted semantic variables.
  \item We then establish the exclusion of perturbed semantic variables influenced by perturbation bias \(\rho\), i.e., \(\mathbf{s}_{\mathbb{I}_\rho}\), from the learned representations, also by contradiction.
   \item Finally, we demonstrate that the optimized mappings, which compose with the underlying generative functions, are invertible with respect to the learned representations and the true unbiased semantic variables \(\mathbf{s}_{\mathbb{I}^c_\rho}\). This ensures that the representations block-identify the preserved unbiased semantic variables, thereby concluding the proof.   
\end{itemize}

\smallskip
\textbf{Step 1} (Global minimum and invariance condition)\textbf{.} 
Let \(f_x: \mathcal{X} \to (0, 1)^{n}\) and \(f_t: \mathcal{T}\sx{(\theta)} \to (0, 1)^{n}\) be any smooth functions attaining the global minimum. Define the smooth mappings:
\[
 h_x = f_x \circ g_x, \quad  h_t = f_t \circ g_{t\sx{(\theta)}}.
\]
Since all terms in \(\mathcal{L}_{\text{\tiny SymAlignMaxEnt}}\) are non-negative, and its global minimum is \(0\) by \Cref{lem:globalminimum}, each term in \(\mathcal{L}_{\text{\tiny SymAlignMaxEnt}}\) must vanish a.s. for any pairing \((\mathbf{x}, \mathbf{t}\sx{(\theta)})\). Thus, minimizing \(\mathcal{L}_{\text{\tiny SymAlignMaxEnt}}\) leads to:
\begin{gather}
    \E_{(\mathbf{x}, \mathbf{t}\sx{(\theta)}) \sim p_{\mathbf{x}}\,p_{\mathbf{t}\sx{(\theta)}|\mathbf{x}}} \left[\|f_x(\mathbf{x}) - f_t(\mathbf{t}\sx{(\theta)})\|_2\right] 
    = \E_{(\mathbf{z}_x, \mathbf{z}_{t\sx{(\theta)}}) \sim p_{\mathbf{z}_x}p_{\mathbf{z}_{t\sx{(\theta)}}|\mathbf{z}_x}} \left[\| h_x(\mathbf{z}_x) -  h_t(\mathbf{z}_{t\sx{(\theta)}})\|_2\right] = 0, \label{eq:loss_align} \\
    H\bigl(f_x(\mathbf{x})\bigr) = H\bigl( h_x(\mathbf{z}_x)\bigr) = 0, \label{eq:img_uniform} \\
    H\bigl(f_t(\mathbf{t}\sx{(\theta)})\bigr) = H\bigl( h_t(\mathbf{z}_{t\sx{(\theta)}})\bigr) = 0. \label{eq:text_uniform}
\end{gather}

From \cref{eq:loss_align}, it follows that 
\begin{align}
     h_t(\mathbf{z}_{t\sx{(\theta)}}) &=  h_x(\mathbf{z}_x) \quad \text{a.s.} \quad \forall\,\mathbf{z}_x\sim p_\mathbf{s}\,p_{\mathbf{m}_x},\;\mathbf{z}_{t\sx{(\theta)}} \sim p_{\tilde{\mathbf{s}}_{\mathbb{I}_\theta} \mid \mathbf{s}, \theta, \rho}\,p_{\mathbf{m}_t}, \label{eq:inv_condition} 
\end{align}
which ensures alignment of representations a.s. for any pair \((\mathbf{x}, \mathbf{t}\sx{(\theta)})\). The substitution of expectations for the image modality in \Cref{eq:loss_align} is valid because \(\mathbf{x}\) follows the pushforward distribution of \(\mathbf{z}_x\) under the deterministic diffeomorphism \(g_x\) (\Cref{eq:img_gen}); a similar argument applies to the text (\Cref{eq:caption_gen}).

\Cref{eq:img_uniform,eq:text_uniform} imply that \( h_x\) and \( h_t\) map the latent variables \(\mathbf{z}_x\) and \(\mathbf{z}_{t\sx{(\theta)}}\) onto uniform distributions over \((0, 1)^{n}\) (with \(n=|\mathbb{I}^c_\rho|\)), since their differential entropy equals to  zero.

\smallskip
\paragraph{Step 2} (Exclusion of modality-specific variables) \textbf{.} 
We now show that the smooth functions \( h_x\) and \( h_t\) depend only on latent semantic variables \(\mathbf{s}\) (with further exclusion of components in later steps) and not on modality-specific variables \(\mathbf{m}_x\) or \(\mathbf{m}_t\).

Since \(\mathbf{z}_x = (\mathbf{s}, \mathbf{m}_x)\) and \(\mathbf{z}_{t\sx{(\theta)}} = (\tilde{\mathbf{s}}_{\mathbb{I}_{\theta}}, \mathbf{m}_t)\) are the latent variables generating images \(\mathbf{x}\) and paired text \(\mathbf{t}\sx{(\theta)}\), respectively, the data-generating process in \Cref{sec:problemformulate} implies the following independence properties:
\begin{itemize}
    \item[(\textbf{c1})] \(\mathbf{m}_x\) is independent of \(\mathbf{z}_{t\sx{(\theta)}}\):  
    This means changes in \(\mathbf{m}_x\) do not influence \(\mathbf{z}_{t\sx{(\theta)}}\). Moreover, since the text generation process \(g_{t\sx{(\theta)}}\) is independent of \(\mathbf{m}_x\), it follows that:
    \begin{equation}
    \label{eq:ind_ht_mx}
         h_t(\tilde{\mathbf{s}}_{\mathbb{I}_{\theta}}, \mathbf{m}_t, \mathbf{m}_x) =  h_t(\tilde{\mathbf{s}}_{\mathbb{I}_{\theta}}, \mathbf{m}_t).
    \end{equation}
    \item[(\textbf{c2})] \(\mathbf{m}_t\) is independent of \(\mathbf{z}_x\):  
    This means changes in \(\mathbf{m}_t\) do not influence \(\mathbf{z}_x\). Similarly, since the image generation process \(g_x\) is independent of \(\mathbf{m}_t\), it follows that:
    \begin{equation}
    \label{eq:ind_hx_mt}
         h_x(\mathbf{s}, \mathbf{m}_x, \mathbf{m}_t) =  h_x(\mathbf{s}, \mathbf{m}_x).
    \end{equation}
\end{itemize}

Combining \Cref{eq:inv_condition,eq:ind_ht_mx}, we have:
\begin{equation}
\label{eq:expanded_align}
     h_x(\mathbf{s}, \mathbf{m}_x)
    =
     h_t(\tilde{\mathbf{s}}_{\mathbb{I}_{\theta}}, \mathbf{m}_t, \mathbf{m}_x),
    \quad
    \text{a.s.} \quad \forall\,\mathbf{z}_x\sim p_\mathbf{s}\,p_{\mathbf{m}_x},\;\mathbf{z}_{t\sx{(\theta)}} \sim p_{\tilde{\mathbf{s}}_{\mathbb{I}_\theta} \mid \mathbf{s}, \theta, \rho}\,p_{\mathbf{m}_t}.
\end{equation}

Consider a small perturbation \(\mathbf{m}_x \mapsto \mathbf{m}_x + \boldsymbol{\varsigma}\), where \(\|\boldsymbol{\varsigma}\|\) is arbitrarily small but remains within the open space \(\mathcal{M}_x\). Since changes in \(\mathbf{m}_x\) do not influence \( h_t\) by statement (\textbf{c1}), we obtain:
\begin{equation}
\label{eq:perturb_align}
   h_t(\tilde{\mathbf{s}}_{\mathbb{I}_{\theta}}, \mathbf{m}_t, \mathbf{m}_x + \boldsymbol{\varsigma})
  =
   h_t(\tilde{\mathbf{s}}_{\mathbb{I}_{\theta}}, \mathbf{m}_t, \mathbf{m}_x).
\end{equation}

Since \(\mathbf{m}_x\) is independent of \(\mathbf{s}\), perturbations in \(\mathbf{m}_x\) does not alter the semantic variables, and \(p_{\mathbf{m}_x} > 0\) a.e. over \(\mathcal{M}_x\) by \Cref{asm:continuous}. Thus, substituting  
\[
  (\mathbf{s}, \mathbf{m}_x)
  \mapsto 
  (\mathbf{s}, \mathbf{m}_x + \boldsymbol{\varsigma})
\]  
in \Cref{eq:expanded_align} and combining with \Cref{eq:perturb_align}, we get:
\[
     h_x(\mathbf{s}, \mathbf{m}_x +  \boldsymbol{\varsigma}) 
    =
     h_x(\mathbf{s}, \mathbf{m}_x).
\] 

By the smoothness of \( h_x\) (inherited from the smoothness of \(g_x\) and \(f_x\)), taking \(\boldsymbol{\varsigma}\to \mathbf{0}\) gives:
\[
\frac{\partial h_x}{\partial\mathbf{m}_x} = \lim_{\boldsymbol{\varsigma}\to \mathbf{0}}\frac{ h_x(\mathbf{s}, \mathbf{m}_x +  \boldsymbol{\varsigma}) -  h_x(\mathbf{s}, \mathbf{m}_x)}{\boldsymbol{\varsigma}} = 0.
\]
Thus, \( h_x\) is independent of \(\mathbf{m}_x\). 

A symmetric argument applies to \(\mathbf{m}_t\). If \(\mathbf{m}_t \mapsto \mathbf{m}_t + \boldsymbol{\varsigma}\) with \(\boldsymbol{\varsigma}\) a small enough perturbation, then by \Cref{eq:ind_hx_mt} in statement (\textbf{c2}), \( h_x(\mathbf{s}, \mathbf{m}_x, \mathbf{m}_t + \boldsymbol{\varsigma})\) remains unchanged. The invariance condition in \Cref{eq:inv_condition} then forces \( h_t(\tilde{\mathbf{s}}_{\mathbb{I}_{\theta}}, \mathbf{m}_t + \boldsymbol{\varsigma})\) to remain constant w.r.t. \(\boldsymbol{\varsigma}\), showing that \( h_t\) is independent of \(\mathbf{m}_t\). Therefore, the learned representations satisfy:
\begin{align}
 h_x(\mathbf{z}_x) &=  h_x(\mathbf{s}),  \quad \text{a.s.} \quad \forall\,\mathbf{z}_x\sim p_\mathbf{s}\,p_{\mathbf{m}_x}, \label{eq:no_mx_in_x} \\
 h_t(\mathbf{z}_{t\sx{(\theta)}}) &=  h_t(\tilde{\mathbf{s}}_{\mathbb{I}_{\theta}}),  \quad \text{a.s.} \quad \forall\, \mathbf{z}_{t\sx{(\theta)}} \sim p_{\tilde{\mathbf{s}}_{\mathbb{I}_\theta} \mid \mathbf{s}, \theta, \rho}\,p_{\mathbf{m}_t}. \label{eq:no_mt_in_t}
\end{align}

\smallskip
\paragraph{Step 3} (Exclusion of omitted semantic variables)\textbf{.} We now establish that the function $ h_x$ is independent of $\mathbf{s}_{\mathbb{I}^c_{\theta}}$, where $\mathbb{I}^c_{\theta} = \mathbb{I}_\mathbf{s}\setminus \mathbb{I}_{\theta}$. In other words, the learned representations do not contain information about the omitted semantic variables that are absent in the corresponding text.

Using the invariance condition in \Cref{eq:inv_condition}, together with the independence of modality-specific non-semantic variables in \Cref{eq:no_mx_in_x,eq:no_mt_in_t}, we have the following updated invariance condition:
\begin{equation}
\label{eq:inv_s}
 h_x(\mathbf{s}) \;=\;  h_t\!\bigl(\tilde{\mathbf{s}}_{\mathbb{I}_{\theta}}\bigr), \quad \text{a.s.} 
\quad \forall\,\mathbf{s}\sim p_\mathbf{s},\;\tilde{\mathbf{s}}_{\mathbb{I}_\theta} \sim p_{\tilde{\mathbf{s}}_{\mathbb{I}_\theta} \mid \mathbf{s}, \theta, \rho}.
\end{equation}
Next, we show by contradiction that $ h_x$ is independent of $\mathbf{s}_{\mathbb{I}^c_{\theta}}$. 
Suppose, for the sake of contradiction, that there exists a function $ h^c_x = f^c_x \circ g_x$ which depends on at least one component of the omitted semantic variables $\mathbf{s}_{\mathbb{I}^c_{\theta}}$. Formally,
\[
\exists\, l \in \mathbb{I}^c_{\theta},\; (\mathbf{s}^*_{\mathbb{I}_{\theta}}, \mathbf{s}^*_{\mathbb{I}^c_{\theta}}) \in \mathcal{S}, 
\quad \text{such that} \quad
\frac{\partial\,  h^c_x\bigl(\mathbf{s}^*_{\mathbb{I}_{\theta}}, \mathbf{s}^*_{\mathbb{I}^c_{\theta}}\bigr)}{\partial\, s^*_l} \;\neq\; 0.
\]
By the $C^1$ continuity of $ h^c_x$, guaranteed by the smoothness of both $g_x$ and $f^c_x$, and that \(\mathcal{S}\) is an open space, it follows that
\[
\exists\, \eta > 0 : 
s_l \;\mapsto\;  h^c_x\!\bigl(\mathbf{s}^*_{\mathbb{I}_{\theta}}, (s_{l}, \mathbf{s}^*_{\mathbb{I}^c_{\theta}\setminus\{l\}})\bigr) 
\quad \text{is strictly monotonic on} \quad (s^*_l-\eta,\; s^*_l+\eta),
\]
where $\mathbf{s}^*_{\mathbb{I}^c_{\theta}\setminus\{l\}}$ denotes all components of $\mathbf{s}^*_{\mathbb{I}^c_{\theta}}$ except $s_l$.

Since $p_\mathbf{s} > 0$ a.e. on $\mathcal{S}$, we can find two distinct realizations of latent semantic variables 
\begin{equation}
\label{eq:distinct_points}
\bigl(\mathbf{s}^*_{\mathbb{I}_{\theta}},\, (s^-_l,\mathbf{s}^*_{\mathbb{I}^c_{\theta}\setminus\{l\}})\bigr),\, 
\bigl(\mathbf{s}^*_{\mathbb{I}_{\theta}},\, (s^+_l,\mathbf{s}^*_{\mathbb{I}^c_{\theta}\setminus\{l\}})\bigr)
\quad \text{with} \;
s^-_l \,\in\, (s^*_l-\eta,\; s^*_l),
\,
s^+_l \,\in\, (s^*_l,\; s^*_l+\eta)
\end{equation}
that correspond to two different image observations, such that
\begin{equation}
\label{eq:contrad_neq_omit}
 h^c_x\!\bigl(\mathbf{s}^*_{\mathbb{I}_{\theta}},\, (s^-_{l}, \mathbf{s}^*_{\mathbb{I}^c_{\theta}\setminus\{l\}})\bigr) 
\;\neq\;
 h^c_x\!\bigl(\mathbf{s}^*_{\mathbb{I}_{\theta}},\, (s^+_{l}, \mathbf{s}^*_{\mathbb{I}^c_{\theta}\setminus\{l\}})\bigr).
\end{equation}
However, combining \Cref{eq:inv_s}, we have
\begin{equation}
\label{eq:contrad_eq_omit}
 h^c_x\!\bigl(\mathbf{s}^*_{\mathbb{I}_{\theta}},\, (s^-_{l}, \mathbf{s}^*_{\mathbb{I}^c_{\theta}\setminus\{l\}})\bigr) 
\;=\; 
 h^c_x\!\bigl(\mathbf{s}^*_{\mathbb{I}_{\theta}},\, (s^+_{l}, \mathbf{s}^*_{\mathbb{I}^c_{\theta}\setminus\{l\}})\bigr) 
\;=\;
 h_t\!\bigl(\tilde{\mathbf{s}}^*_{\mathbb{I}_{\theta}}\bigr),
\end{equation}
where \(\tilde{\mathbf{s}}^*_{\mathbb{I}_{\theta}}\) represents the perturbed semantic variables of \(\mathbf{s}^*_{\mathbb{I}_{\theta}}\) introduced by selection bias (with the exclusion of perturbed components further addressed in the next step).

\Cref{eq:contrad_neq_omit,eq:contrad_eq_omit} thus contradict each other. Hence, such a function~$ h^c_x$ cannot exist. Consequently, $ h_x$ must be independent of $\mathbf{s}_{\mathbb{I}^c_{\theta}}$. Formally,
\begin{equation}
\label{eq:no_omit_x}
     h_x(\mathbf{z}_x) \;=\;  h_x(\mathbf{s}_{\mathbb{I}_{\theta}}),
    \quad \text{a.s.}
    \quad \forall\,\mathbf{z}_x\sim p_\mathbf{s}\,p_{\mathbf{m}_x}.
\end{equation}

\begin{block}
\begin{restatable}[Causal Interpretations]{clar}{causal}
\label{clar:causal}
\textbf{(i)} {\emph{Justification for the existence of distinct points in \Cref{eq:distinct_points}.}} 
This follows from the assumption $p_\mathbf{s} > 0$ a.e.\ on $\mathcal{S}$ by \Cref{asm:continuous}. From a latent SCM perspective~\citep{pearl2009causality, von2024nonparametric}, even if a specific semantic component $s_l$ in $\mathbf{s}_{\mathbb{I}^c_{\theta}}$ is the ancestor node of some other semantic components in~$\mathbf{s}_{\mathbb{I}_{\theta}}$, the strict positivity of $p_\mathbf{s}$ ensures that the exogenous noise variables are well-defined. Thus, for different values of $s_l$, there exist corresponding noise values that keep $\mathbf{s}^*_{\mathbb{I}_{\theta}}$ remaining fixed. \textbf{(ii)} {\emph{What if the unknown causal structure is $\mathbf{s}_{\mathbb{I}^c_{\theta}} \to \mathbf{s}_{\mathbb{I}_{\theta}}$?}}
The potential causal influence from $\mathbf{s}_{\mathbb{I}^c_{\theta}}$ to $\mathbf{s}_{\mathbb{I}_{\theta}}$ does not resolve the contradiction. Independence here means that, once $\mathbf{s}_{\mathbb{I}_{\theta}}$ is set, there is no direct functional path from $\mathbf{s}_{\mathbb{I}^c_{\theta}}$ to the representations~$ h_x(\mathbf{s}_{\mathbb{I}_\theta})$, i.e., the causal influence among them is fully accounted for by the realized value of $\mathbf{s}_{\mathbb{I}_{\theta}}$.
\end{restatable}
\end{block}

In summary, these arguments show that $ h_x$ is genuinely independent of $\mathbf{s}_{\mathbb{I}^c_{\theta}}$, even allowing for arbitrary unknown causal interactions among the latent semantic variables.

\smallskip
\paragraph{Step 4} (Exclusion of perturbed semantic variables)\textbf{.} 
We now demonstrate that both representations are independent of \(\mathbf{s}_{\mathbb{I}_\rho}\) and \(\tilde{\mathbf{s}}_{\mathbb{I}_\rho}\) respectively, as a consequence of the contradiction between the invariance condition and the random perturbations introduced by perturbation bias.

First, we refine the invariance condition by excluding omitted semantic variables as established above. Combining \cref{eq:inv_condition,eq:no_mt_in_t,eq:no_omit_x}, we obtain:
\begin{equation}
\label{eq:inv_selected}
 h_x(\mathbf{s}_{\mathbb{I}_{\theta}}) \;=\;  h_t\!\bigl(\tilde{\mathbf{s}}_{\mathbb{I}_{\theta}}\bigr), \quad \text{a.s.} 
\quad \forall\,\mathbf{s}\sim p_\mathbf{s},\;\tilde{\mathbf{s}}_{\mathbb{I}_\theta} \sim p_{\tilde{\mathbf{s}}_{\mathbb{I}_\theta} \mid \mathbf{s}, \theta, \rho}.
\end{equation}

Next, we show that \( h_t\) must be independent of \(\tilde{\mathbf{s}}_{\mathbb{I}_\rho}\) by contradiction. Suppose, for a contradiction, that there exist some function \( h^c_t=f^c_t\circ g_{t\sx{(\theta)}}\) which depends on at least on component of the perturbed semantic variables \(\tilde{\mathbf{s}}_{\mathbb{I}_\rho}\). Formally,
\begin{equation*}
\exists\, l\in \mathbb{I}_\rho,\; (\tilde{\mathbf{s}}^*_{\mathbb{I}_\rho}, \tilde{\mathbf{s}}^*_{\mathbb{I}^c_\rho}),\quad \text{such that}\quad \frac{\partial  h^c_t(\tilde{\mathbf{s}}^*_{\mathbb{I}_\rho}, \tilde{\mathbf{s}}^*_{\mathbb{I}^c_\rho})}{\partial \tilde{s}^*_l} \neq 0.
\end{equation*}
By the \(C^1\) continuity of \( h^c_t\) guaranteed by the smoothness of \(f^c_t\) and \(g_{t\sx{(\theta)}}\), for some sufficiently small \(\eta > 0\), we have the following inequality:
\begin{equation}
\label{eq:contrad_perturb}
 h^c_t\bigl((s^-_l, \tilde{\mathbf{s}}^*_{\mathbb{I}_\rho\setminus\{l\}}), \tilde{\mathbf{s}}^*_{\mathbb{I}^c_\rho}\bigr) \neq  h^c_t\bigl((s^+_l, \tilde{\mathbf{s}}^*_{\mathbb{I}_\rho\setminus\{l\}}), \tilde{\mathbf{s}}^*_{\mathbb{I}^c_\rho}\bigr),\; \forall\,s^-_l\in (\tilde{s}^*_l-\eta, \tilde{s}^*_l),\, s^+_l\in (\tilde{s}^*_l, \tilde{s}^*_l+\eta).
\end{equation}

On the other hand, by the pairing conditions in \Cref{asm:pairing}, there exists at least one subset \(A \subseteq \mathbb{I}_\rho\) of perturbed semantic variables such that \(l \in A\) and \(p_{A}(A) > 0\). Pick one such set and call it \(A\). Define the realization of latent semantic variables corresponding to the image of this pair as \((\mathbf{s}^*_{A}, \mathbf{s}^*_{\mathbb{I}_{\theta} \setminus A})\) (here, we omit the latent semantic components that have already been excluded in previous steps). 

By \Cref{asm:pairing}, we know that \(\mathbf{s}^*_{\mathbb{I}_{\theta} \setminus A} = \tilde{\mathbf{s}}^*_{\mathbb{I}_{\theta} \setminus A}\) a.e., that is, \(({\mathbf{s}}_{\mathbb{I}_\rho \setminus A}, \mathbf{s}^*_{\mathbb{I}^c_\rho}) = (\tilde{\mathbf{s}}_{\mathbb{I}_\rho \setminus A}, \tilde{\mathbf{s}}^*_{\mathbb{I}^c_\rho})\) a.e.. Thus, we can rewrite the corresponding realization of textual semantic variables \(\tilde{\mathbf{s}}^*_{\mathbb{I}_\theta}=(\tilde{\mathbf{s}}^*_{\mathbb{I}_\rho}, \tilde{\mathbf{s}}^*_{\mathbb{I}^c_\rho})\) as \((\tilde{\mathbf{s}}^*_{A}, \mathbf{s}^*_{\mathbb{I}_{\theta} \setminus A})\). 

Further, also by \Cref{asm:pairing}, there exists a non-empty open subspace \(\mathcal{O}_A \subseteq \mathcal{S}_A\) such that \(p_{\tilde{\mathbf{s}}_A|\mathbf{s}_A}(\cdot|\mathbf{s}^*_A)\) is strictly positive on \(\mathcal{O}_A\). Since the perturbed random variable \(\tilde{\mathbf{s}}^*_A\) is a realization within this open subspace, we know it lies in \(\mathcal{O}_A\) and \(\mathcal{O}_A\) is non-empty. Moreover, because \(p_{\tilde{\mathbf{s}}_A|\mathbf{s}_A}(\cdot|\mathbf{s}^*_A)\) is smooth and strictly positive on \(\mathcal{O}_A\), there exists a sufficiently small \(\eta_1 > 0\) such that
\[
p_{\tilde{\mathbf{s}}_A|\mathbf{s}_A}(\tilde{\mathbf{s}}_A|\mathbf{s}^*_A) > 0, \; \forall\, \tilde{\mathbf{s}}_A \in \{\tilde{\mathbf{s}}^*_{A\setminus\{l\}}\}\times (\tilde{s}^*_l-\eta_1, \tilde{s}^*_l+\eta_1), \; \text{with} \; \{\tilde{\mathbf{s}}^*_{A\setminus\{l\}}\}\times (\tilde{s}^*_l-\eta_1, \tilde{s}^*_l+\eta_1) \subseteq \mathcal{O}_A.
\]

Thus, with a positive probability guaranteed by the above conditional, for the realization of image semantic variables \((\mathbf{s}^*_A, \mathbf{s}^*_{\mathbb{I}_{\theta} \setminus A})\), we can construct two distinct realizations of perturbed semantic variables for generating different text (due to \(p_\mathbf{s} > 0\) a.e.):
\[
\bigl((s_l', \tilde{\mathbf{s}}^*_{A \setminus \{l\}}), \mathbf{s}^*_{\mathbb{I}_{\theta} \setminus A}\bigr), \;\; \bigl((s_l'', \tilde{\mathbf{s}}^*_{A \setminus \{l\}}), \mathbf{s}^*_{\mathbb{I}_{\theta} \setminus A}\bigr),
\]
where
\[
s_l' \in (\tilde{s}^*_l - \eta_2, \tilde{s}^*_l), \quad s_l'' \in (\tilde{s}^*_l, \tilde{s}^*_l + \eta_2) \quad \text{with} \quad \eta_2 = \min(\eta, \eta_1).
\]

Based on the invariance condition established in \cref{eq:inv_selected}, we have the following equalities:
\[
 h^c_t\bigl((s_l', \tilde{\mathbf{s}}^*_{A \setminus \{l\}}), \mathbf{s}^*_{\mathbb{I}_{\theta} \setminus A}\bigr) =  h^c_t\bigl((s_l'', \tilde{\mathbf{s}}^*_{A \setminus \{l\}}), \mathbf{s}^*_{\mathbb{I}_{\theta} \setminus A}\bigr) =  h_x(\mathbf{s}^*_A, \mathbf{s}^*_{\mathbb{I}_{\theta} \setminus A}).
\]
This is contradicted by the inequality established in \cref{eq:contrad_perturb}, which implies that such a \( h^c_t\) cannot exist. Consequently, any \( h_t\) minimizing the loss must be independent of the perturbed semantic variables \(\tilde{\mathbf{s}}_{\mathbb{I}_\rho}\), i.e.,
\begin{equation}
\label{eq:no_perturb_t}
 h_t(\tilde{\mathbf{s}}_{\mathbb{I}_{\theta}}) =  h_t(\mathbf{s}_{\mathbb{I}^c_\rho}),  \quad \text{a.s.} \quad \forall\,\tilde{\mathbf{s}}_{\mathbb{I}_\theta} \sim p_{\tilde{\mathbf{s}}_{\mathbb{I}_\theta} \mid \mathbf{s}, \theta, \rho}.
\end{equation}

Updating the invariance condition, and combining \cref{eq:inv_selected,eq:no_perturb_t}, we obtain:
\begin{equation}
\label{eq:align_temp}
 h_x(\mathbf{s}_{\mathbb{I}_{\theta}}) =  h_t(\mathbf{s}_{\mathbb{I}^c_\rho}),
\quad \text{a.s.} 
\quad \forall\,\mathbf{s}\sim p_\mathbf{s}.
\end{equation}

The exclusion of \(\mathbf{s}_{\mathbb{I}_\rho}\) from image representations \( h_x(\mathbf{s}_{\mathbb{I}_{\theta}})\) can be demonstrated by a similar procedure to \textbf{Step 3}, namely, that excluded semantic components from one modality cannot exist in another view, regardless of the latent causal structure among \(\mathbf{s}_{\mathbb{I}_\theta}\). Specifically, fixing the value of \(\mathbf{s}_{\mathbb{I}^c_\rho}\) and varying \(\mathbf{s}_{\mathbb{I}_\rho}\) within a small region, we can sample distinct semantic variables (due to \(p_\mathbf{s} > 0\) a.e. over \(\mathcal{S}\) by \Cref{asm:continuous}). The smoothness of \( h_x\) then leads to an inequality if it depends on any component in \(\mathbb{I}_\rho\). This inequality contradicts the alignment condition established in \cref{eq:align_temp}. Thus, \( h_x\) must also be independent of \(\mathbf{s}_{\mathbb{I}_\rho}\).

Overall, due to the exclusion of modality-specific variables (\(\mathbf{m}_x\) and \(\mathbf{m}_t\)), omitted semantic variables (\(\mathbf{s}_{\mathbb{I}^c_{\theta}}\)) and perturbed semantic variables (\(\mathbf{s}_{\mathbb{I}_\rho}\)) introduced by selection and perturbation biases for generating text, we now have the following equalities:
\begin{align}
 h_x(\mathbf{z}_x) &=  h_x(\mathbf{s}_{\mathbb{I}^c_\rho}),  \quad \text{a.s.} \quad \forall\,\mathbf{z}_x\sim p_\mathbf{s}\,p_{\mathbf{m}_x}, \label{eq:retained_x_reps} \\
 h_t(\mathbf{z}_{t\sx{(\theta)}}) &=  h_t(\mathbf{s}_{\mathbb{I}^c_\rho}),  \quad \text{a.s.} \quad \forall\,\mathbf{z}_{t\sx{(\theta)}} \sim p_{\tilde{\mathbf{s}}_{\mathbb{I}_\theta} \mid \mathbf{s}, \theta, \rho}\,p_{\mathbf{m}_t}. \label{eq:retained_t_reps}
\end{align}

\smallskip
\paragraph{Step 5} (Preservation of all unbiased semantic variables)\textbf{.} Based on \cref{eq:retained_x_reps,eq:retained_t_reps}, we define the learned image and textual representations as
\[
\hat{\mathbf{z}}_x =  h_x(\mathbf{s}_{\mathbb{I}^c_\rho}),\; \hat{\mathbf{z}}_t =  h_t(\mathbf{s}_{\mathbb{I}^c_\rho}), \quad \text{with} \quad \hat{\mathbf{z}}_x \in (0,1)^n,\;\hat{\mathbf{z}}_t \in (0,1)^n.
\]
According to \Cref{eq:img_uniform,eq:text_uniform}, we know that the learned representations in both modalities are uniformly distributed over \((0,1)^{n}\). By directly applying \cref{lem:uniformizing}, it follows that the learned representations \(\hat{\mathbf{z}}_x\) (and also \(\hat{\mathbf{z}}_t\)) include \emph{all} and \emph{only} the information of the unaltered semantic components \(\mathbf{s}_{\mathbb{I}^c_\rho}\) a.s., and that \( h_x\) (and similarly \( h_t\)) is invertible. Consequently, the true modality-shared semantic variables \(\mathbf{s}_{\mathbb{I}^c_\rho}\) are block-identified by \(f_x\) and \(f_t\) in the sense of \Cref{def:blockid}.

Thereupon, the proof of \Cref{thm:idsemantics} is complete.
\end{proof}

\subsection{Proof of Corollary 4.1}
We now proceed to prove \Cref{cor:allsemantics}. To begin, we restate the corollary for clarity:
\allsemantics*
\begin{proof}
As we have fixed a graded lexicographic order over the range of \(\theta\), that is, over \(\mathcal{P}^+(\mathbb{I}_\mathbf{s})\) as defined in \Cref{defn:selection}. Then, setting \(\theta = 2^{n_s} - 1\) corresponds to selecting the full set of semantic variables for text generation, i.e.,  \(\mathbb{I}_\theta=\mathbb{I}_\mathbf{s}\).

Furthermore, we have similarly fixed a graded lexicographic order over the range of \(\rho\), i.e., over \(\mathcal{P}_{\text{prop}}(\mathbb{I}_\mathbf{s})\), as defined in \Cref{defn:perturbation}. Given that \(\mathbb{I}_\theta = \mathbb{I}_\mathbf{s}\), setting \(\rho = 1\) implies that all semantic variables \(\mathbf{s}\) are preserved without perturbation during the generation of the corresponding text \(\mathbf{t}\sx{(\theta)}\).

Under these assumptions, and by \Cref{asm:pairing}, the perturbing subset \(A\) is always trivial because \(\mathbb{I}_\rho\) is trivial. Consequently, we have
\[
p_{\tilde{\mathbf{s}}_{\mathbb{I}_\theta} \mid \mathbf{s}, \theta, \rho}(\tilde{\mathbf{s}}_{\mathbb{I}_\theta}|\mathbf{s}) = \delta(\mathbf{s} - \mathbf{s}) \quad \text{with}\quad \mathbb{I}_\theta = \mathbb{I}_\mathbf{s},\; \mathbb{I}_\rho = \emptyset \quad \text{a.s.} \quad \forall\, \mathbf{s} \sim p_\mathbf{s},
\]
which indicates that \(\tilde{\mathbf{s}} = \mathbf{s}\) almost surely.

Now consider any pair of smooth functions \(f_x : \mathcal{X} \to (0,1)^{n_s}\) and \(f_{t\sx{(\theta)}} : \mathcal{T\sx{(\theta)}} \to (0,1)^{n_s}\) that achieve the global minimum of the loss in \Cref{eq:loss_thm}, i.e., yield a value of zero. From \textbf{Step 1} and \textbf{Step 2} of the proof of \Cref{thm:idsemantics}, it follows that:
\[
 h_x(\mathbf{z}_x) =  h_t(\mathbf{z}_{t\sx{(\theta)}}) =  h_x(\mathbf{s}) =  h_t(\mathbf{s}) \quad \text{a.s.} \quad \forall\,\mathbf{z}_x\sim p_\mathbf{s}\,p_{\mathbf{m}_x},\;\mathbf{z}_{t\sx{(\theta)}} \sim p_{\tilde{\mathbf{s}}_{\mathbb{I}_\theta} \mid \mathbf{s}, \theta, \rho}\,p_{\mathbf{m}_t}.
\]

Since both the omitted and perturbable index subsets are trivial, we may directly apply \Cref{lem:uniformizing}, which implies that the full semantic vector \(\mathbf{s}\) is block-identified by both \(f_x\) and \(f_t\). 

This completes the proof.
\end{proof}

\begin{restatable}[Graded lexicographic order]{defn}{lexico_order}
\label{defn:order}
Let $S$ be a finite set with a total order (e.g., $S = \{1, 2, \dots, n\}$). The \emph{graded lexicographic order} on the non-empty subsets of $S$ is defined as follows: \textbf{(i)} Subsets are ordered first by increasing cardinality (i.e., all subsets of size $k$ precede those of size $k+1$). \textbf{(ii)} Within each group of equal cardinality, subsets are ordered lexicographically: that is, $\{i_1, i_2, \dots, i_k\} < \{j_1, j_2, \dots, j_k\}$ if there exists an $\ell$ such that $i_m = j_m$ for all $m < \ell$ and $i_\ell < j_\ell$.
\end{restatable}

\begin{block}
\begin{restatable}[Fixed graded lexicographic order]{clar}{fixedorder}
\label{clar:fixedorder}
The \emph{graded lexicographic order} over subsets of semantic indices is fixed solely for notational consistency and clarity. For example, given semantic variables \(\mathbf{s} = (s_1, s_2, s_3)\) with index set \(\mathbb{I}_\mathbf{s} = \{1, 2, 3\}\), the graded lexicographic order over the non-empty powerset \(\mathcal{P}^+(\mathbb{I}_\mathbf{s})\) yields:
\[
\mathcal{P}^+(\mathbb{I}_\mathbf{s}) = \{\{1\}, \{2\}, \{3\}, \{1,2\}, \{1,3\}, \{2,3\}, \{1,2,3\}\}.
\]
This ordering imposes no constraints on the underlying latent structure and is adopted without loss of generality. Since the true permutation of latent variables is unidentifiable, any consistent order can be used to index subset-valued parameters such as \(\theta\) (selection) and \(\rho\) (perturbation). Importantly, both \(\theta\) and \(\rho\) act indirectly through text observations rather than directly operating on the latent space. The fixed order simply determines how each value of \(\theta\) or \(\rho\) maps to a subset of semantic indices. For instance, omitting color in a caption corresponds to removing the associated latent variable---\emph{which} index of \(\theta\) encodes this depends on the fixed ordering. Thus, adopting a graded lexicographic order ensures a reproducible indexing scheme without limiting model generality.
\end{restatable}
\end{block}

\clearpage
\subsection{Proof of Corollary 4.2}
We now proceed to prove \Cref{cor:invsemantics}. For clarity, we restate the corollary below:
\invsemantics*

\begin{proof}
Under the OOD setting, and without loss of generality, let the subset of semantic variables susceptible to distribution shift be denoted by \(\mathbb{I}_{\text{var}} = \mathbb{I}^1_{\text{var}} \cup \mathbb{I}^2_{\text{var}}\). Suppose the index set associated with selection bias is given by \(\mathbb{I}_\theta = \mathbb{I}^1_{\text{var}} \cup \mathbb{I}_{\text{inv}}\), and that the perturbation bias acts on \(\mathbb{I}_\rho = \mathbb{I}^1_{\text{var}}\). That is, the subset \(\mathbb{I}^2_{\text{var}}\) is entirely omitted by the selection mechanism (i.e., excluded from \(\mathbb{I}_\theta\)), while the subset \(\mathbb{I}^1_{\text{var}}\) is included but remains vulnerable to perturbation, as determined by \(\rho\).

Given this structure, we directly apply the argument from the proof of \Cref{thm:idsemantics}. The omission of variables in \(\mathbb{I}^2_{\text{var}}\), together with the perturbation of variables in \(\mathbb{I}^1_{\text{var}}\), ensures that only the invariant subset \(\mathbb{I}_{\text{inv}}\) is both selected and unperturbed. Therefore, the invariant semantic components \(\mathbf{s}_{\mathbb{I}_{\text{inv}}}\) are block-identified via smooth functions  
\[
f_x: \mathcal{X} \to (0, 1)^{|\mathbb{I}_{\text{inv}}|} \quad \text{and} \quad f_t: \mathcal{T}\sx{(\theta)} \to (0, 1)^{|\mathbb{I}_{\text{inv}}|},
\]
which attain the global minimum of the alignment objective.

This concludes the proof.
\end{proof}

\clearpage
\section{Numerical Simulation Details}
\label{sec:numeric_details}

We provide additional details on the numerical simulations that are not fully covered in \Cref{sec:numeric_exp}. Specifically:

\begin{itemize}[left=1.8em]
    \item In \Cref{sec:num_set_details}, we outline the experimental setup, including hyperparameters, model architectures, and the construction of downstream tasks.
    \item In \Cref{sec:num_ident_details}, we present additional experiments that further validate our theoretical findings.
    \item In \Cref{sec:num_task_details}, we analyze downstream performance under various perturbation bias settings.
\end{itemize}

\subsection{Detailed Experimental Setup}
\label{sec:num_set_details}

\paragraph{Latent space construction.} 
We define a latent space comprising a semantic subspace \(\mathcal{S} \subseteq \mathbb{R}^{10}\) and two modality-specific subspaces \(\mathcal{M}_x, \mathcal{M}_t \subseteq \mathbb{R}^5\). Variables are sampled from Gaussian priors: 
\[
\mathbf{s}\sim \mathcal{N}(0, \Sigma_{\mathbf{s}}),\quad \mathbf{m}_x \sim \mathcal{N}(0, \Sigma_{\mathbf{m}_x}), \quad \mathbf{m}_t \sim \mathcal{N}(0, \Sigma_{\mathbf{m}_t}).
\]
For \emph{independent} semantic variables, \(\Sigma_{\mathbf{s}} = I_{10}\) yields a factorized standard Gaussian, aligning with nonlinear ICA assumptions \citep{khemakhem2020variational, sorrenson2020disentanglement}. For \emph{dependent} semantics, we sample \(\Sigma_{\mathbf{s}}\) from Wishart distribution \(\mathcal{W}_{10}(I, 10)\) (identity scale, 10 degrees of freedom), introducing latent dependencies as in \citep{von2021self, daunhawer2022identifiability, yao2023multi}. Modality-specific covariances \(\Sigma_{\mathbf{m}_x}\) and \(\Sigma_{\mathbf{m}_t}\) are sampled similarly from \(\mathcal{W}_{5}(I, 5)\), the Wishart distribution with identity scale and 5 degrees of freedom.

\paragraph{Selection and perturbation biases.} 
Given the 10-dimensional semantic space, the total number of nonempty subsets is \(2^{10} - 1 = 1023\), defining the full range of \textit{selection bias} configurations \(\theta\). For tractability, we choose 10 representative selection subsets \(\mathbb{I}_\theta\), detailed in \Cref{tab:selection_bias}.
  
To isolate \textit{perturbation bias}, we fix full selection (\(\theta=1023\)) and evaluate 10 representative perturbation subsets \(\mathbb{I}_\rho\), as listed in \Cref{tab:perturbation_bias}. Perturbations are introduced by independently modifying each semantic variable \(s_i\) in \(\mathbb{I}_\rho\) with probability 0.75 using additive Gaussian noise:
\[
\tilde{s}_i = s_i + \mathcal{N}(0, \Sigma_{\epsilon, i}),
\]
where the joint noise covariance \(\Sigma_{\epsilon}\) is sampled from \(\mathcal{W}_{10}(I, 10)\). Perturbations are applied solely to the text modality, aligning the proposed LVM and assumptions stated in \Cref{sec:problemformulate}.

In addition to the isolated bias settings, we consider a \textit{joint bias} scenario to investigate compounding effects. Specifically, we select \(\mathbb{I}_\theta = [8]\) (corresponding \(\theta=968\), excluding the last two semantic indices), and apply perturbations to the first two indices (\(\mathbb{I}_\rho=[2]\), correspnding to \(\rho=12\)). 

Results across these 20 representative configurations (10 selection settings, 10 perturbation settings) and the joint bias case serve to validate the theoretical findings under a range of misalignment settings.

\begin{table}[ht]
\caption{Selection bias settings and selected semantic indices.}
\label{tab:selection_bias}
\centering
    \begin{tabular}{@{}c|cccccccccc@{}}
    \toprule
       \(\theta\), \(\rho=1\)  & \(1\)   & \(11\)   & \(56\)   & \(176\)   & \(386\)   & \(638\)   & \(848\)   & \(968\)   & \(1013\)   & \(1023\) \\
    \midrule
       \(\mathbb{I}_{\theta}\)  & \(\{1\}\)   & \([2]\)   & \([3]\)   & \([4]\)  & \([5]\)   & \([6]\)    & \([7]\)   & \([8]\)   & \([9]\)   & \([10]\) \\
    \bottomrule
    \end{tabular}%
\end{table}

\begin{table}[ht]
\caption{Perturbation bias settings and perturbable semantic indices.}
\label{tab:perturbation_bias}
\centering
    \begin{tabular}{@{}c|cccccccccc@{}}
    \toprule
       \(\rho\mid\theta=1023\)  & \(1\) & \(2\)  & \(12\)   & \(57\)   & \(177\)   & \(387\)   & \(639\)   & \(849\)   & \(969\)   & \(1014\)  \\
    \midrule
       \(\mathbb{I}_{\rho}\)  & \(\emptyset\) & \(\{1\}\)   & \([2]\)   & \([3]\)   & \([4]\)  & \([5]\)   & \([6]\)    & \([7]\)   & \([8]\)   & \([9]\)  \\
    \bottomrule
    \end{tabular}%
\end{table}

\paragraph{Parameter settings.} 
We parameterize the generative functions \(g_x\) and \(g_{t\sx{(\theta)}}\) using randomly initialized 3-layer invertible MLPs, following prior work~\citep{daunhawer2022identifiability}. Invertibility is enforced by maintaining a condition number threshold of \(1\mathrm{e}{-}3\) for each layer.

The encoding functions \(f_x\) and \(f_t\) are implemented as 7-layer MLPs and optimized using the Adam optimizer. For MMCL training, we use a batch size of \(6144\), a learning rate of \(1\mathrm{e}{-}4\), and train for \(100{,}000\) iterations. The loss function is given by \Cref{eq:loss_thm}, with Euclidean distance used as the similarity metric and a temperature parameter set to \(1.0\). To ensure training stability, gradients are clipped using a maximum 2-norm of \(2\).

All experiments are run with three distinct random seeds, and we report averaged \(R^2\) values, clipped to the interval \([0,1]\) for interpretability.

\paragraph{Downstream tasks design.}
To evaluate pretrained representations under various bias conditions, we construct several downstream tasks. Specifically, four regression tasks are created by generating labels using complex nonlinear functions \(f_{y_i}\), each applied to different subsets of the true semantic variables:
\[
y_1 = f_{y_1}(\mathbf{s}_{[3]}), \quad
y_2 = f_{y_2}(\mathbf{s}_{[5]}), \quad
y_3 = f_{y_3}(\mathbf{s}_{[7]}), \quad
y_4 = f_{y_4}(\mathbf{s}_{[9]}).
\]
Each function \(f_{y_{(\cdot)}}: \mathbb{R}^d \rightarrow \mathbb{R}\) includes quadratic, cubic, pairwise, and triple-wise interaction terms, as well as sinusoidal, logarithmic, and exponential transformations. The full formulation for a semantic vector \(\mathbf{s}_{[d]}\) is:
\begin{gather*}
f_{y_{(\cdot)}}(\mathbf{s}_{[d]}) = \sum_{i=1}^d s_i^2 
+ 0.3 \sum_{i=1}^d s_i^3
+ 0.5 \sum_{1 \leq i < j \leq d} s_i s_j
+ 0.2 \sum_{1 \leq i < j < k \leq d} s_i s_j s_k \\
\quad + 0.7 \sum_{i=1}^d \left( \sin(s_i) + \cos(s_i) \right)
+ 0.4 \sum_{i=1}^d \log(1 + |s_i|)
+ 0.4 \sum_{i=1}^d e^{-|s_i|}.
\end{gather*}

For the classification task, labels are obtained by binarizing \(y_2\) at its median value, which serves as the decision boundary. To simulate distribution shifts in the observations \(\mathbf{x}\), we apply a skewed, heavy-tailed transformation to semantic dimensions \(9\) and \(10\):
\[
s_i^{\text{ood}} = 2\, \mathrm{sign}(s_i^{\text{id}}) \cdot |s_i^{\text{id}}|^2, \quad \text{for } i \in \{9, 10\}.
\]

For both downstream tasks, we \emph{fix} the pretrained encoders and evaluate the quality of the learned representations using a two-layer MLP as a probing model. We generate \(20{,}480\) samples as the evaluation set for training the regressors and classifiers, along with an additional \(20{,}480\) samples as the in-distribution test set. To assess OOD generalization, we generate another \(20{,}480\) samples from the shifted latent space as the OOD test set.

The regressors are trained using Mean Squared Error (MSE) loss, and the classifiers use Cross-Entropy loss, both with a learning rate of \(10^{-3}\) and trained for \(10{,}000\) steps. We report classification performance using the average Matthews Correlation Coefficient (MCC). Importantly, in the OOD setting, we perform \emph{no adaptation or fine-tuning}; the classifier is evaluated directly to test the generalization capability of the pretrained representations.

\subsection{Additional Identification Results}
\label{sec:num_ident_details}

\paragraph{Results under perturbation bias.} 
As shown in \Cref{fig:numeric_nonlinear_perturb}, the results of predicting true latent semantic variables under various perturbation bias settings exhibit similar trends to those observed under selection bias. Modality-specific variables are consistently discarded, unbiased semantic variables are faithfully block-identified, and misaligned semantic variables become partially identifiable in scenarios with dependent latent variables---demonstrating a consistent pattern across both image and text modalities. These findings further support and reinforce our theoretical analysis.

\begin{figure}[ht]
    \centering
    \includegraphics[width=0.271\textwidth]{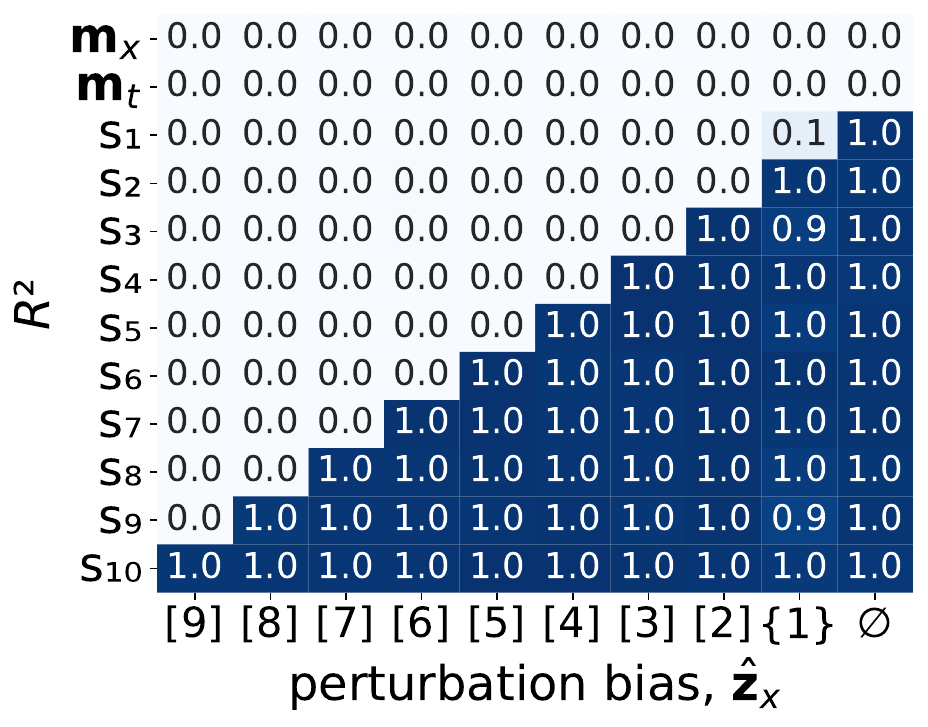}
    \hfill
    \includegraphics[width=0.228\textwidth]{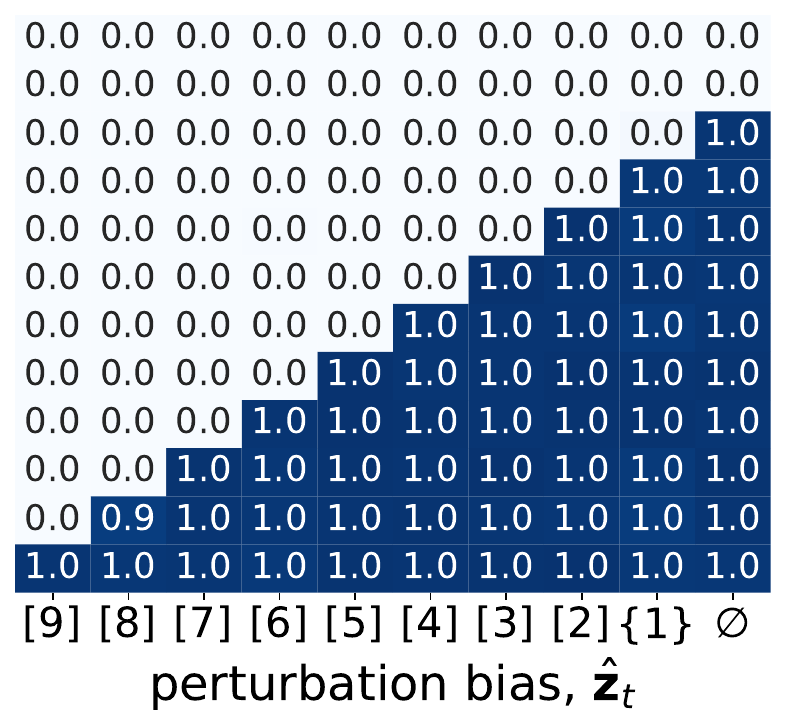}
    \hfill
    \includegraphics[width=0.228\textwidth]{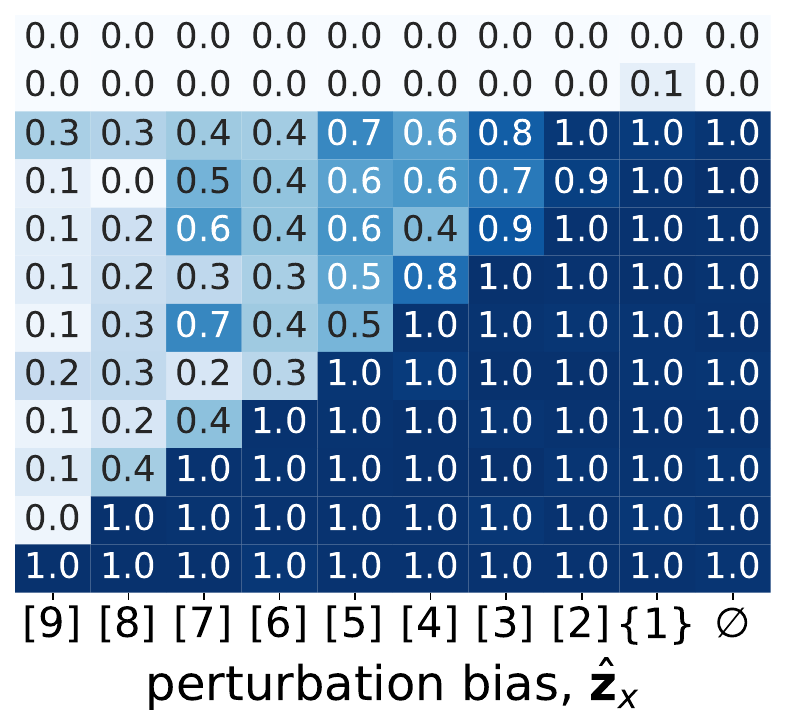}
    \hfill
    \includegraphics[width=0.228\textwidth]{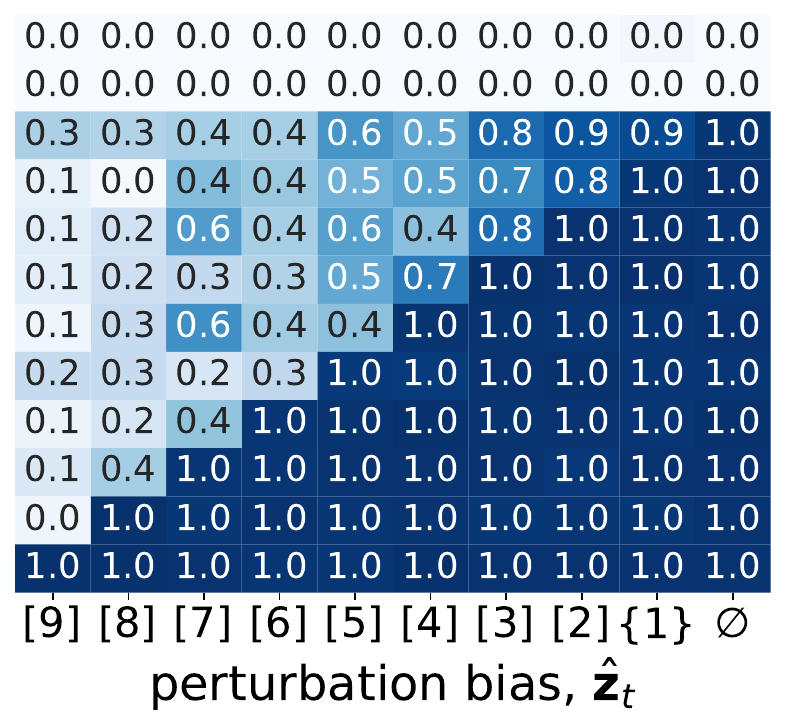}
    \caption{Mean \(R^2\) scores under perturbation bias settings. Left to right: predictions using representations \(\hat{\mathbf{z}}_x\) and \(\hat{\mathbf{z}}_t\) under independent latent semantic variables, followed by those under dependent latent semantic variables.}
    \label{fig:numeric_nonlinear_perturb}
\end{figure}

\begin{figure}[ht]
    \centering
    \includegraphics[width=0.271\textwidth]{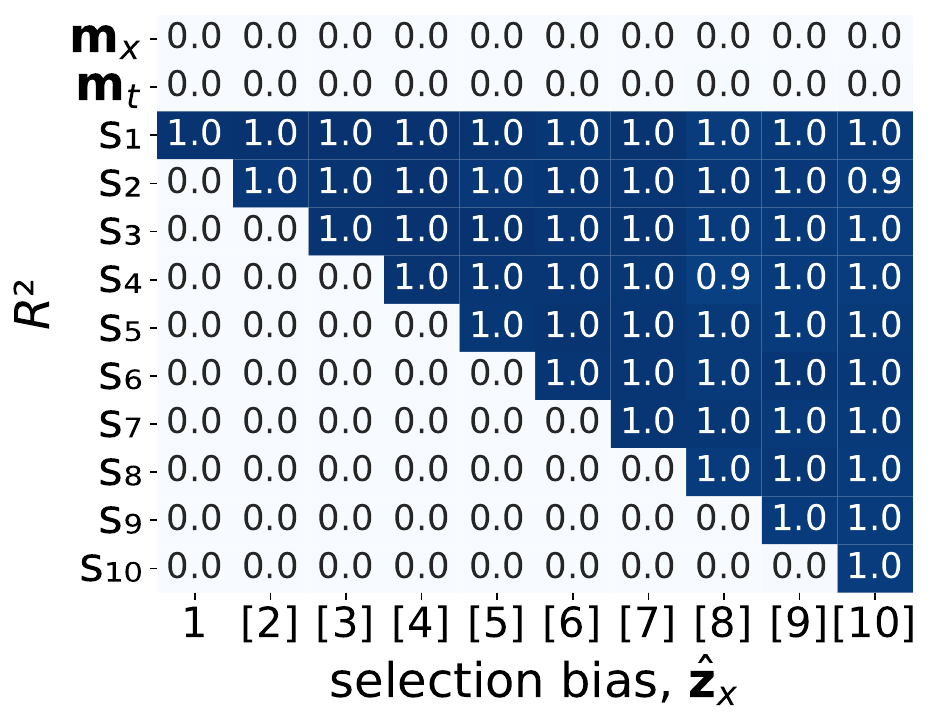}
    \hfill
    \includegraphics[width=0.228\textwidth]{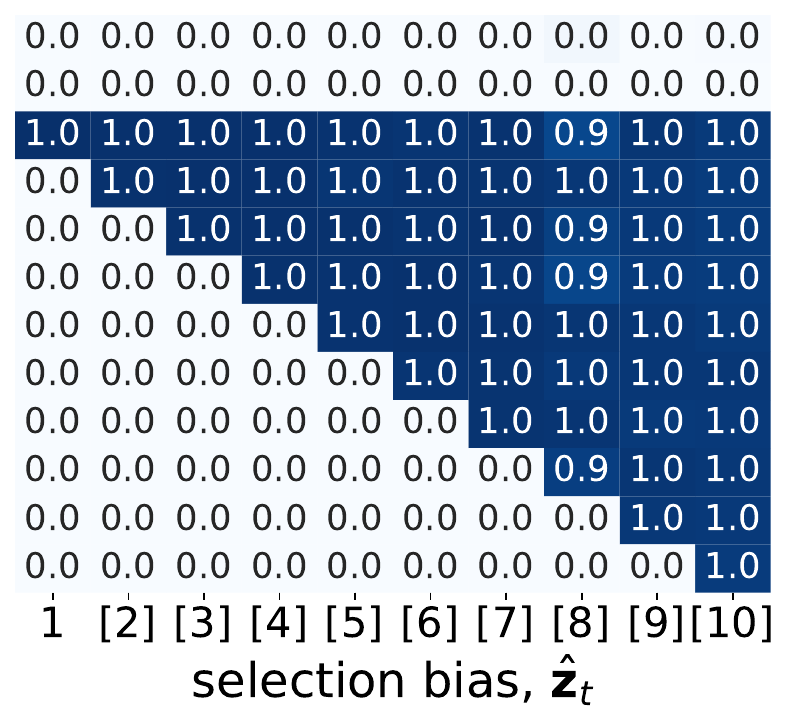}
    \hfill
    \includegraphics[width=0.228\textwidth]{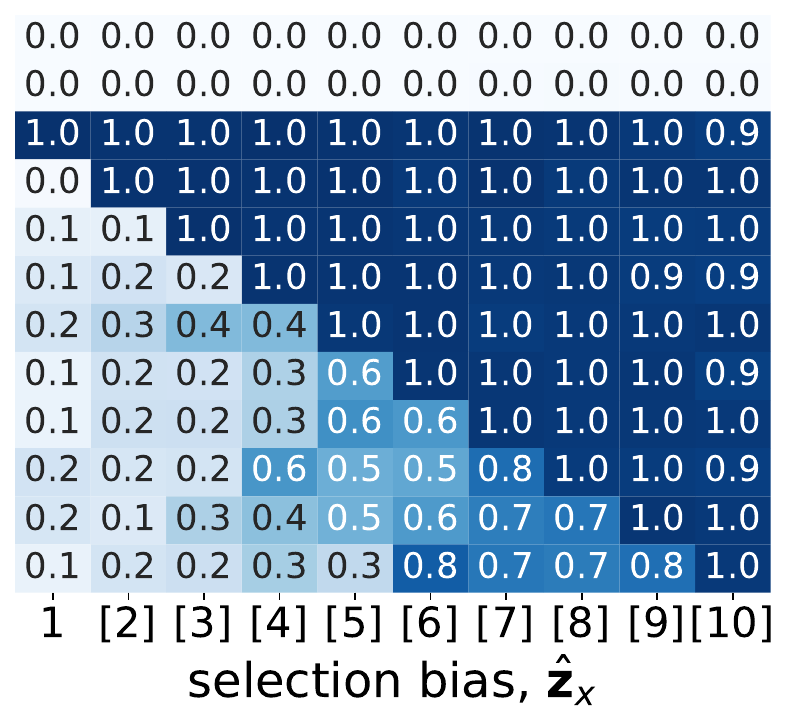}
    \hfill
    \includegraphics[width=0.228\textwidth]{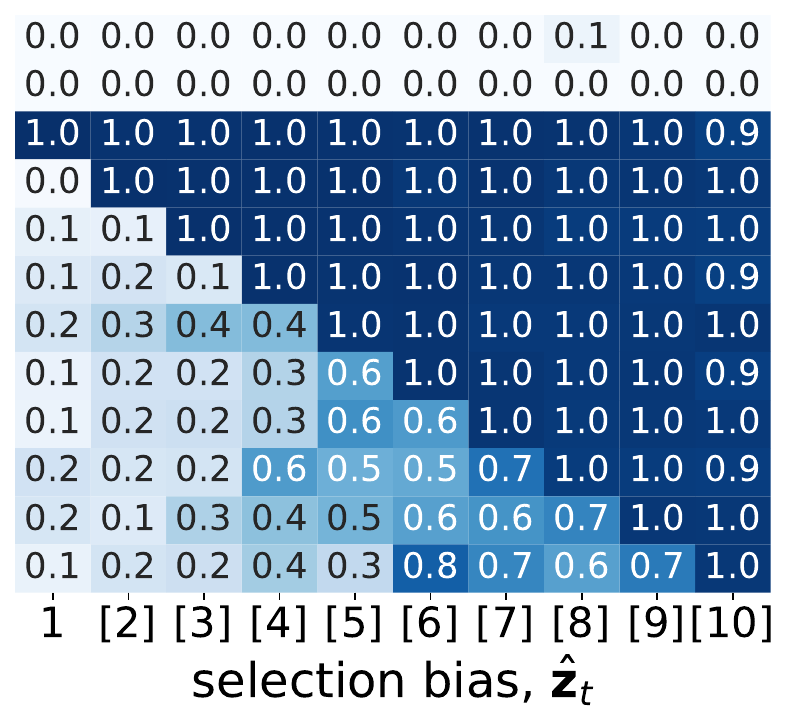}
    \caption{Evaluating linearity of learned representations under selection bias. Left to right: predictions using representations \(\hat{\mathbf{z}}_x\) and \(\hat{\mathbf{z}}_t\) under independent latent semantic variables, followed by those under dependent latent semantic variables.}
    \label{fig:numeric_linear}
\end{figure}

\begin{figure}[ht]
    \centering
    \includegraphics[width=0.271\textwidth]{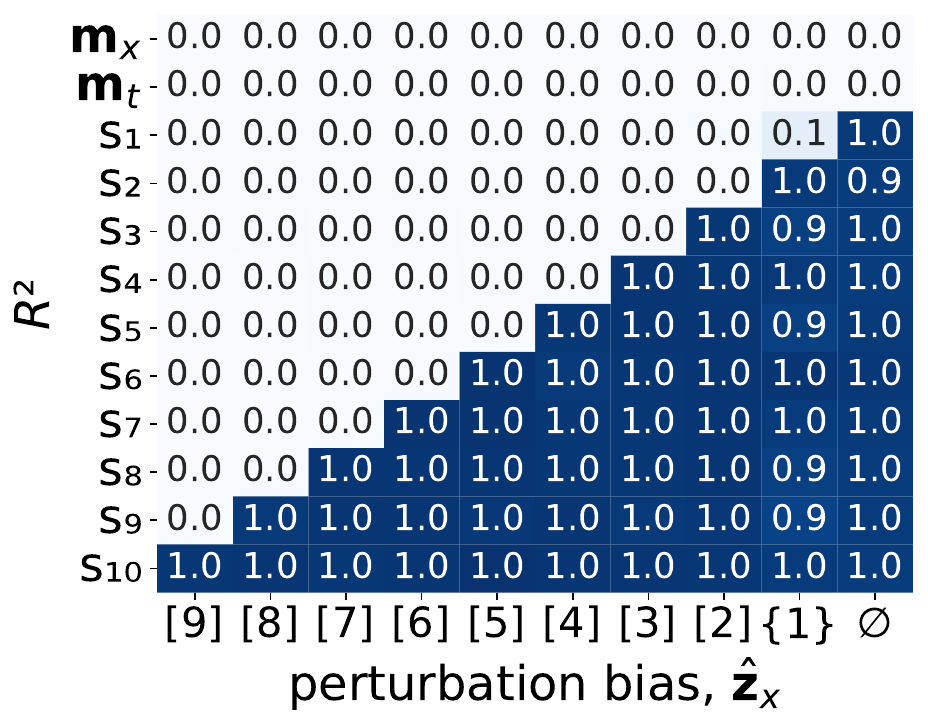}
    \hfill
    \includegraphics[width=0.228\textwidth]{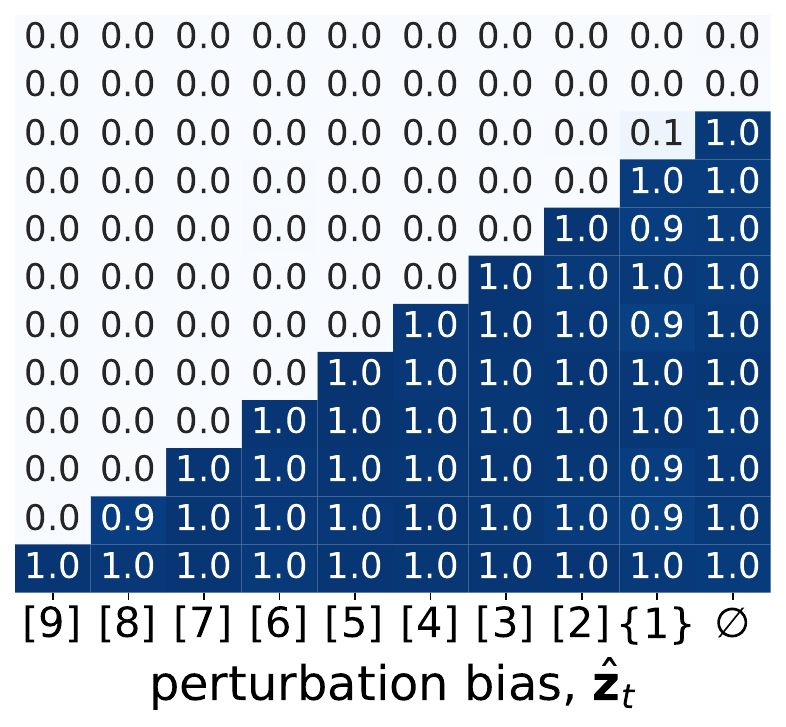}
    \hfill
    \includegraphics[width=0.228\textwidth]{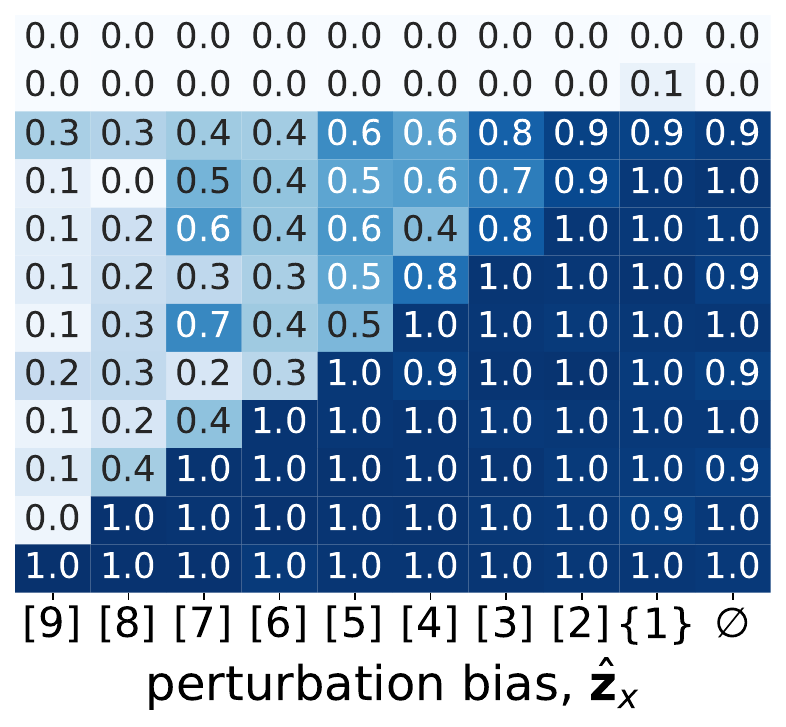}
    \hfill
    \includegraphics[width=0.228\textwidth]{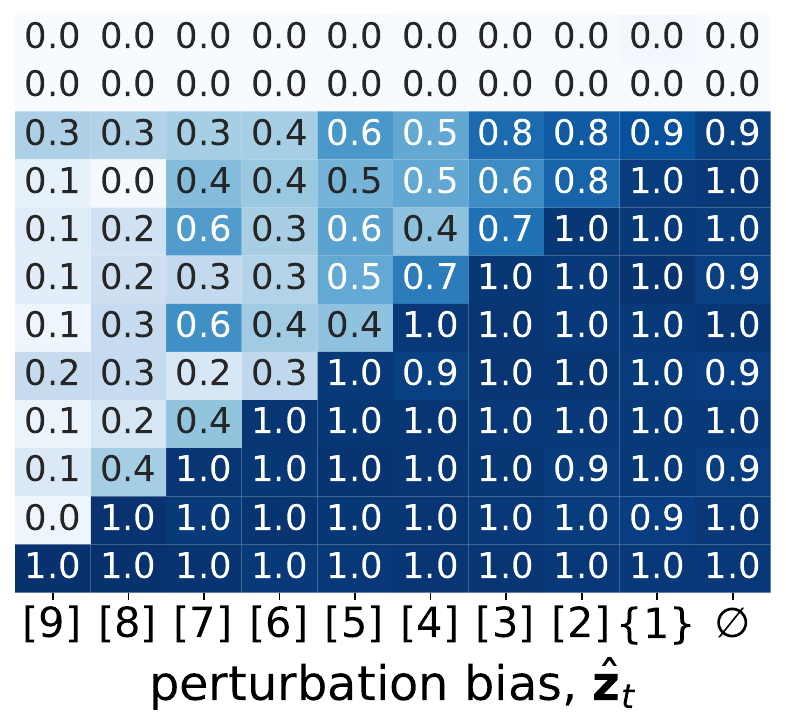}
    \caption{Linearity of learned representations under perturbation bias. Left to right: predictions using representations \(\hat{\mathbf{z}}_x\) and \(\hat{\mathbf{z}}_t\) under independent latent semantic variables, followed by those under dependent latent semantic variables.}
    \label{fig:numeric_linear_perturb}
\end{figure}

\paragraph{Linearity of learned representations.} 
We further analyze the linearity of the identified semantic representations by reporting the \(R^2\) scores obtained from linear regression applied to the learned features for predicting the true latent variables. As shown in \Cref{fig:numeric_linear} (selection bias) and \Cref{fig:numeric_linear_perturb} (perturbation bias), the performance of linear regression closely mirrors that of nonlinear regression reported in \Cref{fig:numeric_nonlinear}. This strong correspondence suggests that the relationship between the learned representations and the true latent semantic variables is approximately linear, indicating that the identified representation subspace is nearly linear.

\paragraph{Ablation studies.}
We perform two ablation studies to further examine the robustness of our theoretical findings.

First, we investigate the impact of assigning an incorrect representation dimension. Specifically, we consider a scenario in which the true dimension of the selected semantic variables is \(3\) (with selection \(\mathbb{I}_\theta = [3]\)), but we intentionally set the representation dimension to \(5\). As shown in \Cref{tab:ablation_wrongdim}, in the independent case, all selected semantic variables are successfully preserved, while omitted semantic variables are effectively discarded. In contrast, the dependent scenario yields significantly different patterns of \(R^2\) scores compared to those obtained using the correct representation dimension (see \Cref{fig:numeric_linear,fig:numeric_nonlinear}). These results suggest that redundant representation dimensions tend to encode exogenous noise, potentially introducing unnecessary complexity into the learned representations.

Second, we explore the joint effect of selection and perturbation biases by defining a scenario with selection bias \(\mathbb{I}_\theta = [8]\) and perturbation bias \(\mathbb{I}_\rho = [2]\). Results presented in \Cref{tab:ablation_bothbiases} demonstrate that when both biases coexist, their effects on semantic identification remain consistent: semantic variables that are either omitted or perturbed are discarded, while unbiased semantic variables---those that are selected and yet unperturbed---are reliably preserved in the learned representations.

Together with previous results, these findings further reinforce our theoretical conclusions in \Cref{thm:idsemantics}.

\begin{table}[ht]
    \caption{The effect of using an incorrect encoding size. The representation size is set to 5, whereas the true dimension should be 3. The biases are defined as \(\mathbb{I}_\theta = [3]\) and \(\mathbb{I}_\rho = \emptyset\).}
    \label{tab:ablation_wrongdim}
    \centering
    \resizebox{\linewidth}{!}{%
        \begin{tabular}{@{}p{1.5cm} p{1.5cm} c ccccccccccccc@{}}
            \toprule
            \multicolumn{2}{c}{\multirow{2}{*}{\textbf{Setting}}} & \multirow{2}{*}{\textbf{Reps.}} & \multicolumn{12}{c}{\textbf{\(R^2\) of Predicting Latent Semantic Variables under \(\mathbb{I}_\theta = [3]\) and \(\mathbb{I}_\rho = \emptyset\)}} \\
            \cmidrule(l){4-15}
            \multicolumn{2}{c}{} & & \(s_1\) & \(s_2\) & \(s_3\) & \(s_4\) & \(s_5\) & \(s_6\) & \(s_7\) & \(s_8\) & \(s_9\) & \(s_{10}\) & \(\mathbf{m}_x\) & \(\mathbf{m}_t\) \\
            \midrule
            \multirow{4}{*}{independ.} 
                & \multirow{2}{*}{linear} 
                    & \(\hat{\mathbf{z}}_x\) & 0.98 & 0.96 & 0.98 & 0.01 & 0.03 & 0.03 & 0.02 & 0.04 & 0.01 & 0.02 & 0.02 & 0.00 \\
                \cmidrule(l){3-15}
                & 
                    & \(\hat{\mathbf{z}}_t\) & 1.00 & 1.00 & 1.00 & 0.00 & 0.00 & 0.00 & 0.00 & 0.00 & 0.00 & 0.00 & 0.00 & 0.04 \\
            \cmidrule(l){2-15}
                & \multirow{2}{*}{non-lin.} 
                    & \(\hat{\mathbf{z}}_x\) & 0.98 & 0.99 & 0.99 & 0.00 & 0.02 & 0.02 & 0.03 & 0.04 & 0.00 & 0.00 & 0.00 & 0.00 \\
                \cmidrule(l){3-15}
                & 
                    & \(\hat{\mathbf{z}}_t\) & 1.00 & 1.00 & 1.00 & 0.00 & 0.00 & 0.00 & 0.00 & 0.00 & 0.00 & 0.00 & 0.00 & 0.06 \\
            \midrule
            \multirow{4}{*}{dependent} 
                & \multirow{2}{*}{linear} 
                    & \(\hat{\mathbf{z}}_x\) & 0.98 & 0.99 & 0.99 & 0.14 & 0.23 & 0.16 & 0.10 & 0.22 & 0.42 & 0.36 & 0.01 & 0.00 \\
                \cmidrule(l){3-15}
                & 
                    & \(\hat{\mathbf{z}}_t\) & 1.00 & 1.00 & 1.00 & 0.13 & 0.20 & 0.13 & 0.09 & 0.20 & 0.41 & 0.34 & 0.00 & 0.01 \\
            \cmidrule(l){2-15}
                & \multirow{2}{*}{non-lin.} 
                    & \(\hat{\mathbf{z}}_x\) & 0.99 & 1.00 & 0.99 & 0.13 & 0.22 & 0.16 & 0.15 & 0.20 & 0.43 & 0.36 & 0.02 & 0.00 \\
                \cmidrule(l){3-15}
                & 
                    & \(\hat{\mathbf{z}}_t\) & 1.00 & 1.00 & 1.00 & 0.13 & 0.20 & 0.13 & 0.09 & 0.20 & 0.42 & 0.35 & 0.00 & 0.05 \\
            \bottomrule
        \end{tabular}%
    }
\end{table}

\begin{table}[ht]
    \caption{Coexistence of both selection and perturbation biases. The biases are defined as \(\mathbb{I}_\theta = [8]\) and \(\mathbb{I}_\rho = [2]\).}
    \label{tab:ablation_bothbiases}
    \centering
    \resizebox{\linewidth}{!}{%
        \begin{tabular}{@{}p{1.5cm} p{1.5cm} c ccccccccccccc@{}}
            \toprule
            \multicolumn{2}{c}{\multirow{2}{*}{\textbf{Setting}}} & \multirow{2}{*}{\textbf{Reps.}} & \multicolumn{12}{c}{\textbf{\(R^2\) of Predicting Latent Semantic Variables (\(\mathbb{I}_\theta = [8]\), \(\mathbb{I}_\rho = [2]\))}} \\
            \cmidrule(l){4-15}
            \multicolumn{2}{c}{} & & \(s_1\) & \(s_2\) & \(s_3\) & \(s_4\) & \(s_5\) & \(s_6\) & \(s_7\) & \(s_8\) & \(s_9\) & \(s_{10}\) & \(\mathbf{m}_x\) & \(\mathbf{m}_t\) \\
            \midrule
            \multirow{4}{*}{independ.} 
                & \multirow{2}{*}{linear} 
                    & \(\hat{\mathbf{z}}_x\) & 0.00 & 0.01 & 0.97 & 0.98 & 0.97 & 0.95 & 0.99 & 0.98 & 0.00 & 0.00 & 0.00 & 0.00 \\
                \cmidrule(l){3-15}
                & 
                    & \(\hat{\mathbf{z}}_t\) & 0.00 & 0.00 & 0.98 & 0.98 & 0.98 & 0.98 & 0.99 & 0.98 & 0.00 & 0.00 & 0.00 & 0.00 \\
            \cmidrule(l){2-15}
                & \multirow{2}{*}{non-lin.} 
                    & \(\hat{\mathbf{z}}_x\) & 0.00 & 0.01 & 0.98 & 0.98 & 0.98 & 0.98 & 0.96 & 0.99 & 0.98 & 0.00 & 0.00 & 0.00 \\
                \cmidrule(l){3-15}
                & 
                    & \(\hat{\mathbf{z}}_t\) & 0.00 & 0.00 & 0.99 & 0.99 & 0.99 & 0.99 & 0.99 & 0.99 & 0.99 & 0.00 & 0.00 & 0.00 \\
            \midrule
            \multirow{4}{*}{dependent} 
                & \multirow{2}{*}{linear} 
                    & \(\hat{\mathbf{z}}_x\) & 0.67 & 0.57 & 0.97 & 0.99 & 0.99 & 0.98 & 0.99 & 0.97 & 0.63 & 0.63 & 0.00 & 0.00 \\
                \cmidrule(l){3-15}
                & 
                    & \(\hat{\mathbf{z}}_t\) & 0.64 & 0.53 & 0.98 & 0.97 & 0.99 & 0.98 & 0.99 & 0.98 & 0.61 & 0.60 & 0.00 & 0.00 \\
            \cmidrule(l){2-15}
                & \multirow{2}{*}{non-lin.} 
                    & \(\hat{\mathbf{z}}_x\) & 0.68 & 0.57 & 0.99 & 0.99 & 0.99 & 0.99 & 0.99 & 0.99 & 0.64 & 0.64 & 0.00 & 0.00 \\
                \cmidrule(l){3-15}
                & 
                    & \(\hat{\mathbf{z}}_t\) & 0.65 & 0.53 & 0.99 & 0.99 & 0.99 & 0.99 & 0.99 & 0.99 & 0.61 & 0.61 & 0.00 & 0.00 \\
            \bottomrule
        \end{tabular}%
    }
\end{table}

\subsection{Additional Downstream Results}
\label{sec:num_task_details}

We further report downstream task performance under varying perturbation bias settings. Specifically, the preserved semantic variables are sequentially reversed---starting from semantic index \(10\) and incrementally expanding until the full semantic set is included. The results shown in \Cref{fig:perturb_downstream} indicate that, in general, semantic variables critical to downstream tasks must be preserved in the learned representations to achieve high performance. This observation holds across both independent and dependent latent semantic settings.

\begin{figure}[ht]
    \centering
    \includegraphics[width=0.25\textwidth]{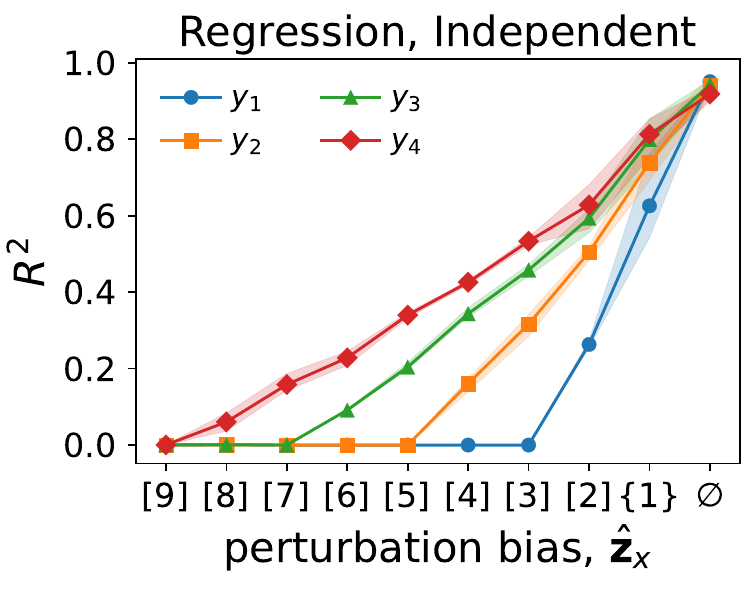}
    \includegraphics[width=0.234\textwidth]{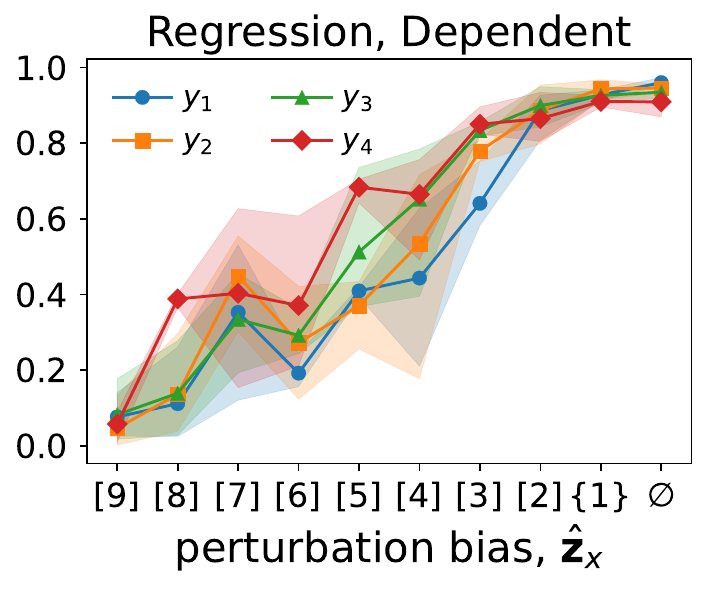}
    \hfill
    \includegraphics[width=0.25\textwidth]{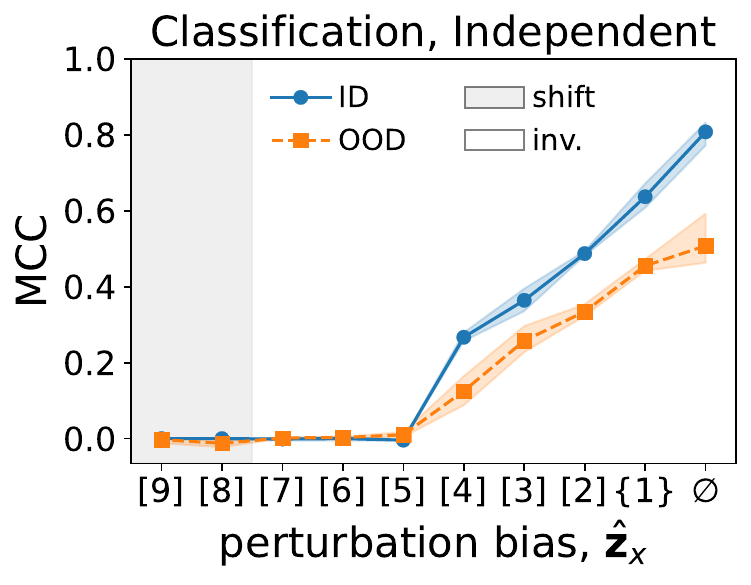}
    \includegraphics[width=0.234\textwidth]{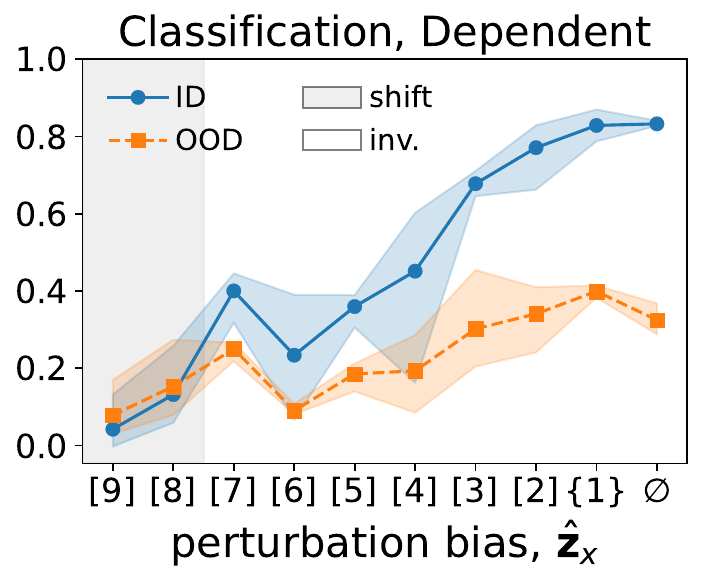}
    \caption{Downstream performance of pretrained representations \(\hat{\mathbf{z}}_x\) under perturbation bias. Top: in-distribution (ID) regression performance. Bottom: ID classification and out-of-distribution (OOD) generalization.}
    \label{fig:perturb_downstream}
\end{figure}

\section{Experiment Details on MPI3D-Complex Dataset}
\label{sec:detail_mpi3d}

We provide additional details on the MPI3D-Complex dataset that are not fully covered in \Cref{sec:MPIReal3D}. In \Cref{sec:mpi_setup_details}, we comprehensively describe the experimental setup, including a dataset overview, the selection and perturbation bias configurations used to generate text, model architecture, and training parameters. In \Cref{sec:ablation_mpi}, we present additional results, including the use of a linear classifier for predicting latent factors and an ablation study on encoder dimensionality.

\subsection{Detailed Experimental Setup}
\label{sec:mpi_setup_details}

\paragraph{MPI3D-Complex dataset.} 
MPI3D-Complex~\citep{NEURIPS2019_d97d404b} contains 460{,}800 real-world images of resolution \(64 \times 64 \times 3\), spanning all combinations of seven \emph{mutually independent, discrete factors} (see \Cref{tab:mpi_factors}). We designate horizontal and vertical positions (\texttt{hori.}, \texttt{vert.}) as image-specific due to their low-level visual nature, while treating the remaining five as semantic variables for representation learning and evaluation purposes.

\paragraph{Text latent factors.}  
Text descriptions are generated from the ground-truth semantic attributes using content-word mappings (see \Cref{tab:mpi_factors}). Under unbiased settings, all five semantic attributes are included. For text-specific variation, a generation template is chosen via a discrete latent variable
\(
m_t \sim \mathrm{Uniform}([3])
\), each corresponding to a different human-written sentence structure.

\begin{table}[ht]
    \caption{Latent variables of MPI3D-Complex and corresponding content words in text.}
    \label{tab:mpi_factors}
    \centering
    \begin{tabular}{@{}lll@{}}
        \toprule
        \textbf{Factor Name} & \textbf{Distribution} & \textbf{Content Words} \\
        \midrule
        Object color (\texttt{color}) &
        \(\mathrm{Uniform}(\{0, \dots, 3\})\) &
        \texttt{yellow}, \texttt{green}, \texttt{olive}, \texttt{red} \\
        \midrule
        Object shape (\texttt{shape}) &
        \(\mathrm{Uniform}(\{0, \dots, 3\})\) &
        \makecell[l]{\texttt{coffee-cup}, \texttt{tennis-ball}, \texttt{croissant},\\ \texttt{beer-cup}} \\
        \midrule
        Object size (\texttt{size}) &
        \(\mathrm{Uniform}(\{0, 1\})\) &
        \texttt{small}, \texttt{large} \\
        \midrule
        Camera height (\texttt{cam.}) &
        \(\mathrm{Uniform}(\{0, \dots, 2\})\) &
        \texttt{top}, \texttt{center}, \texttt{bottom} \\
        \midrule
        Background color (\texttt{back.}) &
        \(\mathrm{Uniform}(\{0, \dots, 2\})\) &
        \texttt{purple}, \texttt{sea-green}, \texttt{salmon} \\
        \midrule
        Horizontal axis (\texttt{hori.}) &
        \(\mathrm{Uniform}(\{0, \dots, 39\})\) &
        --- (image-specific factor) \\
        \midrule
        Vertical axis (\texttt{vert.}) &
        \(\mathrm{Uniform}(\{0, \dots, 39\})\) &
        --- (image-specific factor) \\
        \bottomrule
    \end{tabular}
\end{table}

\paragraph{Text generation under misalignment settings.}  
We generate text for each image under various selection and perturbation bias settings to investigate the effects of misalignment. To ensure computational tractability---since exhaustively enumerating all possible configurations is both infeasible and unnecessary---we adopt a progressively incremental strategy for introducing biases into the latent semantic variables, as outlined in \Cref{tab:mpi_bias_settings}.  

In selection bias settings (where the perturbable set \(\mathbb{I}_\rho\) is empty), textual descriptions include only the content words corresponding to a subset of the true image semantic variables. We define five incremental settings: \circled{1}: \{\texttt{color}\},
\circled{2}: \{\texttt{color}, \texttt{shape}\},
\circled{3}: \{\texttt{color}, \texttt{shape}, \texttt{size}\},
\circled{4}: \{\texttt{color}, \texttt{shape}, \texttt{size}, \texttt{cam.}\},
\circled{5}: all five attributes.
The specific text generation templates associated with \(g_{t\sx{(\theta)}}\) under each selection setting are detailed in \Cref{tab:mpi_text_templates}.  

In perturbation bias settings (where all semantic indices are selected), we apply random substitutions to a subset \(\mathbb{I}_\rho\). For each factor in \(\mathbb{I}_\rho\), its text value is replaced with a randomly sampled alternative with probability 0.9. Five perturbation configuration are used:
\circled{1}: \(\emptyset\),
\circled{2}: \{\texttt{back.}\},
\circled{3}: \{\texttt{cam.}, \texttt{back.}\},
\circled{4}: \{\texttt{size}, \texttt{cam.}, \texttt{back.}\},
\circled{5}: \{\texttt{shape}, \texttt{size}, \texttt{cam.}, \texttt{back.}\}.

\begin{table}[ht]
\caption{Misalignment settings for text generation of MPI3D-Complex dataset.}
\label{tab:mpi_bias_settings}
\centering
\begin{tabular}{@{}c l l@{}}
    \toprule
    \textbf{Setting} & \textbf{Selection Bias, $\mathbb{I}_\theta$} & \textbf{Perturbation Bias, $\mathbb{I}_\rho$} \\
    \midrule
    \(\circled{1}\) & \{\texttt{color}\} & $\emptyset$ \\
    \midrule
    \(\circled{2}\) & \{\texttt{color}, \texttt{shape}\} & \{\texttt{back.}\} \\
    \midrule
    \(\circled{3}\) & \{\texttt{color}, \texttt{shape}, \texttt{size}\} & \{\texttt{cam.}, \texttt{back.}\} \\
    \midrule
    \(\circled{4}\) & \{\texttt{color}, \texttt{shape}, \texttt{size}, \texttt{cam.}\} & \{\texttt{size}, \texttt{cam.}, \texttt{back.}\} \\
    \midrule
    \(\circled{5}\) & \{\texttt{color}, \texttt{shape}, \texttt{size}, \texttt{cam.}, \texttt{back.}\} & \{\texttt{shape}, \texttt{size}, \texttt{cam.}, \texttt{back.}\} \\
    \bottomrule
\end{tabular}
\end{table}

\begin{table}[ht]
    \caption{Text generation templates for MPI3D-Complex dataset for different selection settings.}
    \label{tab:mpi_text_templates}
    \centering
    \begin{tabular}{c|l}
        \toprule
        \textbf{Setting} & \textbf{Text Generation Templates for Each Selection View} \\
        \midrule
        \(\circled{1}\)  & \begin{tabular}[c]{@{}l@{}} 
            "An object colored \{\texttt{color}\}." \\ 
            "It has a \{\texttt{color}\} appearance." \\ 
            "Something with \{\texttt{color}\}."
        \end{tabular} \\
        \midrule
        \(\circled{2}\) & \begin{tabular}[c]{@{}l@{}} 
            "A \{\texttt{shape}\} that is \{\texttt{color}\}." \\ 
            "The \{\texttt{color}\} \{\texttt{shape}\}." \\ 
            "An object shaped like a \{\texttt{shape}\}, colored \{\texttt{color}\}."
        \end{tabular} \\
        \midrule
        \(\circled{3}\) & \begin{tabular}[c]{@{}l@{}} 
            "A \{ \texttt{size}\} \{\texttt{shape}\} in \{\texttt{color}\}." \\ 
            "\{\texttt{color}\}, \{ \texttt{size}\}, \{\texttt{shape}\}." \\ 
            "The object is \{ \texttt{size}\}, shaped as a \{\texttt{shape}\}, and colored \{\texttt{color}\}."
        \end{tabular} \\
        \midrule
        \(\circled{4}\) & \begin{tabular}[c]{@{}l@{}} 
            "A \{ \texttt{size}\}\{\texttt{shape}\} in \{\texttt{color}\}, seen from \{\texttt{cam.}\}." \\ 
            "Viewed from \{\texttt{cam.}\}, a \{\texttt{color}\}, \{ \texttt{size}\} \{\texttt{shape}\}." \\ 
            "A \{ \texttt{size}\} \{\texttt{shape}\} with \{\texttt{color}\}, perspective: \{\texttt{cam.}\}."
        \end{tabular} \\
        \midrule
        \(\circled{5}\) & \begin{tabular}[c]{@{}l@{}} 
            "A \{ \texttt{size}\} \{\texttt{shape}\} in \{\texttt{color}\}, viewed from \{\texttt{cam.}\}, \{\texttt{back.}\}." \\ 
            "From \{\texttt{cam.}\}, you see a \{\texttt{color}\}, \{ \texttt{size}\} \{\texttt{shape}\}, \{\texttt{back.}\}." \\ 
            "A \{ \texttt{size}\} \{\texttt{shape}\}, \{\texttt{color}\}, placed \{\texttt{back.}\}, observed from \{\texttt{cam.}\}."
        \end{tabular} \\
        \bottomrule
    \end{tabular}%
\end{table}

\paragraph{Training details.}  
For each setting, the dataset is partitioned into training, evaluation, and test subsets in a fixed ratio of \(44{,}720 : 23{,}040 : 23{,}040\). Across all configurations, we train the image and text encoders for \(200{,}000\) steps using the training subset, averaging results over three random seeds. The training objective is the multimodal contrastive loss \(\mathcal{L}_{\text{\tiny MMCL}}\), defined in \Cref{eq:loss_exp}, and optimized using the Adam optimizer with an initial learning rate of \(1 \times 10^{-5}\), a batch size of 256, and a temperature parameter \(\tau = 1.0\).

For image encoding, we use a ResNet18 backbone followed by a fully connected layer with a fixed input dimensionality of 100. The output dimensionality is adjusted according to the number of unbiased semantic factors under each bias setting.

For text encoding, we tokenize text using the \texttt{nltk.PunktTokenizer}, following the procedure in~\citep{daunhawer2022identifiability}. Tokenized sequences are transformed into two-dimensional one-hot embeddings and processed using a convolutional neural network (CNN) with a variable number of layers, determined by the shape of the tokenized input. The output dimensionality of the CNN is configured to match that of the image encoder, ensuring compatibility in the joint representation space.

The encoding size is generally set to match the number of unbiased semantic dimensions, i.e., \(\mathrm{dim}(\mathbb{I}^c_\rho)\). However, when this number is less than 3, we set the encoding size to 3 to ensure minimal representational capacity. An ablation study on this design choice is provided in the following section.

\paragraph{Evaluation metrics.}  
After training the representations, we freeze the encoders and train both a linear classifier (logistic regression) and a nonlinear classifier (a two-layer MLP with ReLU activation) for each setting and each image latent factor. Classifiers are trained on the evaluation subset for \(10{,}000\) steps.

Performance is assessed on the test set using the Matthews Correlation Coefficient (MCC), computed separately for each latent factor. MCC is chosen for its robustness in evaluating binary classification performance under class imbalance.

\subsection{Additional Results}
\label{sec:ablation_mpi}

\paragraph{Linearity of learned representations.}  
We evaluate the linear separability of the learned representations by reporting MCC scores for predicting each latent factor using a linear classifier. As shown in \Cref{fig:mpi3d_linear}, and in comparison to the nonlinear results in \Cref{fig:mpi3d_nonlinear}, the findings reveal that certain latent semantic variables—most notably \texttt{size}—are not linearly embedded in the learned representation space when training image-text pairs with MMCL.

In contrast, factors such as \texttt{color}, \texttt{shape}, \texttt{cam.}, and \texttt{back.} exhibit strong linear separability, suggesting that these semantic variables are linearly represented by the learned image and text encoders.

\begin{figure}[ht]
    \centering
    \includegraphics[width=0.303\textwidth]{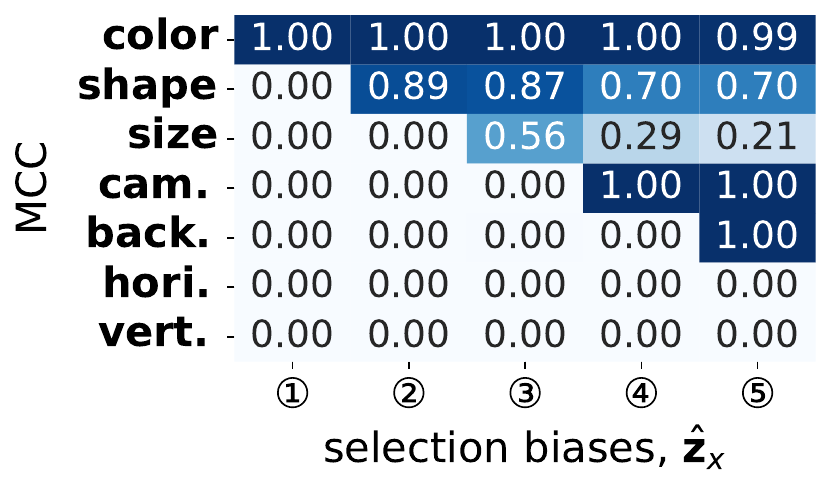}
    \hfill
    \includegraphics[width=0.222\textwidth]{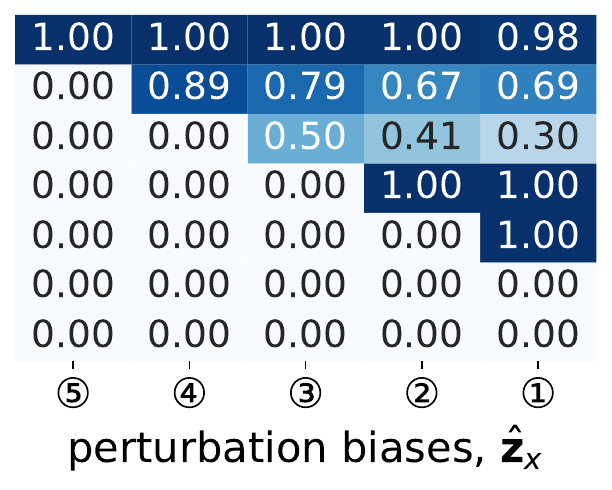}
    \hfill
    \includegraphics[width=0.222\textwidth]{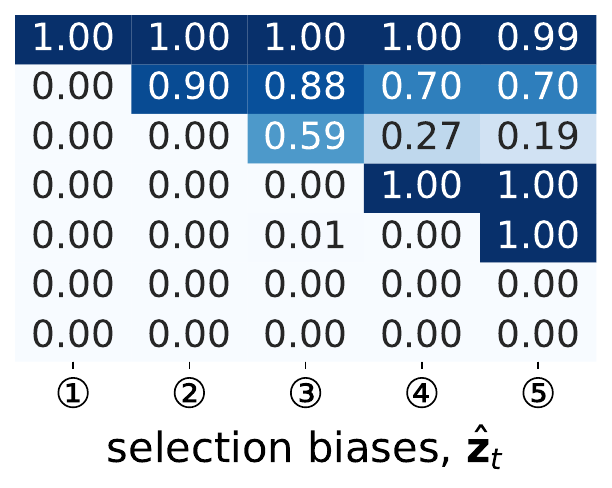}
    \hfill
    \includegraphics[width=0.222\textwidth]{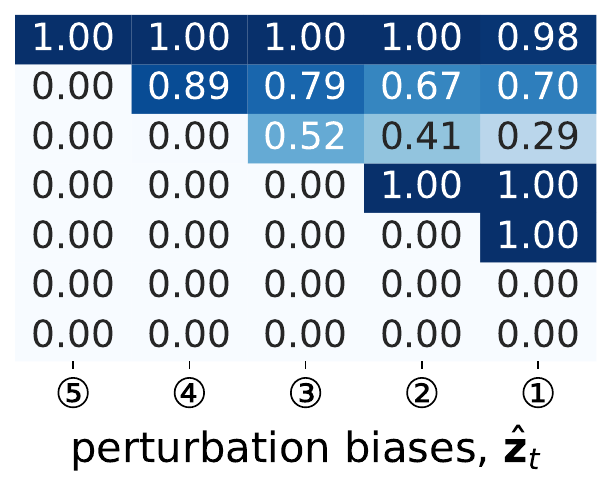}
    \caption{Evaluating linearity of learned representations under misalignment. Left to right: image features \(\hat{\mathbf{z}}_x\) under selection bias, image features \(\hat{\mathbf{z}}_x\) under perturbation bias, text features \(\hat{\mathbf{z}}_t\) under selection bias, and text features \(\hat{\mathbf{z}}_t\) under perturbation bias.}
    \label{fig:mpi3d_linear}
\end{figure}

\paragraph{Ablations on the encoding size.}  
We conduct an ablation study on the encoding size by evaluating a selection bias setting in which only the semantic attribute \{\texttt{color}\} is selected for text generation. All other training parameters are kept consistent with those used in our main experiments, except for the encoding dimensionality.

As shown in \Cref{fig:mpi-ablation}, in contrast to training with purely numerical data, learning from image-text data exhibits sensitivity to the choice of encoding size. Specifically, when the encoding size is set to 1—exactly matching the number of perfectly aligned semantic dimensions—the image encoder tends to be under-optimized, resulting in high variance across runs. However, increasing the encoding size leads to more stable and reliable performance.

Notably, in the presence of independent latent factors, the additional (redundant) encoding dimensions do not appear to capture misaligned semantic variables, suggesting that excess capacity does not harm identifiability in this setting.

\begin{figure}[ht]
    \centering
    \includegraphics[width=\linewidth]{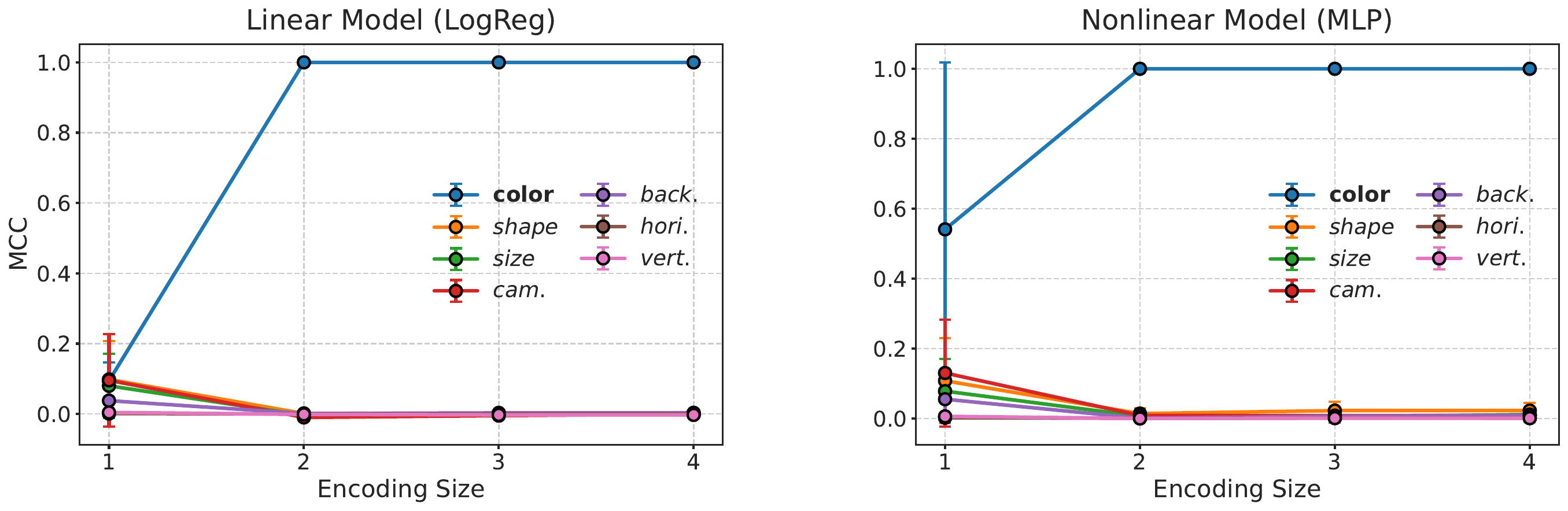}
    \caption{Ablation study on encoding size under a selection bias setting where only the \{\texttt{color}\} attribute is included in text generation. Left: average MCC over three runs using a linear classifier. Right: average MCC using a nonlinear classifier.}
    \label{fig:mpi-ablation}
\end{figure}

\section{Experiment Details on Causal3DIdent Dataset}
\label{sec:detail_3dident}

We provide additional details on the Causal3DIdent dataset that are not fully covered in \Cref{sec:Causal3DIdent}. In \Cref{sec:c3d_detail_setup}, we offer a comprehensive description of the experimental setup, including the image and text latent factors, the image generation process, the design of selection and perturbation bias settings for text generation, and training configurations. In \Cref{sec:c3d_ablations}, we present supplementary results, including analyses of the learned text representations and assessments of the linearity of the learned representations.

\subsection{Detailed Experimental Setup}
\label{sec:c3d_detail_setup}

\paragraph{Image latent factors and image generation.}  
Following prior work~\citep{zimmermann2021contrastive, von2021self, daunhawer2022identifiability, yao2023multi}, we utilize the \emph{Causal3DIdent} dataset to synthesize images from a predefined latent causal structure. Images are generated using the Blender renderer~\citep{benlder2018}, which applies a complex rendering function parameterized by 11 input variables. In our configuration, the object's \(z\)-position is fixed, leaving 10 latent factors that govern image generation.

These include 3 discrete variables—object shape (\texttt{shape}), and object positions along the horizontal (\texttt{x\_pos}) and vertical (\texttt{y\_pos}) axes—and 7 continuous variables: object color (\texttt{color}), spotlight position (\texttt{s\_pos}) and color (\texttt{s\_color}), background color (\texttt{b\_color}), and the three object rotation angles (\texttt{alpha}, \texttt{beta}, \texttt{gamma}). We treat the rotation angles (\texttt{alpha}, \texttt{beta}, \texttt{gamma}) as image-specific latent variables, while the remaining factors are considered semantic latent variables, structured according to the causal graph shown in \Cref{fig:c3did_gen}.

We synthesize \(80{,}000\) samples for MMCL training, \(10{,}000\) samples for classifier or regressor training, and another \(10{,}000\) samples for test-time evaluation. Images are rendered at a resolution of \(128 \times 128 \times 3\). 

\begin{figure}[ht]
    \centering
    \includegraphics[width=\linewidth]{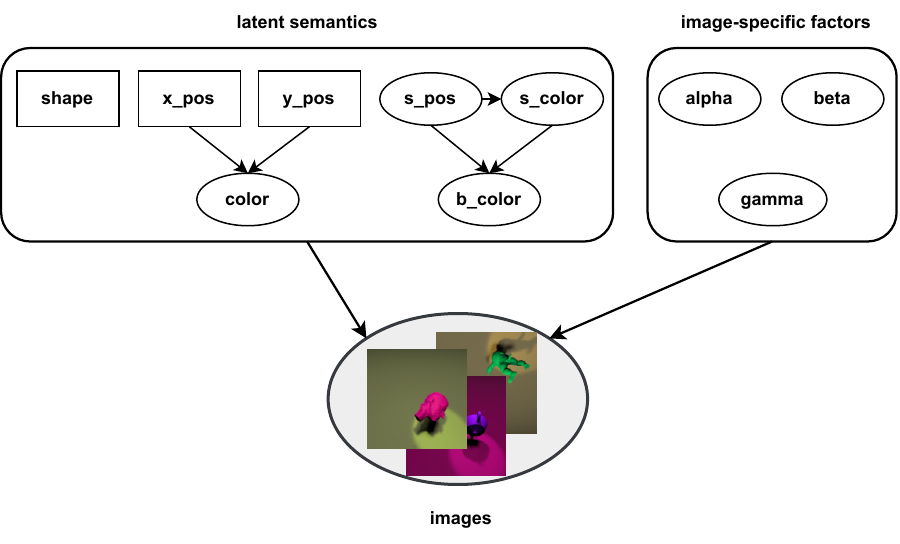}
    \caption{Latent causal model governing image generation in the Causal3DIdent dataset. Rectangular nodes represent discrete latent random variables, while elliptical nodes denote continuous ones. Object shape (\texttt{shape}), horizontal and vertical position (\texttt{x\_pos}, \texttt{y\_pos}), object color (\texttt{color}), spotlight position and color (\texttt{s\_pos}, \texttt{s\_color}), and background color (\texttt{b\_color}) are treated as latent semantic variables shared across modalities and potentially subject to misalignment. In contrast, the rotation angles—\texttt{alpha}, \texttt{beta}, and \texttt{gamma}—are considered image-specific latent factors.}
    \label{fig:c3did_gen}
\end{figure}

\paragraph{Text latent factors.}  
We discretize the continuous variables \texttt{color}, \texttt{s\_color}, and \texttt{b\_color} using sampled image semantic variables mapped to distinct color palettes: \texttt{TABLEAU\_COLORS} for \texttt{color}, \texttt{CSS4\_COLORS} for \texttt{s\_color}, and \texttt{XKCD\_COLORS} for \texttt{b\_color}. While \texttt{s\_color} remains continuous in the underlying latent representation, we simulate partial information loss for the spotlight position (\texttt{s\_pos}) during the generating mapping to form text. As a result, the generated textual descriptions of \texttt{s\_pos} do not constitute an information-preserving transformation.

To introduce text-specific variation, we employ five manually designed templates to generate text from the latent factors under each bias setting, adapting the text rendering pipeline from~\citep{daunhawer2022identifiability}. A complete list of latent factors and their types is provided in \Cref{tab:c3did_factors}.

\begin{table}[ht]
    \caption{Latent factors of Causal3DIdent and corresponding content words in text.}
    \label{tab:c3did_factors}
    \centering
    \resizebox{\textwidth}{!}{
    \begin{tabular}{@{}llll@{}}
        \toprule
        \textbf{Factor Name} & \textbf{Image Modality} & \textbf{Text Modality} & \textbf{Content Words} \\
        \midrule
        \texttt{shape} &
        \(\mathrm{Uniform}(\{0, \dots, 6\})\) &
        \(\mathrm{Uniform}(\{0, \dots, 6\})\) &
        \makecell[l]{\texttt{teapot}, \texttt{hare}, \texttt{dragon}, \texttt{cow},\\
                     \texttt{armadillo}, \texttt{horse}, \texttt{head}} \\
        \midrule
        \texttt{x\_pos} &
        \(\mathrm{Uniform}(\{0, 1, 2\})\) &
        \(\mathrm{Uniform}(\{0, 1, 2\})\) &
        \texttt{left}, \texttt{center}, \texttt{right} \\
        \midrule
        \texttt{y\_pos} &
        \(\mathrm{Uniform}(\{0, 1, 2\})\) &
        \(\mathrm{Uniform}(\{0, 1, 2\})\) &
        \texttt{top}, \texttt{mid}, \texttt{bottom} \\
        \midrule
        \texttt{s\_pos} &
        \(\mathrm{Uniform}([0,1])\) &
        \(\mathrm{Uniform}([0,1])\) & \makecell[l]{\texttt{northwest}, \texttt{northeast}, \texttt{center},\\  \texttt{southwest}, \texttt{southeast}} \\
        \midrule
        \texttt{color} &
        \(\frac{1}{6}(\texttt{x\_pos} + \texttt{y\_pos}) + \frac{1}{3} \mathrm{Uniform}([0,1])\) &
        Up to 10 colors &
        Color names in \texttt{TABLEAU\_COLORS} \\
        \midrule
        \texttt{s\_color} &
        \(\frac{1}{2}(\texttt{s\_pos} + \mathrm{Uniform}([0,1]))\) &
        Up to 147 colors &
        Color names in \texttt{CSS4\_COLORS} \\
        \midrule
        \texttt{b\_color} &
        \(\frac{1}{3}(\texttt{s\_pos} + \texttt{s\_color} + \mathrm{Uniform}([0,1]))\) &
        Up to 954 colors &
        Color names in \texttt{XKCD\_COLORS} \\
        \midrule
        \texttt{alpha} &
        \(\mathrm{Uniform}([0,1]))\) & --- & --- \\
        \midrule
        \texttt{beta} &
        \(\mathrm{Uniform}([0,1]))\) & --- & --- \\
        \midrule
        \texttt{gamma} &
        \(\mathrm{Uniform}([0,1]))\) & --- & --- \\
        \midrule
        \texttt{phrase} & ---
        & \(\mathrm{Uniform}(\{0, \dots, 5\}))\) & --- \\
        \bottomrule
    \end{tabular}
    }
\end{table}

\paragraph{Text generation under different misalignment settings.}  
Following the MPI3D-Complex experiments, we explore a series of incrementally increasing selection and perturbation bias configurations to introduce varying degrees and types of cross-modal misalignment. These settings enable a systematic investigation of how different forms of alignment impact representation learning. Each configuration is indexed using circled numerals and summarized in \Cref{tab:c3did_bias_settings}.

For the perturbable semantic variables in each perturbation setting, we randomly sample the corresponding text semantic values. Specifically, for discrete variables such as \texttt{x\_pos} and \texttt{y\_pos}, we sample uniformly from the set \(\{0, 1, 2\}\); for continuous variables, we sample uniformly from the interval \([0, 1]\).

\Cref{tab:c3did_text_templates} provides the text generation templates associated with each selection setting. Representative image–text pairs generated under different selection and perturbation bias configurations are shown in \Cref{fig:c3did_sample}.

\begin{table}[ht]
\caption{Misalignment settings for text generation of Causal3DIdent dataset.}
\label{tab:c3did_bias_settings}
\centering
\resizebox{\linewidth}{!}{
\begin{tabular}{@{}c l l@{}}
    \toprule
    \textbf{Setting} & \textbf{Selection Bias, $\mathbb{I}_\theta$} & \textbf{Perturbation Bias, $\mathbb{I}_\rho$} \\
    \midrule
    \(\circled{1}\) & \{\texttt{shape}\} & \{\texttt{x\_pos}, \texttt{y\_pos}, \texttt{s\_pos}, \texttt{color}, \texttt{s\_color}, \texttt{b\_color}\} \\
    \midrule
    \(\circled{2}\) & \{\texttt{shape}, \texttt{x\_pos}\} & \{\texttt{y\_pos}, \texttt{s\_pos}, \texttt{color}, \texttt{s\_color}, \texttt{b\_color}\} \\
    \midrule
    \(\circled{3}\) & \{\texttt{shape}, \texttt{x\_pos}, \texttt{y\_pos}\} & \{\texttt{s\_pos}, \texttt{color}, \texttt{s\_color}, \texttt{b\_color}\} \\
    \midrule
    \(\circled{4}\) & \{\texttt{shape}, \texttt{x\_pos}, \texttt{y\_pos}, \texttt{s\_pos}\} & \{\texttt{color}, \texttt{s\_color}, \texttt{b\_color}\} \\
    \midrule
    \(\circled{5}\) & \{\texttt{shape}, \texttt{x\_pos}, \texttt{y\_pos}, \texttt{s\_pos}, \texttt{color}\} & \{\texttt{s\_color}, \texttt{b\_color}\} \\
    \midrule
    \(\circled{6}\) & \{\texttt{shape}, \texttt{x\_pos}, \texttt{y\_pos}, \texttt{s\_pos}, \texttt{color}, \texttt{s\_color}\} & \{\texttt{b\_color}\} \\
        \midrule
    \(\circled{7}\) & \{\texttt{shape}, \texttt{x\_pos}, \texttt{y\_pos}, \texttt{s\_pos}, \texttt{color}, \texttt{s\_color}, \texttt{b\_color}\} & $\emptyset$ \\
    \bottomrule
\end{tabular}
}
\end{table}

\begin{figure}[ht]
    \centering
    \includegraphics[width=\linewidth]{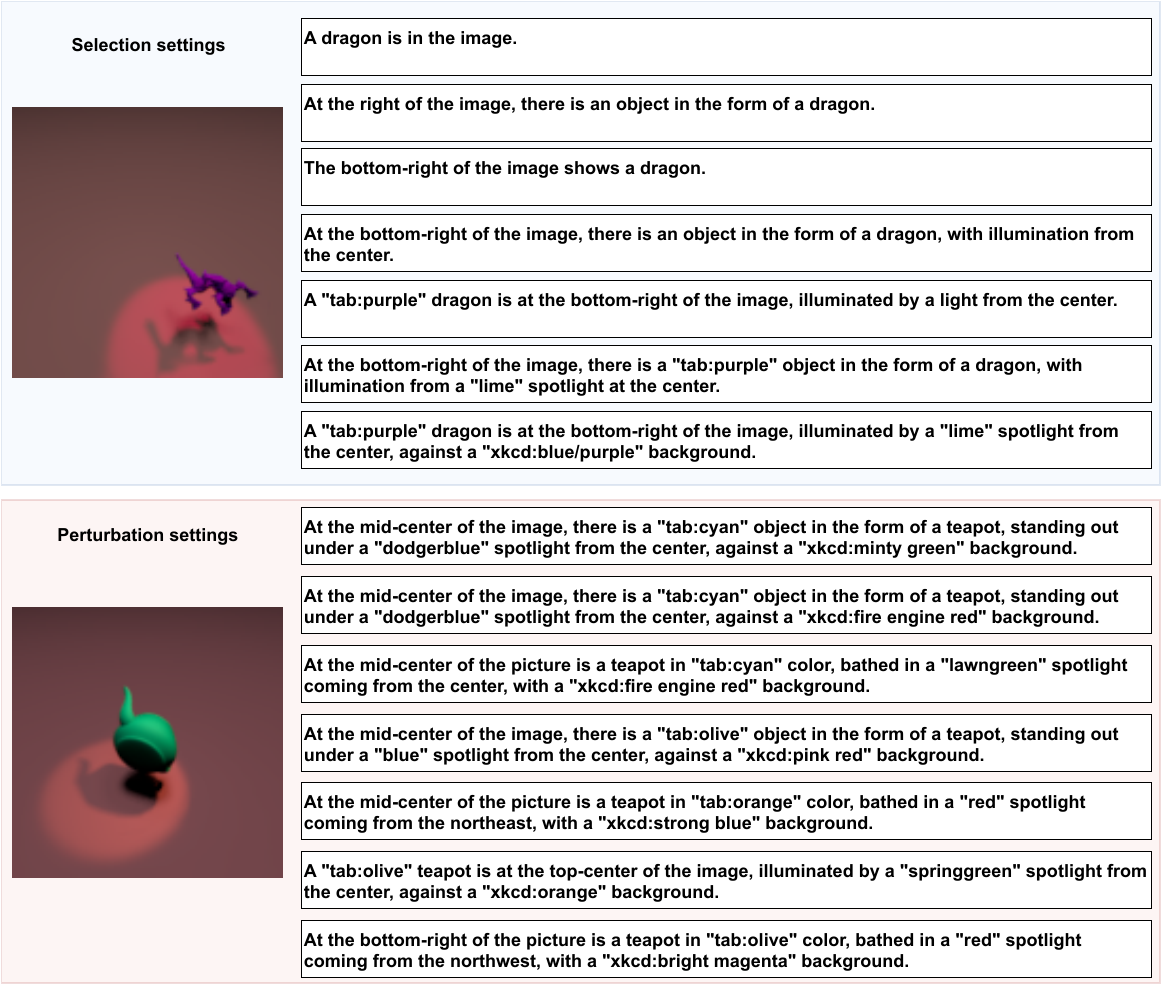}
\caption{Example image-text pairs from Causal3DIdent under different selection bias settings. The left panel shows randomly selected images; the right panel presents the corresponding text from top to bottom, each generated under selection settings \circled{1} to \circled{7} with no perturbations, and perturbation bias \circled{7} to \circled{1} with full selections.}
\label{fig:c3did_sample}
\end{figure}

\paragraph{Training details.}  
Across all experimental settings, we train the image and text encoders for \(100{,}000\) steps on the training subset, using three different random seeds to ensure robustness. The training objective is the multimodal contrastive loss \(\mathcal{L}_{\text{\tiny MMCL}}\), defined in \Cref{eq:loss_exp}, and optimized using the Adam optimizer with an initial learning rate of \(1 \times 10^{-5}\), a batch size of 256, and a temperature parameter \(\tau = 1.0\).

For both the image and text encoders, we adopt the same architectures used in the MPI3D-Complex experiments. The encoding dimensionality is adjusted according to the bias setting: for selection settings \circled{1} through \circled{7}, the encoding sizes are set to 3, 3, 4, 5, 5, 6, and 7, respectively; for perturbation settings \circled{1} through \circled{7}, the encoding sizes are assigned in reverse order: 7, 6, 5, 5, 4, 3, and 3.

\begin{table}[th]
    \caption{Text generation templates for Causal3DIdent dataset under different selection settings.}
    \label{tab:c3did_text_templates}
    \centering
    \small 
    \begin{tabular}{c|p{0.87\textwidth}}
        \toprule
        \textbf{Setting} & \textbf{Text Generation Templates} \\
        \midrule
        \(\circled{1}\) & 
        "A \{\texttt{shape}\} is visible." \newline
        "A \{\texttt{shape}\} is in the image." \newline
        "The image shows a \{\texttt{shape}\}." \newline
        "The picture is a \{\texttt{shape}\}." \newline
        "There is an object in the form of a \{\texttt{shape}\}." \\
        \midrule
        \(\circled{2}\) & 
        "A \{\texttt{shape}\} is visible, positioned at the \{\texttt{x\_pos}\} of the image." \newline
        "A \{\texttt{shape}\} is at the \{\texttt{x\_pos}\} of the image." \newline
        "The \{\texttt{x\_pos}\} of the image shows a \{\texttt{shape}\}." \newline
        "At the \{\texttt{x\_pos}\} of the picture is a \{\texttt{shape}\}." \newline
        "At the \{\texttt{x\_pos}\} of the image, there is an object in the form of a \{\texttt{shape}\}." \\
        \midrule
        \(\circled{3}\) & 
        "A \{\texttt{shape}\} is visible, positioned at the \{\texttt{y\_pos}\}-\{\texttt{x\_pos}\} of the image." \newline
        "A \{\texttt{shape}\} is at the \{\texttt{y\_pos}\}-\{\texttt{x\_pos}\} of the image." \newline
        "The \{\texttt{y\_pos}\}-\{\texttt{x\_pos}\} of the image shows a \{\texttt{shape}\}." \newline
        "At the \{\texttt{y\_pos}\}-\{\texttt{x\_pos}\} of the picture is a \{\texttt{shape}\}." \newline
        "At the \{\texttt{y\_pos}\}-\{\texttt{x\_pos}\} of the image, there is an object in the form of a \{\texttt{shape}\}." \\
        \midrule
        \(\circled{4}\) & 
        "A \{\texttt{shape}\} is visible, positioned at the \{\texttt{y\_pos}\}-\{\texttt{x\_pos}\}, with a spotlight shining from \{\texttt{s\_pos}\}." \newline
        "A \{\texttt{shape}\} is at the \{\texttt{y\_pos}\}-\{\texttt{x\_pos}\}, illuminated by a light from \{\texttt{s\_pos}\}." \newline
        "The \{\texttt{y\_pos}\}-\{\texttt{x\_pos}\} shows a \{\texttt{shape}\}, highlighted by a light from \{\texttt{s\_pos}\}." \newline
        "At the \{\texttt{y\_pos}\}-\{\texttt{x\_pos}\} is a \{\texttt{shape}\}, under a light from \{\texttt{s\_pos}\}." \newline
        "There is a \{\texttt{shape}\} at \{\texttt{y\_pos}\}-\{\texttt{x\_pos}\}, lit from \{\texttt{s\_pos}\}." \\
        \midrule
        \(\circled{5}\) & 
        "A \{\texttt{shape}\} of \{\texttt{color}\} color is visible at \{\texttt{y\_pos}\}-\{\texttt{x\_pos}\}, with a spotlight from \{\texttt{s\_pos}\}." \newline
        "A \{\texttt{color}\} \{\texttt{shape}\} is at \{\texttt{y\_pos}\}-\{\texttt{x\_pos}\}, lit from \{\texttt{s\_pos}\}." \newline
        "The area \{\texttt{y\_pos}\}-\{\texttt{x\_pos}\} shows a \{\texttt{color}\} \{\texttt{shape}\}, under a light from \{\texttt{s\_pos}\}." \newline
        "A \{\texttt{color}\} \{\texttt{shape}\} is illuminated at \{\texttt{y\_pos}\}-\{\texttt{x\_pos}\} from \{\texttt{s\_pos}\}." \newline
        "A \{\texttt{color}\} object shaped like a \{\texttt{shape}\} is lit from \{\texttt{s\_pos}\}." \\
        \midrule
        \(\circled{6}\) & 
        "A \{\texttt{color}\} \{\texttt{shape}\} is lit by a \{\texttt{s\_color}\} spotlight from \{\texttt{s\_pos}\}, at \{\texttt{y\_pos}\}-\{\texttt{x\_pos}\}." \newline
        "At \{\texttt{y\_pos}\}-\{\texttt{x\_pos}\}, a \{\texttt{color}\} \{\texttt{shape}\} is under a \{\texttt{s\_color}\} light from \{\texttt{s\_pos}\}." \newline
        "The \{\texttt{shape}\} is \{\texttt{color}\}, under a \{\texttt{s\_color}\} light at \{\texttt{s\_pos}\}." \newline
        "A \{\texttt{color}\} \{\texttt{shape}\} under a \{\texttt{s\_color}\} spotlight at \{\texttt{s\_pos}\}, located at \{\texttt{y\_pos}\}-\{\texttt{x\_pos}\}." \newline
        "A \{\texttt{color}\} \{\texttt{shape}\} stands under a \{\texttt{s\_color}\} light from \{\texttt{s\_pos}\}." \\
        \midrule
        \(\circled{7}\) & 
        "A \{\texttt{color}\} \{\texttt{shape}\} under a \{\texttt{s\_color}\} spotlight at \{\texttt{s\_pos}\}, with a \{\texttt{b\_color}\} background, at \{\texttt{y\_pos}\}-\{\texttt{x\_pos}\}." \newline
        "At \{\texttt{y\_pos}\}-\{\texttt{x\_pos}\}, a \{\texttt{color}\} \{\texttt{shape}\} is under a \{\texttt{s\_color}\} light from \{\texttt{s\_pos}\}, against a \{\texttt{b\_color}\} background." \newline
        "A \{\texttt{color}\} \{\texttt{shape}\} appears at \{\texttt{y\_pos}\}-\{\texttt{x\_pos}\}, lit by \{\texttt{s\_color}\} from \{\texttt{s\_pos}\}, with \{\texttt{b\_color}\} background." \newline
        "The scene shows a \{\texttt{color}\} \{\texttt{shape}\} under \{\texttt{s\_color}\} lighting at \{\texttt{s\_pos}\}, with a \{\texttt{b\_color}\} backdrop." \newline
        "A \{\texttt{color}\} object shaped like a \{\texttt{shape}\}, under a \{\texttt{s\_color}\} spotlight at \{\texttt{s\_pos}\}, with a \{\texttt{b\_color}\} background." \\
        \bottomrule
    \end{tabular}
\end{table}

\paragraph{Evaluation metrics.}  
After training the representations, we freeze the encoders and, for each bias setting, train both a linear classifier (logistic regression) and a nonlinear classifier (a two-layer MLP with ReLU activation) for each discrete latent factor. Similarly, for continuous latent factors, we train both a linear regressor and a nonlinear regressor (a two-layer MLP with ReLU activation). All classifiers and regressors are trained for \(10{,}000\) steps using the evaluation subset.

We assess the predictive performance of the learned representations by evaluating their ability to recover the ground-truth latent factors corresponding to their respective modalities. This evaluation accounts for the fact that some semantic variables may appear in discrete or continuous form, depending on the modality and rendering process.

Prediction performance is measured on the test set using the Matthews Correlation Coefficient (MCC) for discrete factors and the coefficient of determination (\(R^2\)) for continuous factors. Metrics are computed separately for each latent factor.

\subsection{Additional Results}
\label{sec:c3d_ablations}

\paragraph{Results of text representations.}  
We now turn to the analysis of the text representations learned by MMCL. As shown in \Cref{fig:c3did_text}, we observe patterns similar to those found in the image modality. In particular, discrete latent semantic variables—such as \texttt{shape}, \texttt{x\_pos}, and \texttt{y\_pos}—are reliably identified, provided they are unbiased and consistently aligned across modalities. A notable case is \texttt{s\_pos}, which is a continuous latent factor in both modalities but is mapped to only five discrete tokens in the text observations, rendering the text generation process non-invertible. Despite this lossy transformation, the model achieves a relatively high \(R^2\), suggesting that the learned text representations remain strongly influenced by the alignment objective, even when semantic information is partially lost.

For other latent semantic variables that are continuous in the image modality but discretized in the text modality—such as \texttt{color}, \texttt{s\_color}, and \texttt{b\_color}—we observe varying degrees of performance degradation. This drop in performance is likely attributable not only to the quantization of continuous values but also to semantic ambiguity introduced during text generation. Notably, all three attributes correspond to different aspects of color, yet they may be described using overlapping vocabulary drawn from distinct color palettes. For instance, \texttt{tab:cyan} from \texttt{TABLEAU\_COLORS} refers to object color (\texttt{color}), \texttt{cyan} from \texttt{CSS4\_COLORS} describes spotlight color (\texttt{s\_color}), and \texttt{xkcd:cyan} from \texttt{XKCD\_COLORS} indicates background color (\texttt{b\_color}). Despite referencing different latent variables, these tokens all contain the word \texttt{cyan}, which is tokenized identically by \texttt{nltk.PunktTokenizer}, resulting in ambiguity in the text observations. These findings highlight the importance of using distinct and unambiguous content words when representing semantically different concepts in multimodal learning—particularly when the text modality is not treated merely as an auxiliary input for visual representation learning.

Interestingly, the identification of \texttt{x\_pos} and \texttt{y\_pos} in the text representations does not lead to improved predictability of \texttt{color}, in contrast to what is often observed in the image modality. This aligns with our theoretical expectation that perturbation biases disrupt the underlying causal structure in the image latent space.

Regarding the text-specific factor \texttt{phrase}, we find it to be partially encoded in the learned representations. This contrasts with the image-specific continuous factors, which are consistently omitted. The partial identifiability of \texttt{phrase} is likely attributable to its discrete nature, which violates the conditions typically required for modality-specific factors to be excluded—consistent with findings reported in~\citep{daunhawer2022identifiability}. Moreover, this effect appears more pronounced under selection bias settings, particularly when the encoding dimensionality exceeds the true dimensionality of the unbiased semantic subspace.

Overall, the behavior of the learned text representations provides empirical support for our theoretical analysis, even under conditions where certain modeling assumptions are relaxed or violated.

\begin{figure}[ht]
\centering
    \includegraphics[width=0.45\textwidth]{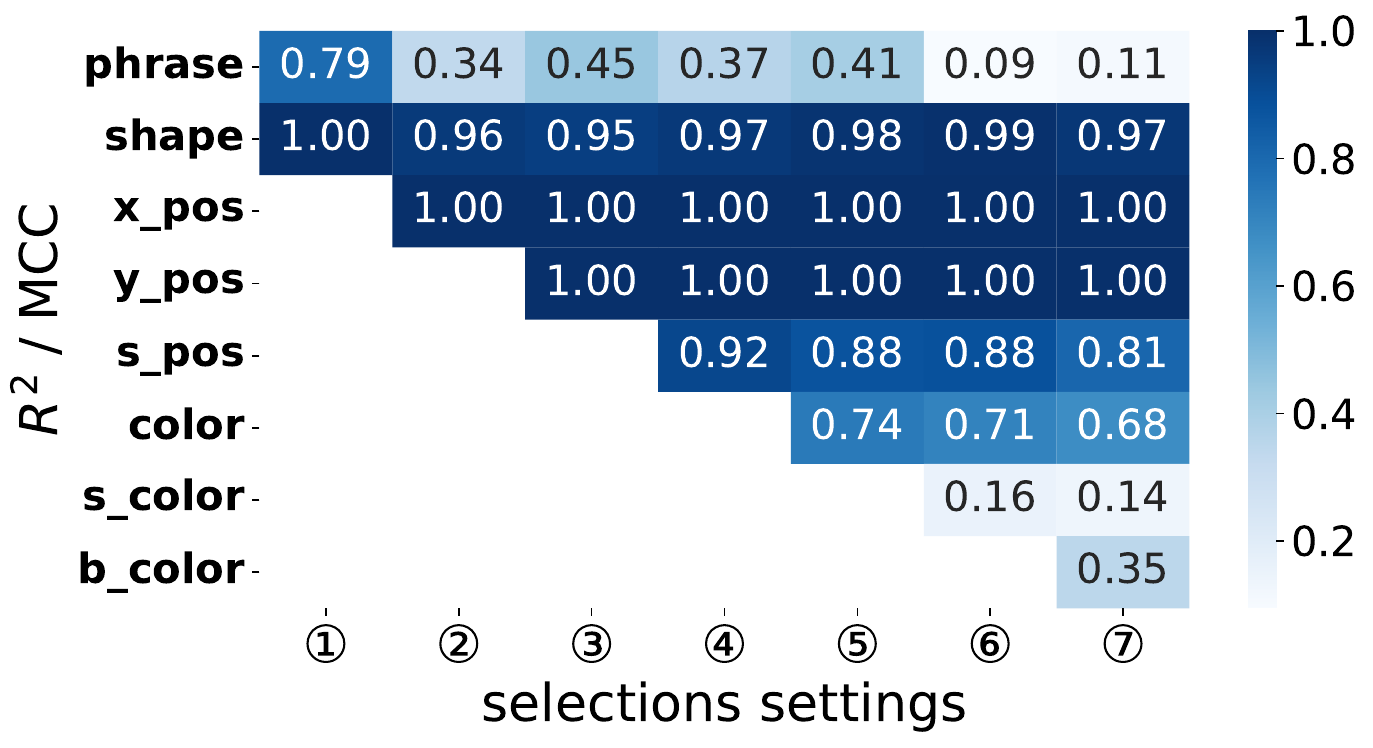}
    \hfill
    \includegraphics[width=0.433\textwidth]{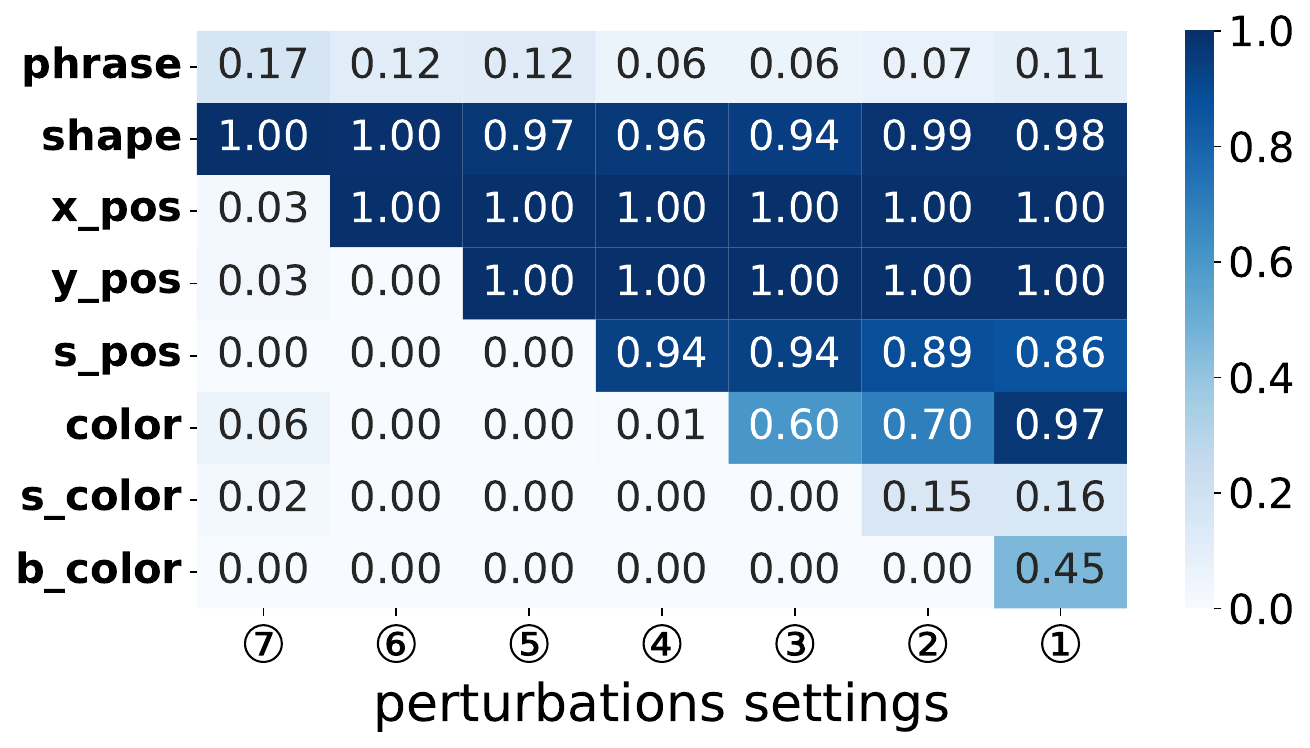}
\caption{Prediction of latent semantic variables under misalignment with text features. \(R^2\) for continuous and MCC for discrete factors. Left: Selection bias. Right: Perturbation bias.}
\label{fig:c3did_text}
\end{figure}

\paragraph{Linearity of learned representations.}  
We assess the linear separability of the learned image and text representations by evaluating the performance of linear classifiers and regressors trained to predict each latent factor. As shown in \Cref{fig:c3did_image_linear} and \Cref{fig:c3did_text_linear}, and in comparison to the nonlinear prediction results, the findings indicate that not all latent semantic variables are linearly embedded in the representations learned through MMCL. In particular, the semantic factor \texttt{color} consistently demonstrates poor linear predictability, suggesting that it is encoded nonlinearly in both modalities. Conversely, factors such as \texttt{x\_pos} and \texttt{y\_pos} exhibit strong linear separability in certain settings, indicating that these semantic variables are more directly captured in the latent space of both the image and text encoders. Overall, these results suggest that the linearity of the learned representations is factor-dependent and shaped by both modality-specific encoding strategies and the underlying structure of the input data—highlighting an important direction for future investigation.

\begin{figure}[ht]
\centering
    \includegraphics[width=0.45\textwidth]{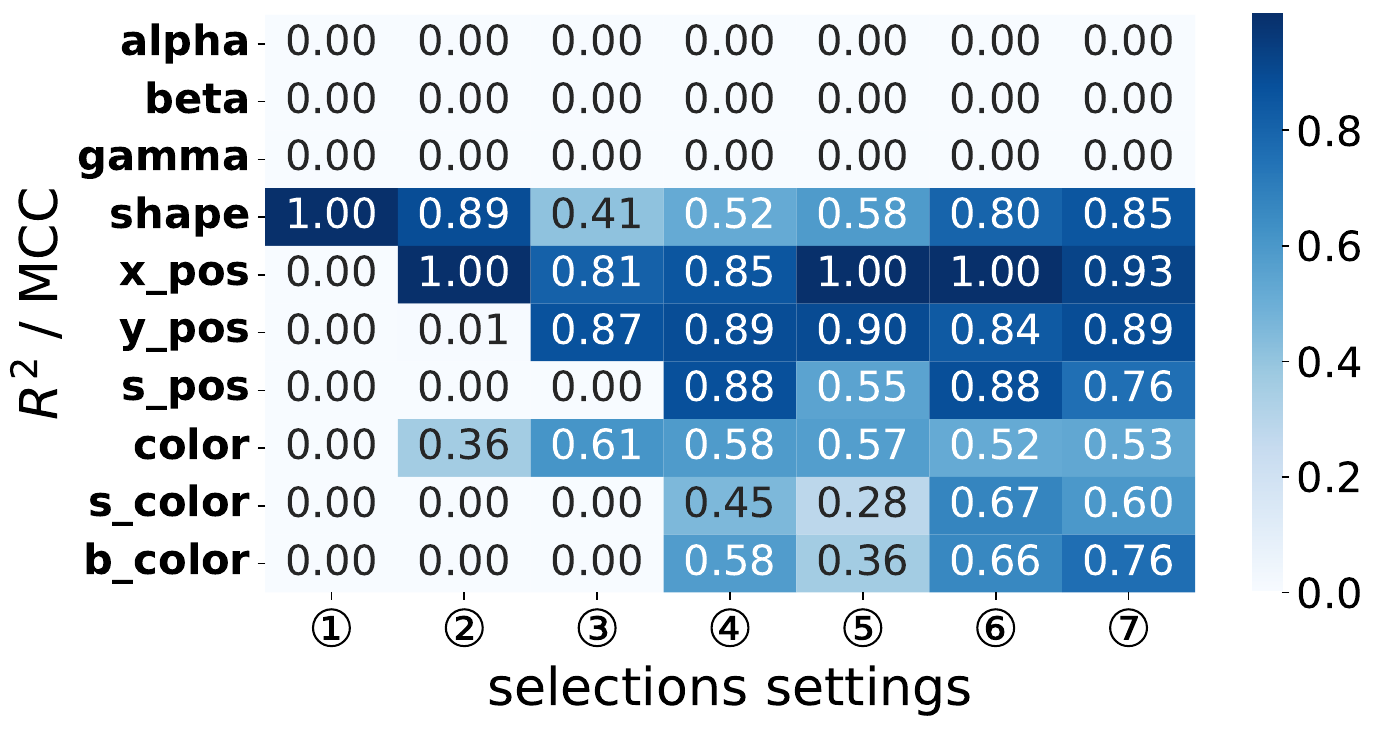}
    \hfill
    \includegraphics[width=0.43\textwidth]{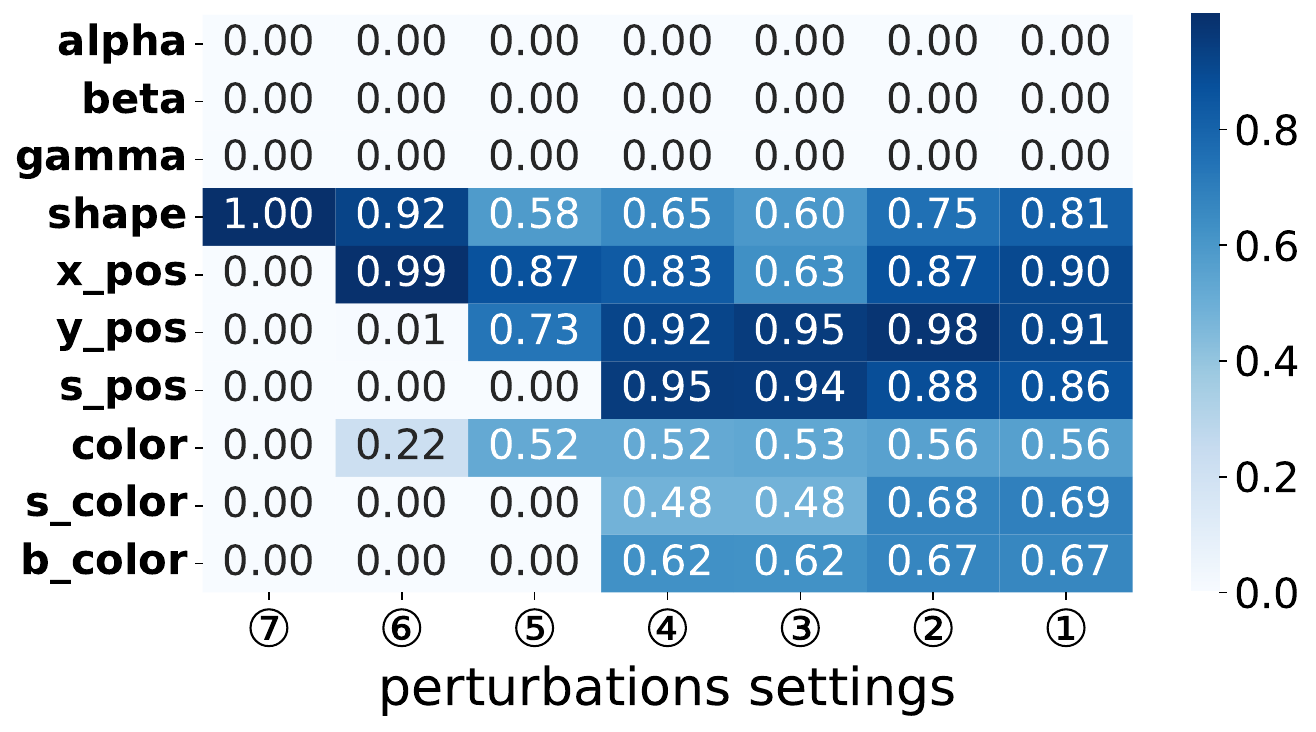}
\caption{Evaluating linearity of image features under misalignment settings. \(R^2\) for continuous and MCC for discrete factors. Left: Selection bias. Right: Perturbation bias.}
\label{fig:c3did_image_linear}
\end{figure}

\begin{figure}[ht]
\centering
    \includegraphics[width=0.45\textwidth]{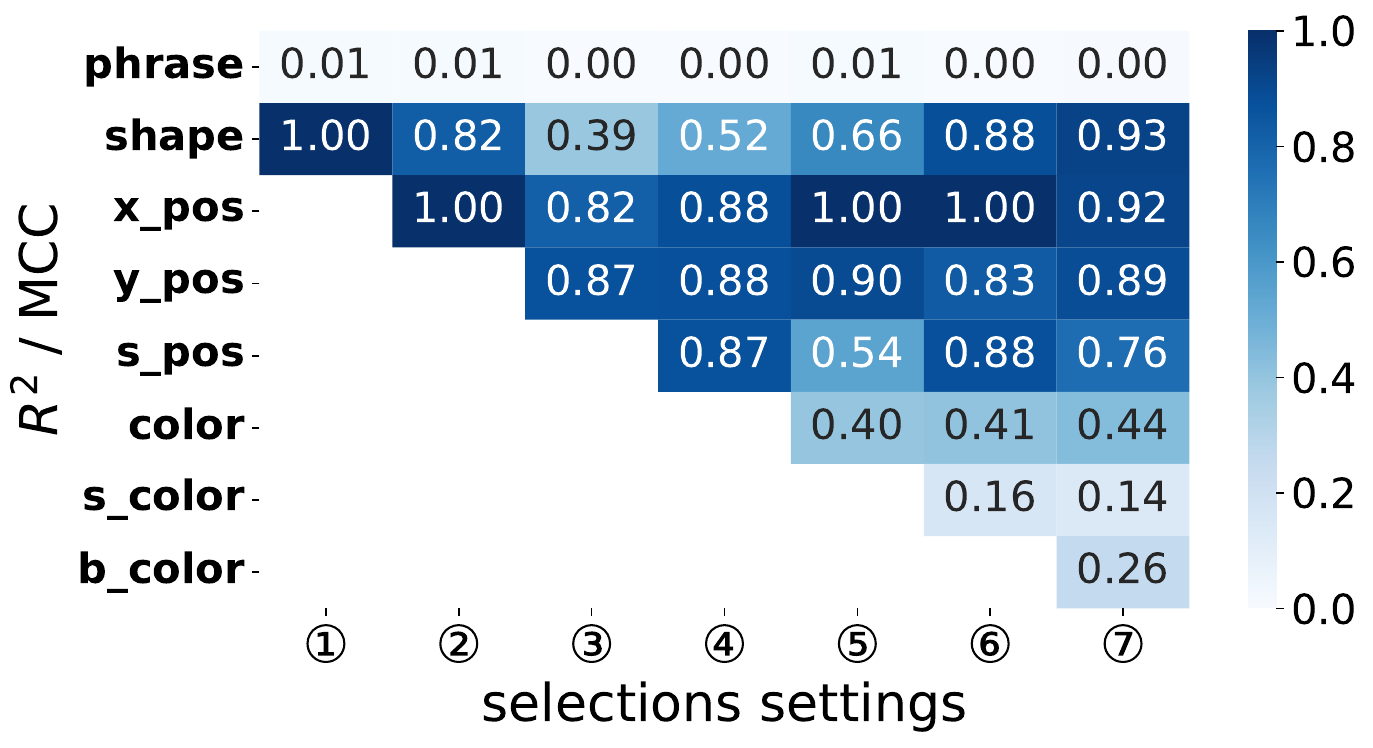}
    \hfill
    \includegraphics[width=0.43\textwidth]{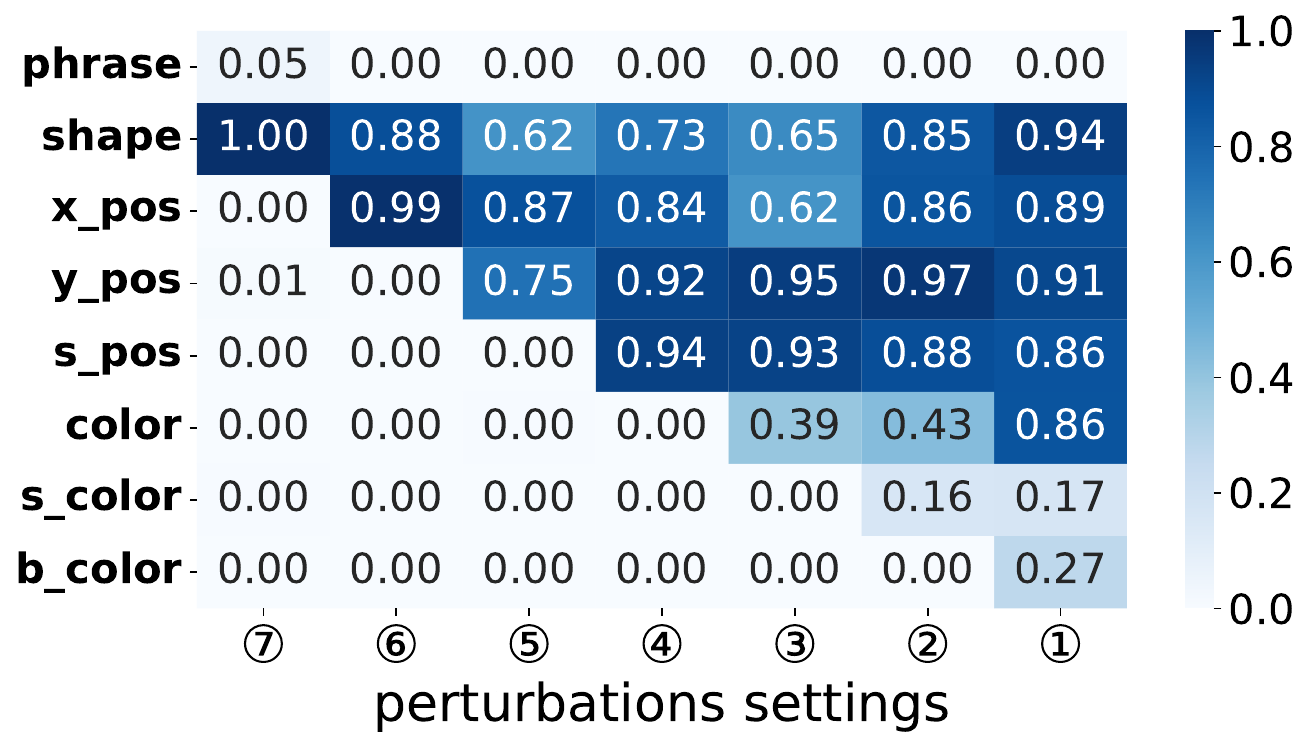}
\caption{Evaluating linearity of text features under misalignment settings. \(R^2\) for continuous and MCC for discrete factors. Left: Selection bias. Right: Perturbation bias.}
\label{fig:c3did_text_linear}
\end{figure}

\section{Details on OpenCLIP Case Study}
\label{sec:app_casestudy}

\subsection{Concepts Taxonomy and Data Collection}
\label{sec:app_data_collect}
\paragraph{Concepts taxonomy.} To validate the representations of pretrained OpenCLIP models, we curated a set of 146 concepts organized into 15 distinct concept groups. Each concept group is treated as a separate solution space, and we aim to ensure that the concepts within each group are mutually exclusive. \Cref{tab:concept_taxonomy} presents all the concepts used in the case study. For simplicity, abbreviations are used in the main text, with their full meanings provided in the concept group column.

\begin{table}[ht]
\caption{Taxonomy of concept categories with abbreviations, grouped by conceptual relevance.}
\centering
\small
\renewcommand{\arraystretch}{1.2}
\begin{tabularx}{\textwidth}{llX}
\toprule
\textbf{Group} & \textbf{Explanation} & \textbf{Concept Name} \\
\midrule
\texttt{Animal} & animals & dog, cat, horse, bird, elephant, giraffe, cow, zebra, rabbit, duck \\
\texttt{Clothing} & clothing & shirt, pants, dress, shoes, hat, jacket, skirt, tie, hoodie, socks \\
\texttt{Color} & colors & red, blue, green, yellow, black, white, orange, pink, gray, brown \\
\texttt{Food} & food & pizza, burger, sandwich, salad, cake, coffee, tea, beer, ice cream, noodles \\
\texttt{Object} & object & chair, table, phone, laptop, car, bottle, bag, cup, backpack, television \\
\texttt{Role} & roles & chef, teacher, athlete, doctor, engineer, artist, pilot, firefighter, police officer, lawyer \\
\texttt{Scene} & scenes & beach, kitchen, forest, street, park, office, bedroom, classroom, stadium, playground \\
\texttt{Vehicle} & vehicles & bus, truck, airplane, train, motorcycle, bicycle, boat, van, taxi, scooter \\
\texttt{Weather} & weathers & sandstorm, drought, rain, snow, fog, rainbow, storm, moonlight, overcast, frost \\
\texttt{Texture} & texture & glossy, matte, rough, smooth, fuzzy, silky, grainy, wrinkled, slippery, furry \\
\texttt{POV} & viewpoint & close-up, wide shot, top-down, low angle, high angle, aerial, shallow-depth, long exposure \\
\texttt{Emot.} & mental state & tired, focused, surprised, proud, shy, bored, confused, excited, thoughtful, nervous \\
\texttt{Postproc.} & post-procssing & hdr, desaturated, color-filtered, over-sharpened, saturated, sepia, heavily edited, bokeh, tilt-shift \\
\texttt{Trait} & moral bias & easy-going, cold-hearted, evil, criminal, immoral, pure, corrupt, hero, cheater, deserving \\
\texttt{Stere.} & stereotypes & exotic look, ethnic look, slacker, aggressive look, gangster look, terrorist look, illegal look, privileged, ghetto look \\
\bottomrule
\end{tabularx}
\label{tab:concept_taxonomy}
\end{table}

\paragraph{Data collection.} We collected up to 200 images per concept by using the concept names as search queries on Flickr, filtering for content licensed under Creative Commons. To ensure relevance, all images were manually reviewed, and those deemed unrelated to the queried concept were removed. The remaining images were retained for subsequent experiments. \Cref{tab:concept_image_counts_compact} summarizes the number of images retained for each concept. It is important to note that some concept groups, such as those related to stereotypes, include terms that may reflect societal biases. We do not endorse or perpetuate these biases; rather, these concepts are included solely to assess whether pretrained models encode or reproduce such biased associations in their representations. 

\begin{table}[htbp]
\caption{Concept-wise image counts grouped by concept category for zero-shot evaluation.}
\centering
\small
\label{tab:concept_image_counts_compact}

\begin{tabularx}{\textwidth}{@{} X X c | X X c | X X c @{}}
\toprule
\textbf{Group} & \textbf{Concept} & \textbf{Count} &
\textbf{Group} & \textbf{Concept} & \textbf{Count} &
\textbf{Group} & \textbf{Concept} & \textbf{Count} \\
\midrule
\texttt{Animal} & dog & 124 & \texttt{Animal} & cat & 112 & \texttt{Animal} & horse & 109 \\
\texttt{Animal} & bird & 106 & \texttt{Animal} & elephant & 98 & \texttt{Animal} & giraffe & 90 \\
\texttt{Animal} & cow & 88 & \texttt{Animal} & zebra & 81 & \texttt{Animal} & rabbit & 77 \\
\texttt{Animal} & duck & 65 & \texttt{Clothing} & shirt & 113 & \texttt{Clothing} & pants & 104 \\
\texttt{Clothing} & dress & 101 & \texttt{Clothing} & shoes & 96 & \texttt{Clothing} & hat & 92 \\
\texttt{Clothing} & jacket & 87 & \texttt{Clothing} & skirt & 84 & \texttt{Clothing} & tie & 78 \\
\texttt{Clothing} & hoodie & 72 & \texttt{Clothing} & socks & 68 & \texttt{Color} & red & 130 \\
\texttt{Color} & blue & 127 & \texttt{Color} & green & 119 & \texttt{Color} & yellow & 114 \\
\texttt{Color} & black & 111 & \texttt{Color} & white & 109 & \texttt{Color} & orange & 97 \\
\texttt{Color} & pink & 91 & \texttt{Color} & gray & 88 & \texttt{Color} & brown & 82 \\
\texttt{Food} & pizza & 118 & \texttt{Food} & burger & 113 & \texttt{Food} & sandwich & 107 \\
\texttt{Food} & salad & 101 & \texttt{Food} & cake & 95 & \texttt{Food} & coffee & 93 \\
\texttt{Food} & tea & 90 & \texttt{Food} & beer & 85 & \texttt{Food} & ice cream & 82 \\
\texttt{Food} & noodles & 78 & \texttt{Emot.} & tired & 58 & \texttt{Emot.} & focused & 55 \\
\texttt{Emot.} & surprised & 51 & \texttt{Emot.} & proud & 49 & \texttt{Emot.} & shy & 45 \\
\texttt{Emot.} & bored & 44 & \texttt{Emot.} & confused & 42 & \texttt{Emot.} & excited & 40 \\
\texttt{Emot.} & thoughtful & 39 & \texttt{Emot.} & nervous & 36 & \texttt{Trait} & easy-going & 29 \\
\texttt{Trait} & cold-hearted & 27 & \texttt{Trait} & evil & 25 & \texttt{Trait} & criminal & 24 \\
\texttt{Trait} & immoral & 23 & \texttt{Trait} & pure & 22 & \texttt{Trait} & corrupt & 20 \\
\texttt{Trait} & hero & 18 & \texttt{Trait} & cheater & 17 & \texttt{Trait} & deserving & 15 \\
\texttt{Object} & chair & 120 & \texttt{Object} & table & 115 & \texttt{Object} & phone & 112 \\
\texttt{Object} & laptop & 110 & \texttt{Object} & car & 108 & \texttt{Object} & bottle & 105 \\
\texttt{Object} & bag & 100 & \texttt{Object} & cup & 97 & \texttt{Object} & backpack & 93 \\
\texttt{Object} & television & 90 & \texttt{Postproc.} & hdr & 33 & \texttt{Postproc.} & desaturated & 32 \\
\texttt{Postproc.} & color-filtered & 31 & \texttt{Postproc.} & over-sharpened & 30 & \texttt{Postproc.} & saturated & 28 \\
\texttt{Postproc.} & sepia & 27 & \texttt{Postproc.} & heavily edited & 26 & \texttt{Postproc.} & bokeh & 24 \\
\texttt{Postproc.} & tilt-shift & 22 & \texttt{Role} & chef & 88 & \texttt{Role} & teacher & 86 \\
\texttt{Role} & athlete & 85 & \texttt{Role} & doctor & 83 & \texttt{Role} & engineer & 80 \\
\texttt{Role} & artist & 78 & \texttt{Role} & pilot & 74 & \texttt{Role} & firefighter & 71 \\
\texttt{Role} & police officer & 68 & \texttt{Role} & lawyer & 66 & \texttt{Scene} & beach & 120 \\
\texttt{Scene} & kitchen & 115 & \texttt{Scene} & forest & 110 & \texttt{Scene} & street & 107 \\
\texttt{Scene} & park & 104 & \texttt{Scene} & office & 102 & \texttt{Scene} & bedroom & 98 \\
\texttt{Scene} & classroom & 95 & \texttt{Scene} & stadium & 92 & \texttt{Scene} & playground & 89 \\
\texttt{Stere.} & exotic look & 21 & \texttt{Stere.} & ethnic look & 20 & \texttt{Stere.} & slacker & 19 \\
\texttt{Stere.} & aggressive look & 18 & \texttt{Stere.} & gangster look & 17 & \texttt{Stere.} & terrorist look & 16 \\
\texttt{Stere.} & illegal look & 15 & \texttt{Stere.} & privileged & 14 & \texttt{Stere.} & ghetto look & 13 \\
\texttt{Texture} & glossy & 47 & \texttt{Texture} & matte & 45 & \texttt{Texture} & rough & 43 \\
\texttt{Texture} & smooth & 41 & \texttt{Texture} & fuzzy & 40 & \texttt{Texture} & silky & 38 \\
\texttt{Texture} & grainy & 36 & \texttt{Texture} & wrinkled & 34 & \texttt{Texture} & slippery & 33 \\
\texttt{Texture} & furry & 30 & \texttt{Vehicle} & bus & 106 & \texttt{Vehicle} & truck & 104 \\
\texttt{Vehicle} & airplane & 102 & \texttt{Vehicle} & train & 100 & \texttt{Vehicle} & motorcycle & 97 \\
\texttt{Vehicle} & bicycle & 93 & \texttt{Vehicle} & boat & 89 & \texttt{Vehicle} & van & 86 \\
\texttt{Vehicle} & taxi & 83 & \texttt{Vehicle} & scooter & 80 & \texttt{POV} & close-up & 55 \\
\texttt{POV} & wide shot & 52 & \texttt{POV} & top-down & 50 & \texttt{POV} & low angle & 48 \\
\texttt{POV} & high angle & 47 & \texttt{POV} & aerial & 44 & \texttt{POV} & shallow-depth & 42 \\
\texttt{POV} & long exposure & 39 & \texttt{Weather} & sandstorm & 66 & \texttt{Weather} & drought & 63 \\
\texttt{Weather} & rain & 61 & \texttt{Weather} & snow & 58 & \texttt{Weather} & fog & 56 \\
\texttt{Weather} & rainbow & 53 & \texttt{Weather} & storm & 50 & \texttt{Weather} & moonlight & 47 \\
\texttt{Weather} & overcast & 45 & \texttt{Weather} & frost & 42 & & & \\
\bottomrule
\end{tabularx}
\end{table}

\subsection{Details on Caption Frequency Statistics}
\label{sec:app_cap_stats}

To assess caption coverage frequency for each concept, we first enumerated plausible synonyms for every concept name. We then computed the percentage of captions in the LAION-400M dataset that included either the original concept term or one of its synonyms. \Cref{fig:group_converage} in \Cref{sec:openclip} presents the group-wise average coverage rates. 

To explore intra-group variability, \Cref{fig:concept_rate} provides a more granular view of concept-level coverage. It reveals that even within a single concept group, the degree of caption coverage can vary substantially across individual concepts.  As expected, commonly observable concepts, such as object types, colors, and food items, tend to appear more frequently in captions. In contrast, abstract or subjective concepts, such as mental states, post-processing artifacts, moral judgments, and social stereotypes, are significantly underrepresented. This discrepancy may arise from several factors: ethical concerns leading annotators to avoid certain terms, unconscious omission of nuanced details, or general challenges in describing abstract visual content.

\begin{figure}
    \centering
    \includegraphics[width=0.32\linewidth]{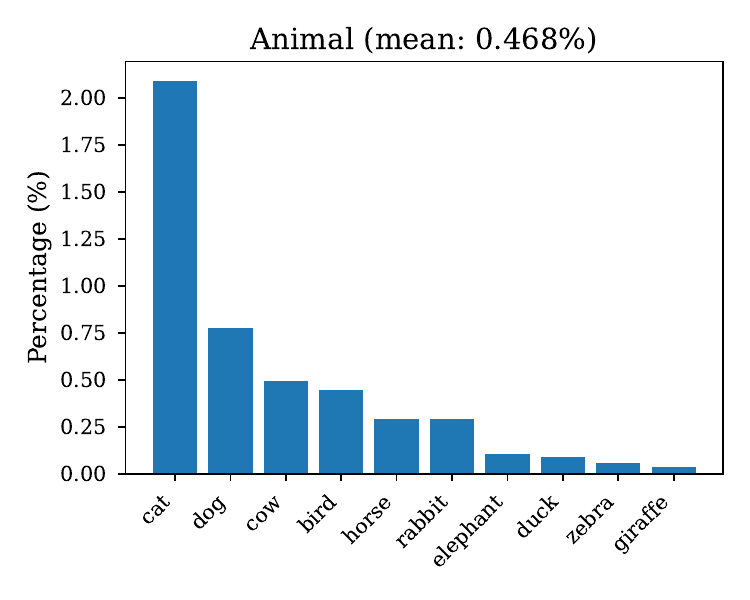}
    \hfill
    \includegraphics[width=0.32\linewidth]{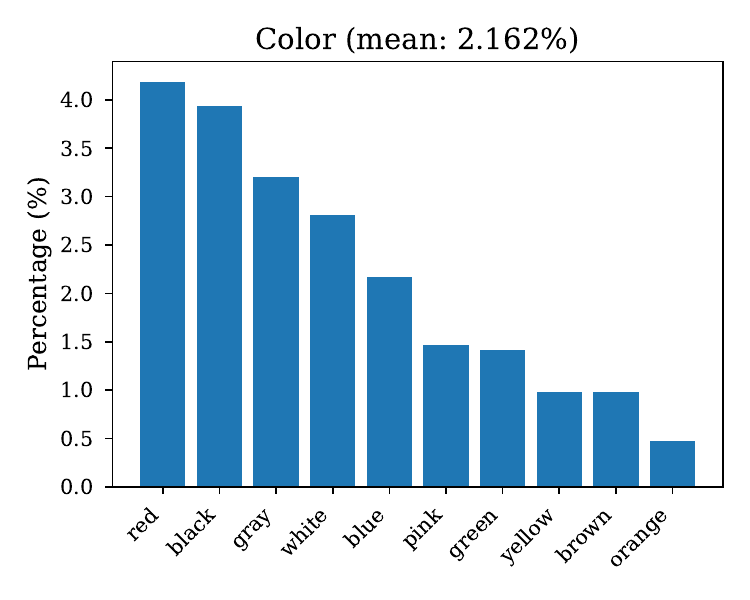}
    \hfill
    \includegraphics[width=0.32\linewidth]{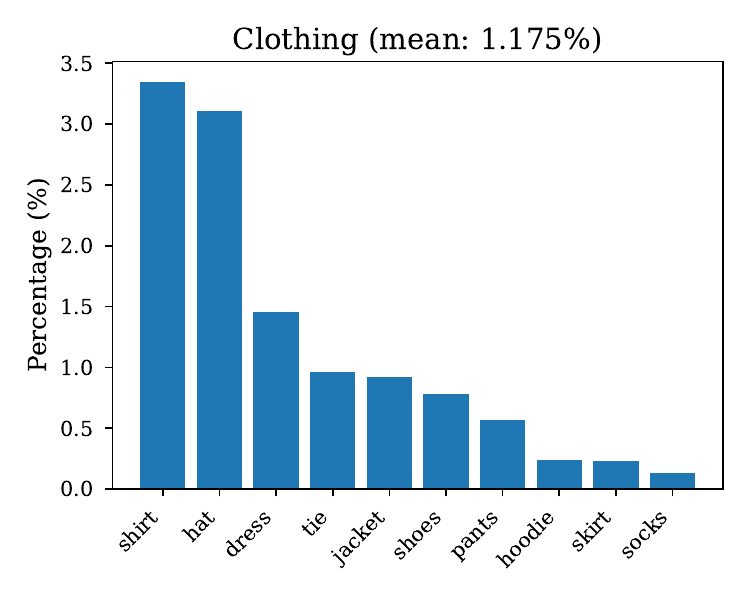}
    \\
    \includegraphics[width=0.32\linewidth]{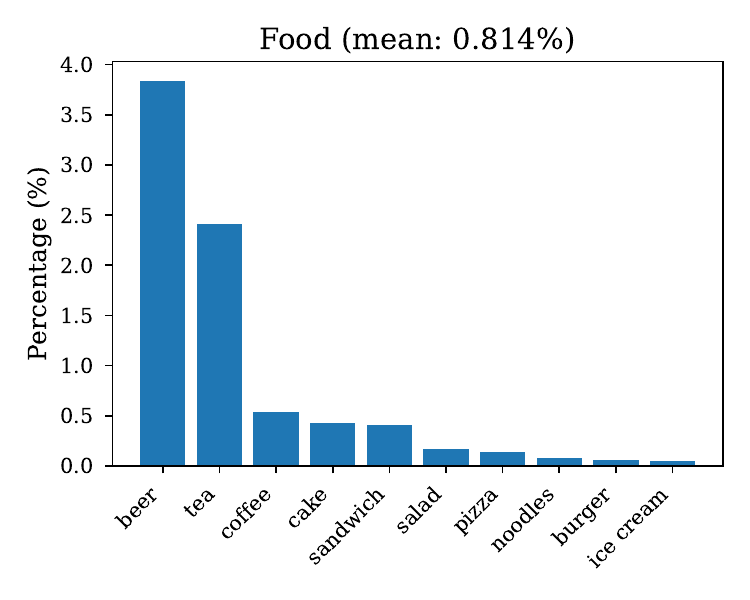}
    \hfill
    \includegraphics[width=0.32\linewidth]{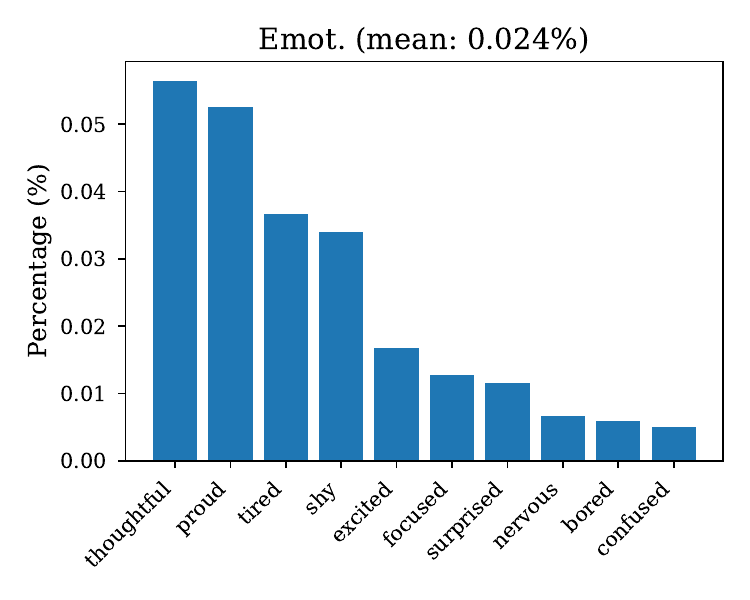}
    \hfill
    \includegraphics[width=0.32\linewidth]{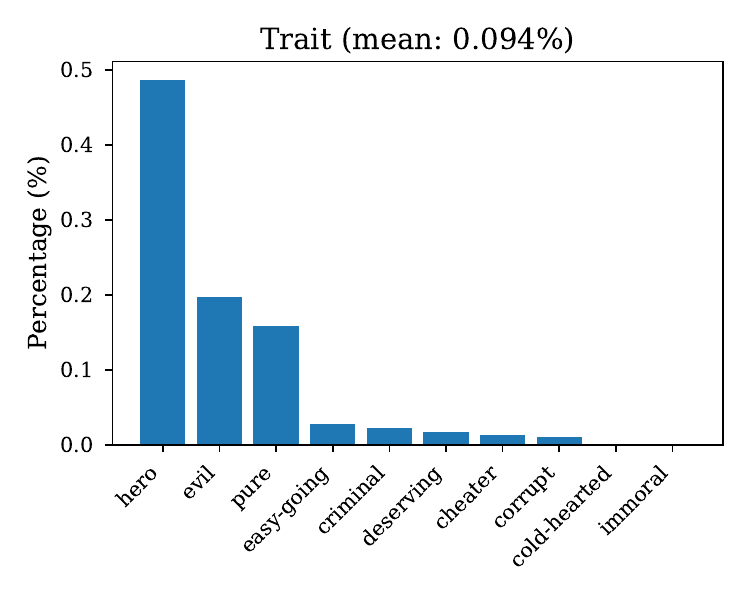}
    \\
    \includegraphics[width=0.32\linewidth]{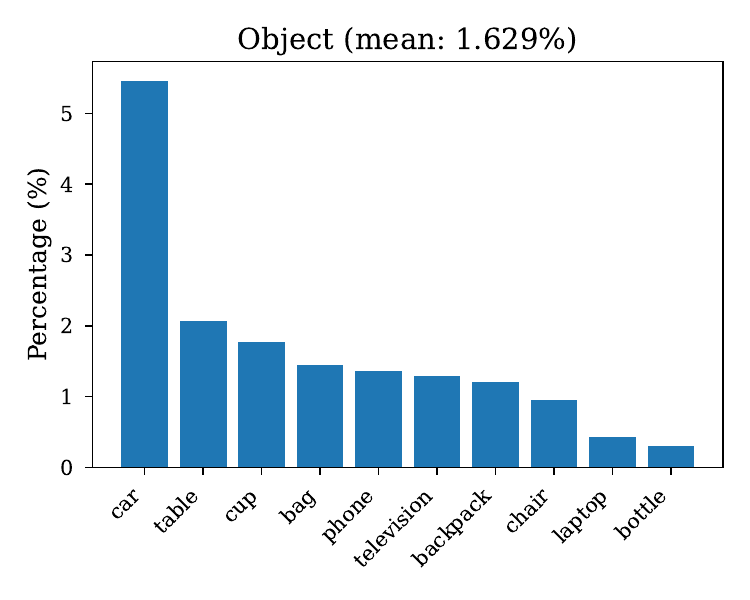}
    \hfill
    \includegraphics[width=0.32\linewidth]{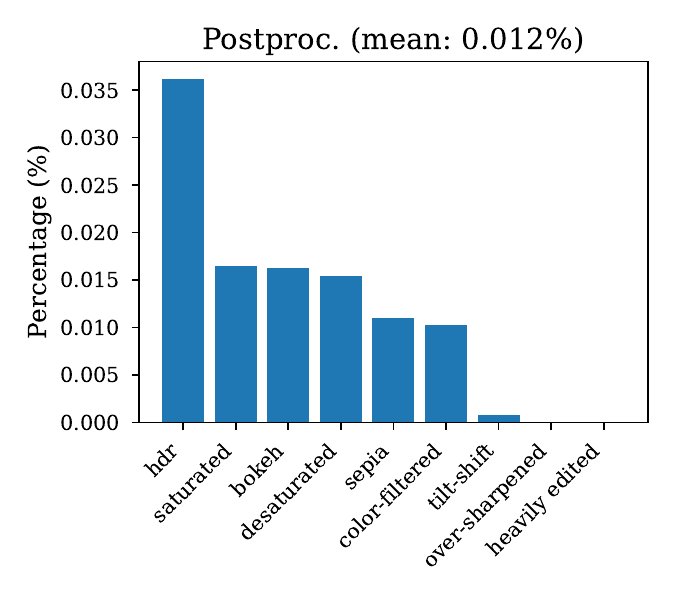}
    \hfill
    \includegraphics[width=0.32\linewidth]{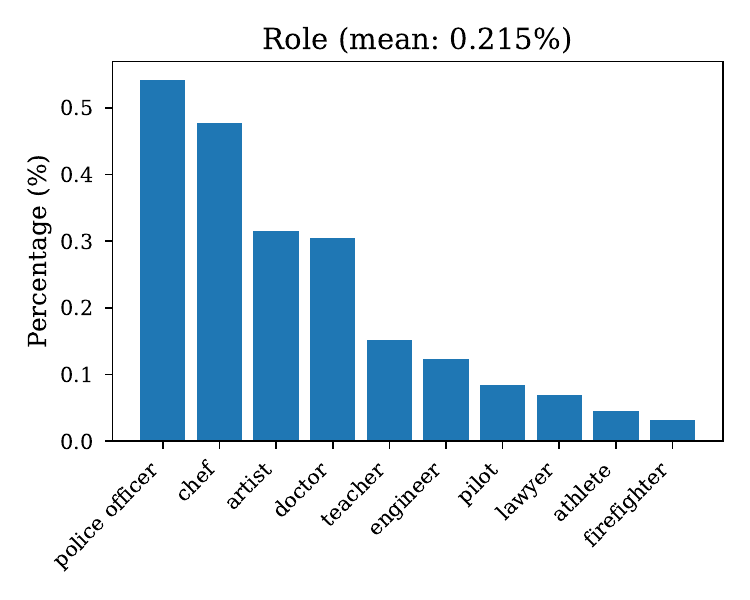}
    \\
    \includegraphics[width=0.32\linewidth]{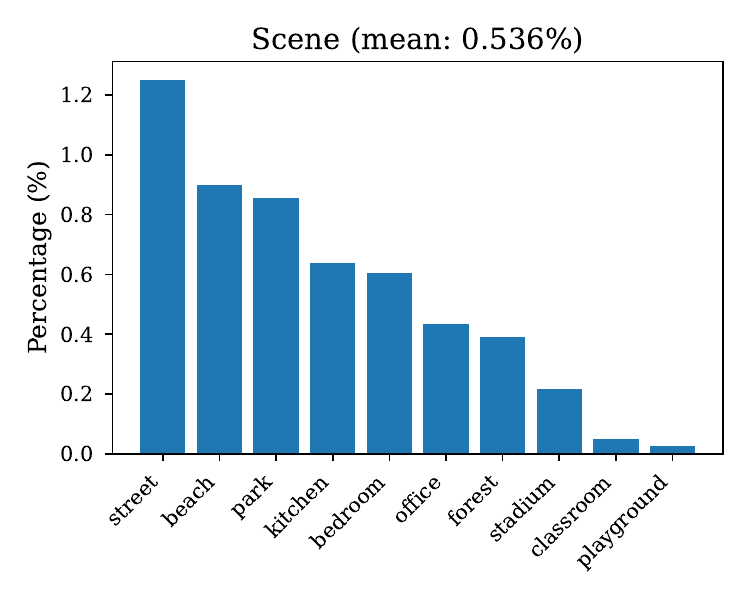}
    \hfill
    \includegraphics[width=0.32\linewidth]{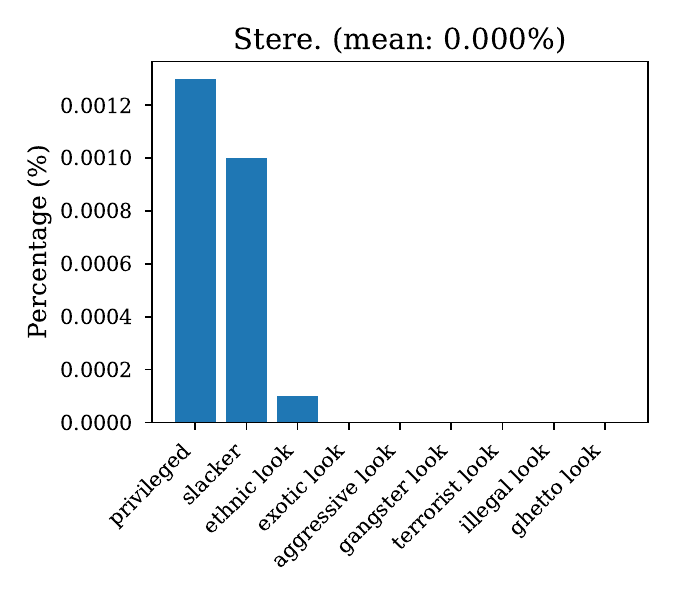}
    \hfill
    \includegraphics[width=0.32\linewidth]{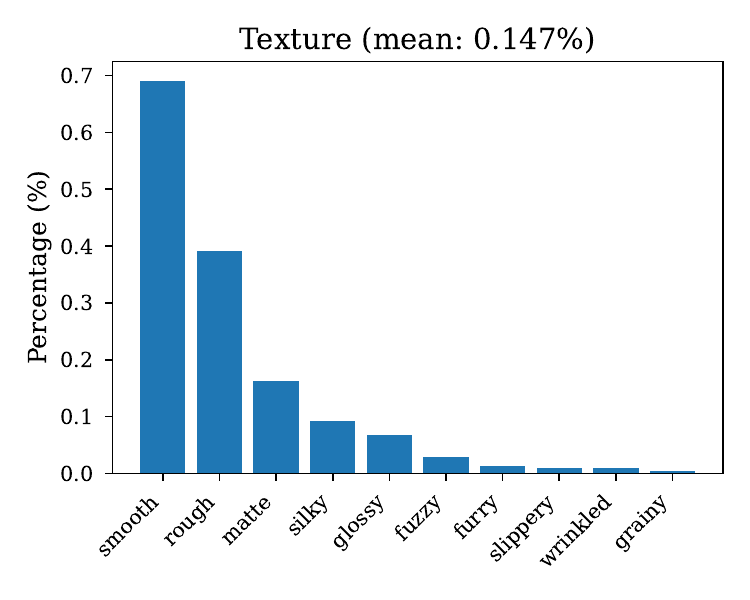}
    \\
    \includegraphics[width=0.32\linewidth]{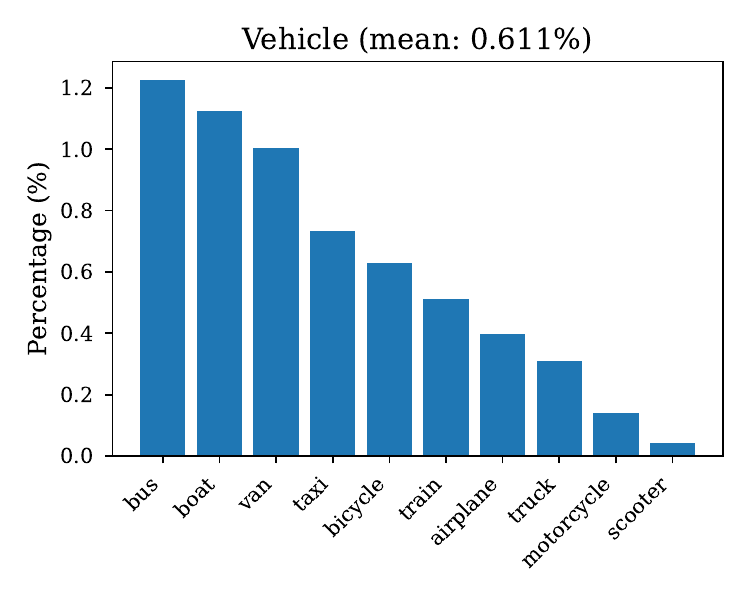}
    \hfill
    \includegraphics[width=0.32\linewidth]{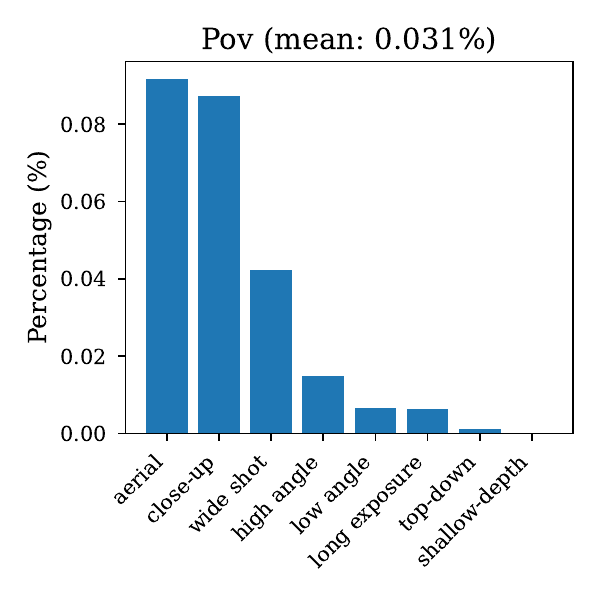}
    \hfill
    \includegraphics[width=0.32\linewidth]{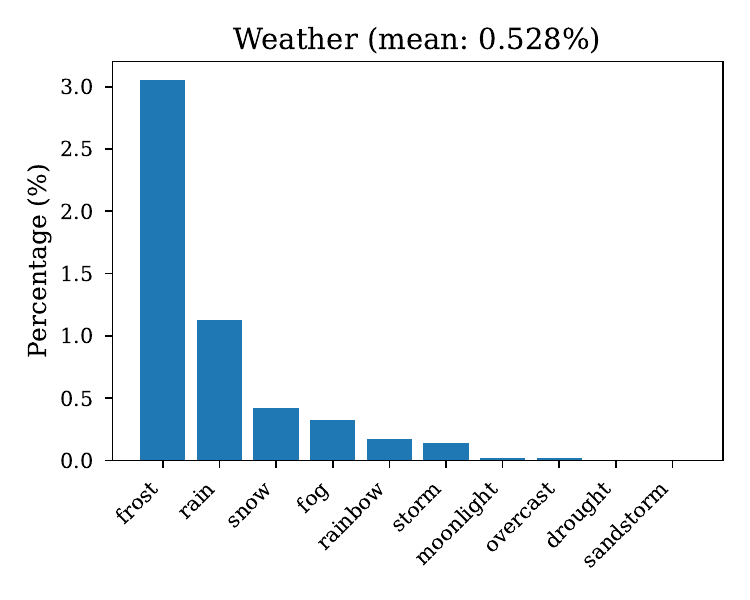}
    \caption{Statistics of concept-level coverage within each group in LAION-400M captions.}
    \label{fig:concept_rate}
\end{figure}

\subsection{Zero-Shot Evaluation Details}
\label{sec:app_zs_details}

To probe the representations of pretrained OpenCLIP models, we evaluate two variants trained on the LAION-400M dataset: ViT-B-32 and ViT-L-14. Using the collected image samples, we perform zero-shot analysis by employing the concept names as text prompts. Each concept group is treated as a distinct solution space. To mitigate the impact of sampling bias in the test dataset, we report macro-averaged F1 scores at the group level.

As shown in \Cref{fig:f1_bar}, concept groups that are frequently captioned—defined here as those with coverage rates exceeding 0.15\%, are generally well represented in the multimodal contrastive learning (MMCL) models, with F1 scores above 65\%. In contrast, concept groups that are severely under-captioned, often due to selection bias or ethical omission, exhibit poor representation, with F1 scores falling below 40\%. The results are consistent across different model scales, further reinforcing our theoretical findings, particularly the impact of selection bias on the learned representations.

\Cref{fig:confmat} presents the intra-group confusion matrix from the zero-shot evaluation using the OpenCLIP ViT-B-32 model trained on the LAION-400M dataset. As clearly shown, well-captioned concept groups exhibit strong diagonal patterns, indicating accurate predictions—whereas concept groups affected by selection bias suffer from substantial misclassification.

This misclassification has multifaceted implications. For certain under-captioned yet potentially valuable concepts, such as \texttt{Texture} (texture) and \texttt{Emot.} (mental state), the poor performance suggests a need for more comprehensive and semantically rich captioning pipelines in multimodal contrastive learning (MMCL) pretraining. Conversely, for sensitive concepts like \texttt{Trait} (moral judgement) and \texttt{Stere.} (stereotypes), which ideally should not be encoded, this under-representation may be desirable. In these cases, the inherent biases in the captioning process may function as epistemic filters that implicitly align model representations with human ethical standards.

\begin{figure}
    \centering
    \includegraphics[width=0.325\linewidth]{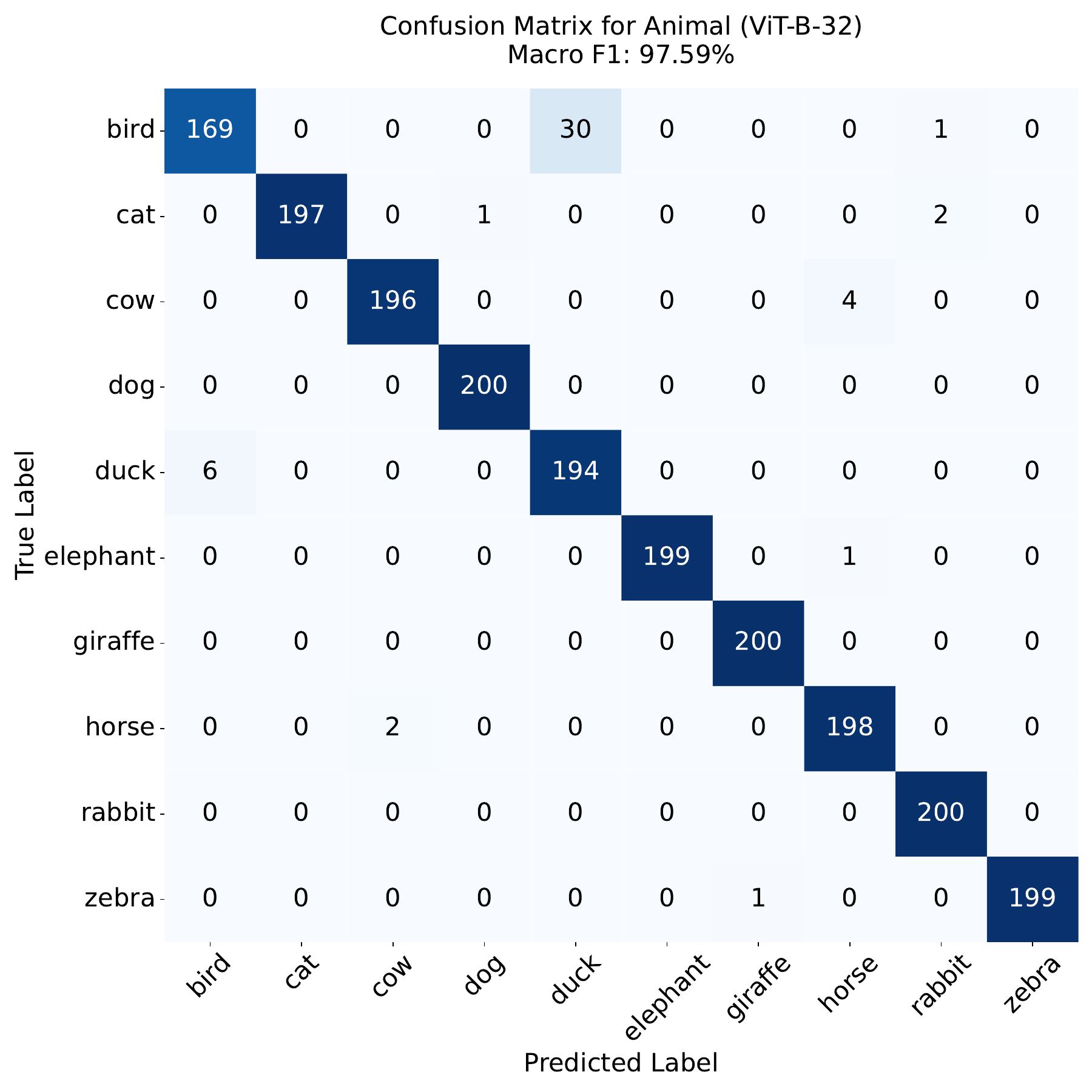}
    \hfill
    \includegraphics[width=0.325\linewidth]{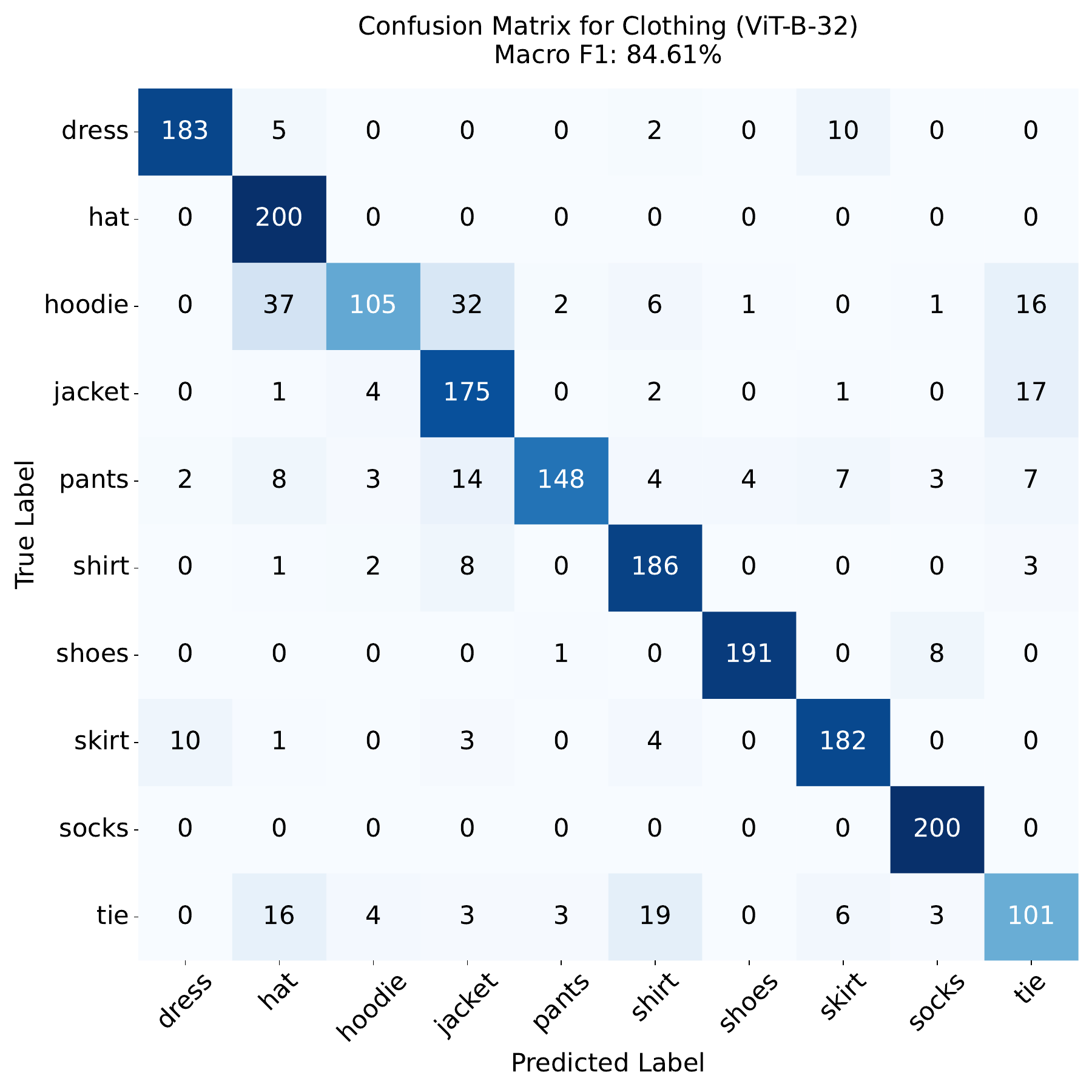}
    \hfill
    \includegraphics[width=0.325\linewidth]{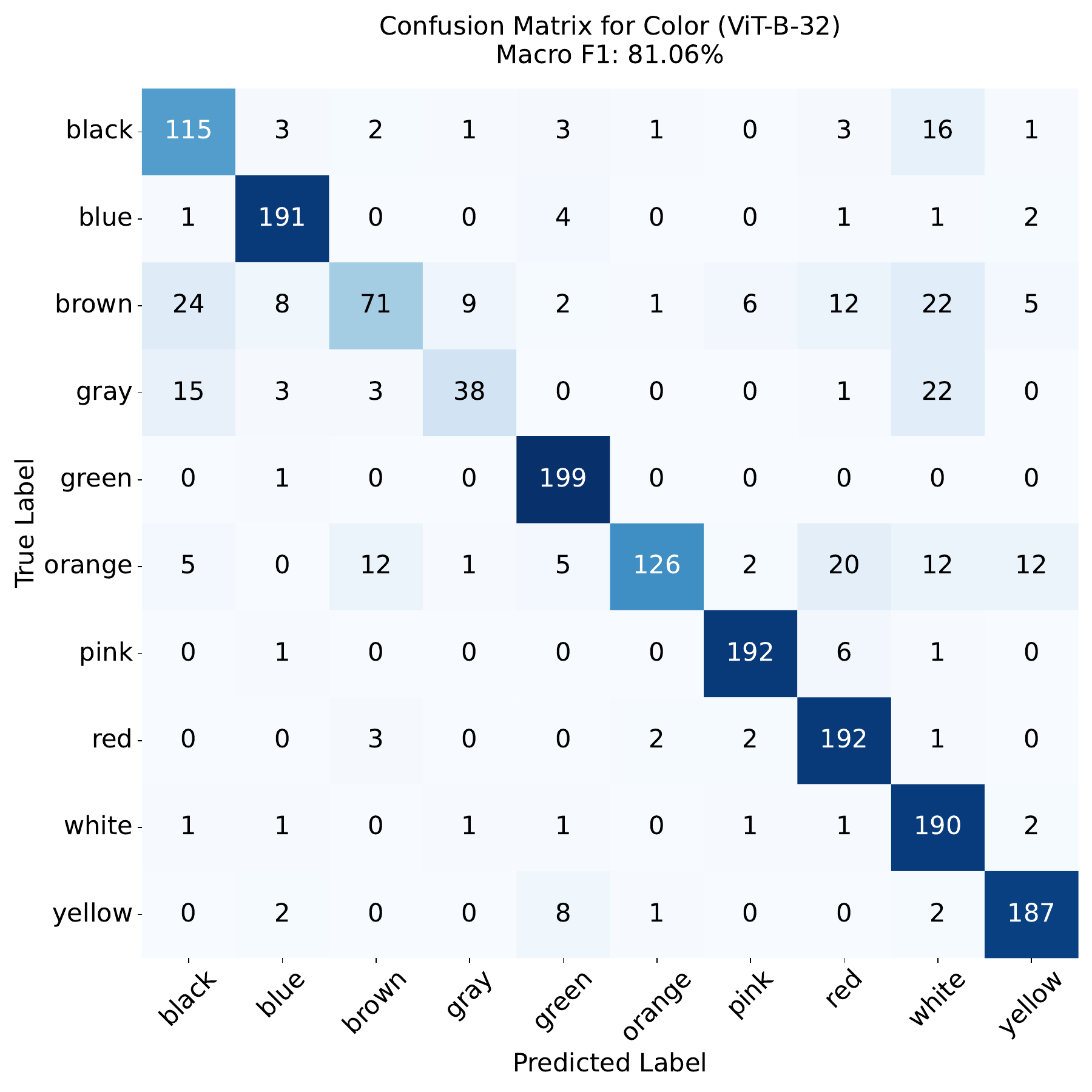}
    \\
    \includegraphics[width=0.325\linewidth]{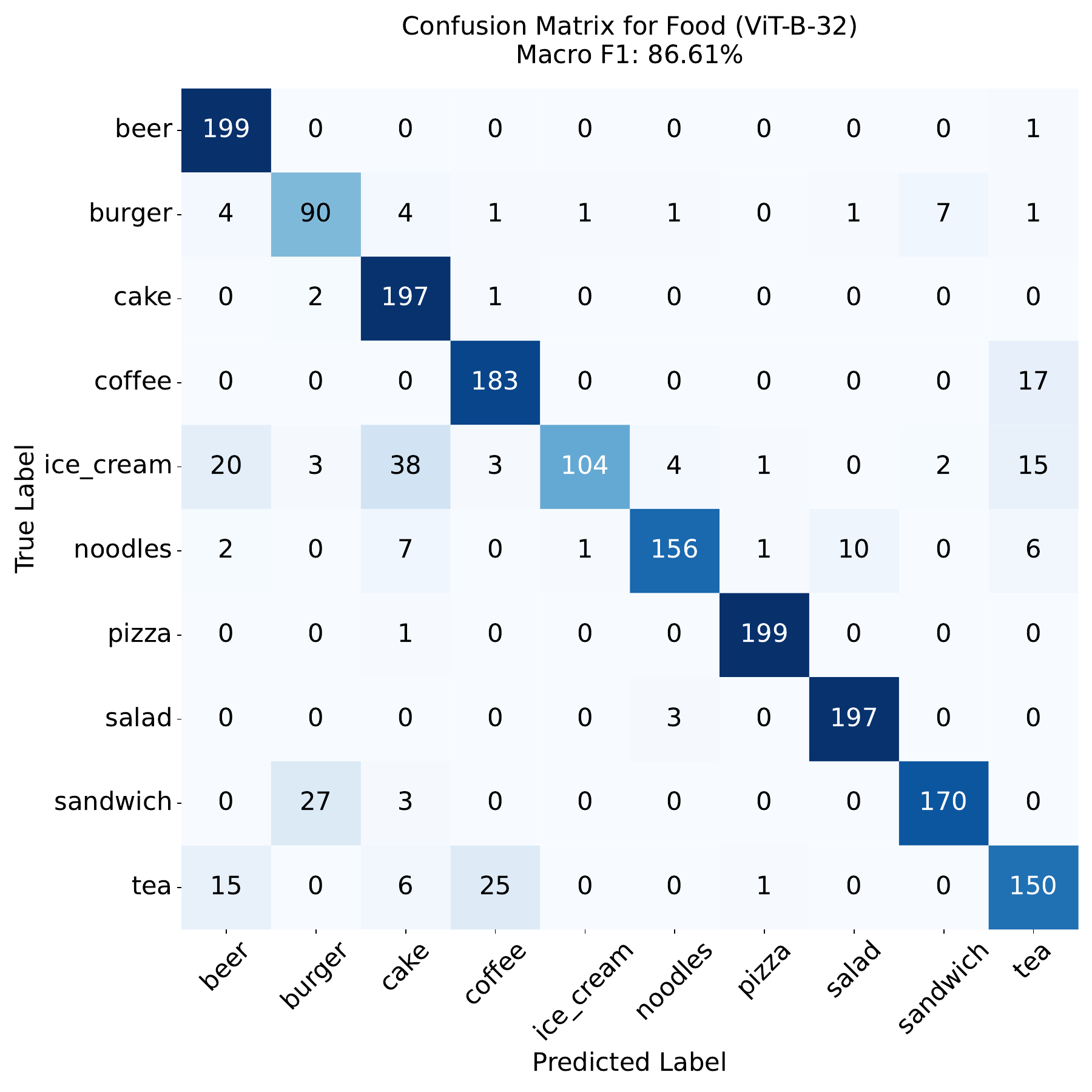}
    \hfill
    \includegraphics[width=0.325\linewidth]{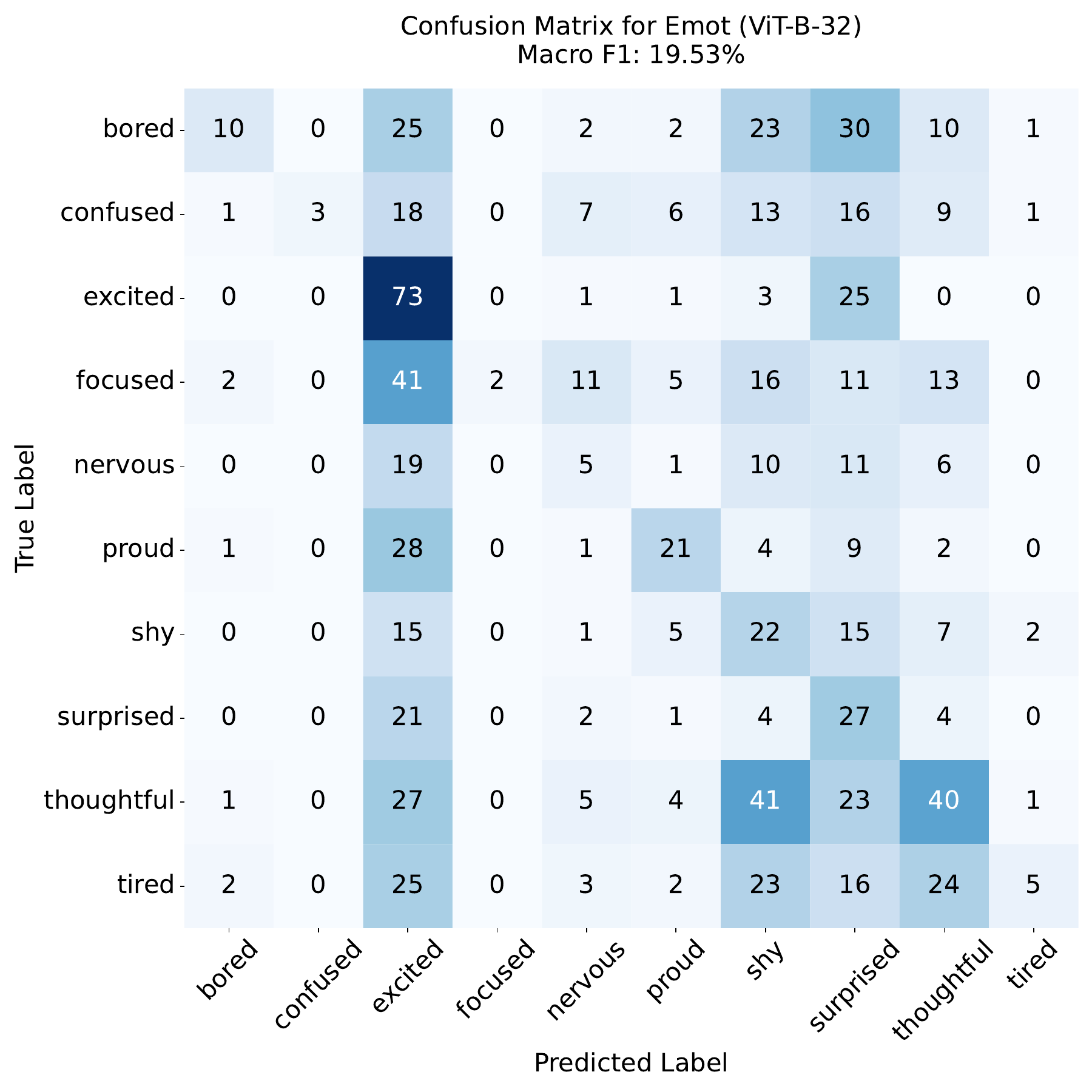}
    \hfill
    \includegraphics[width=0.325\linewidth]{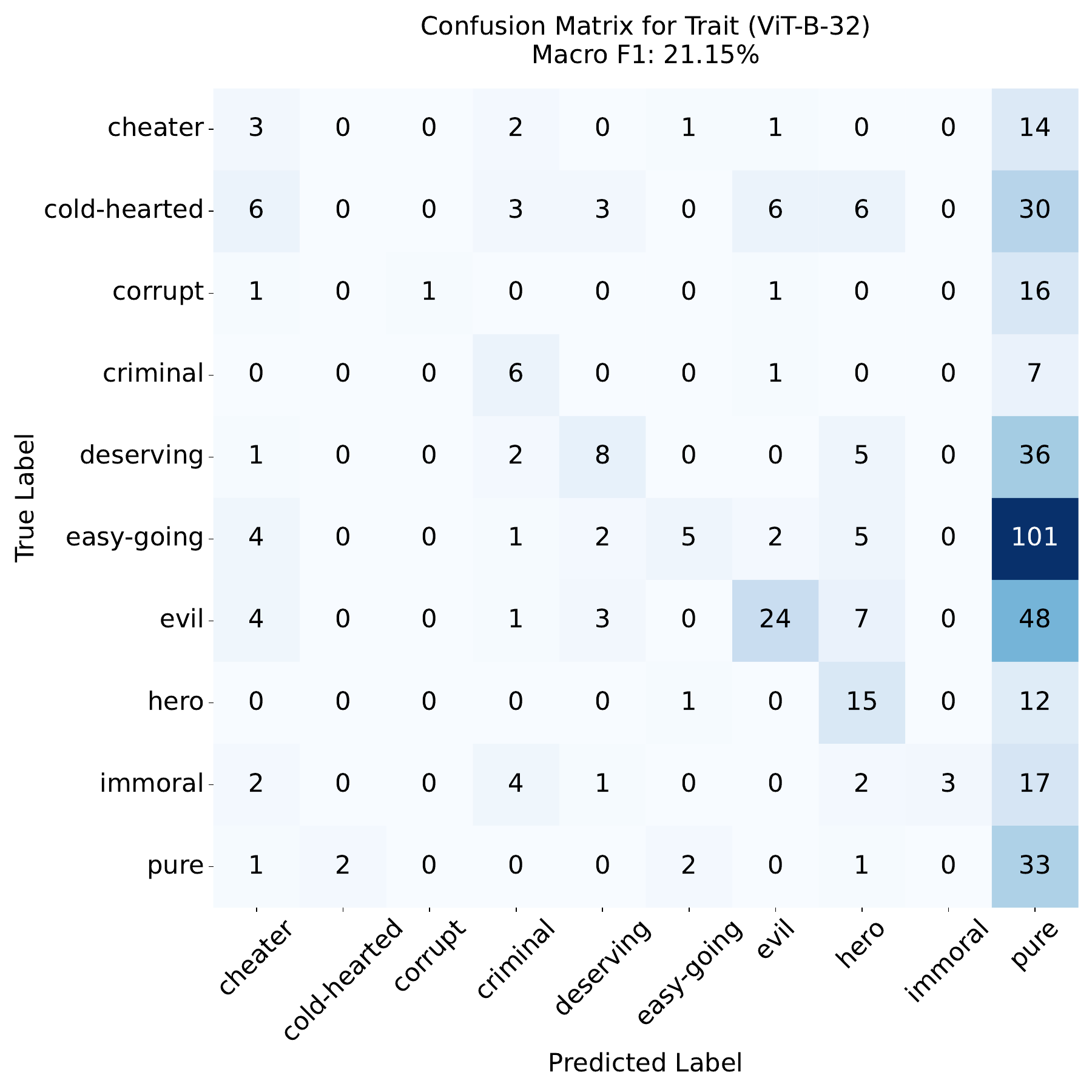}
    \\
    \includegraphics[width=0.325\linewidth]{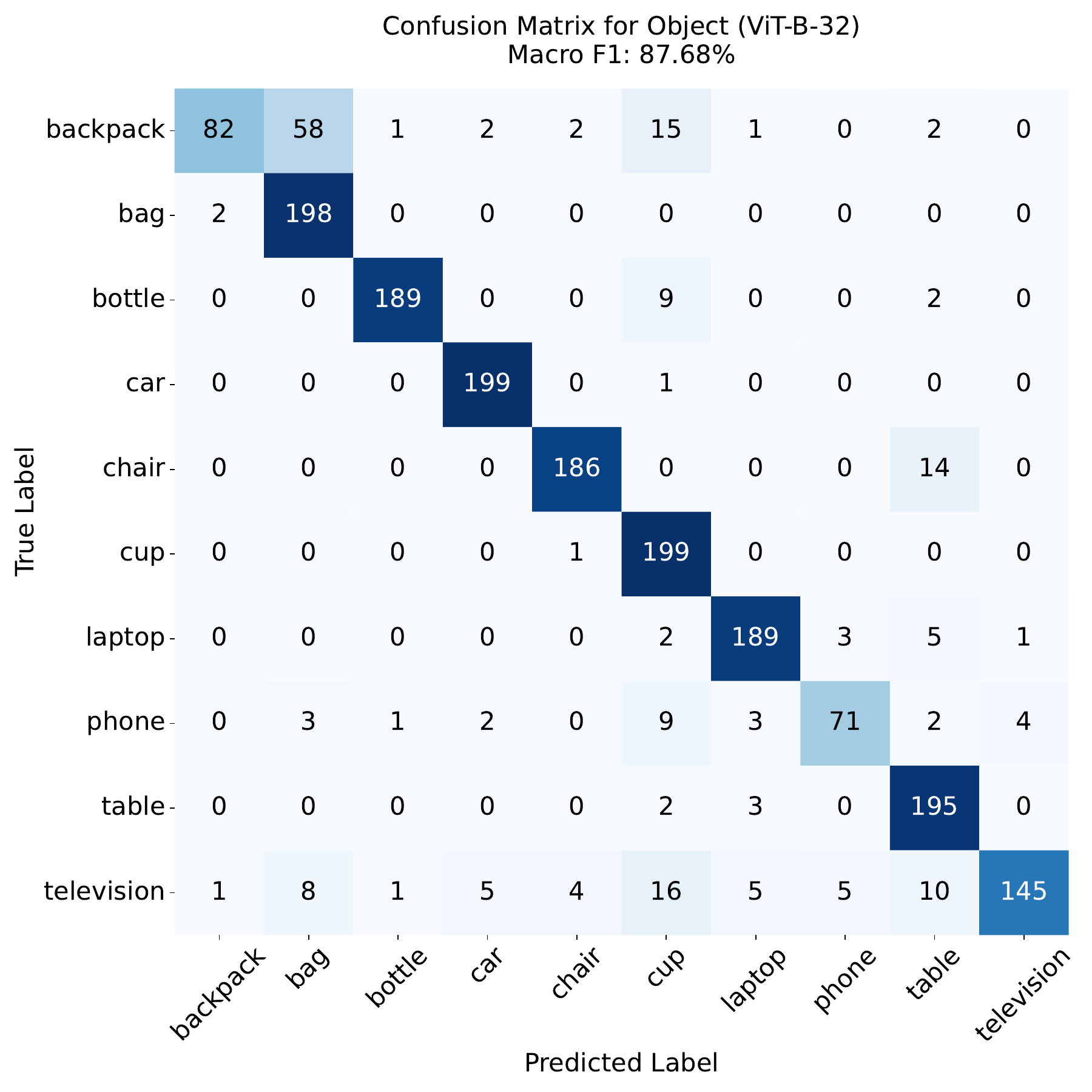}
    \hfill
    \includegraphics[width=0.325\linewidth]{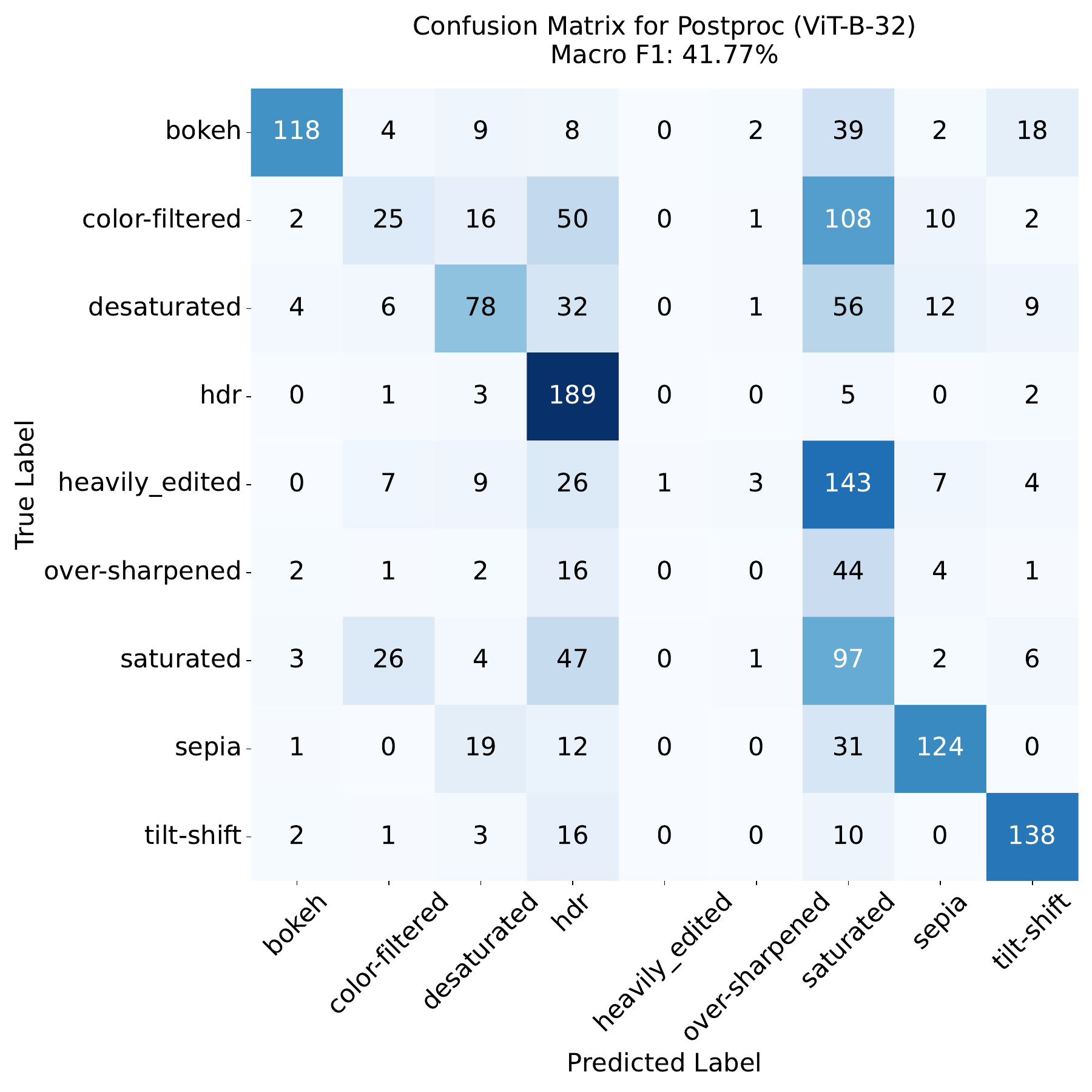}
    \hfill
    \includegraphics[width=0.325\linewidth]{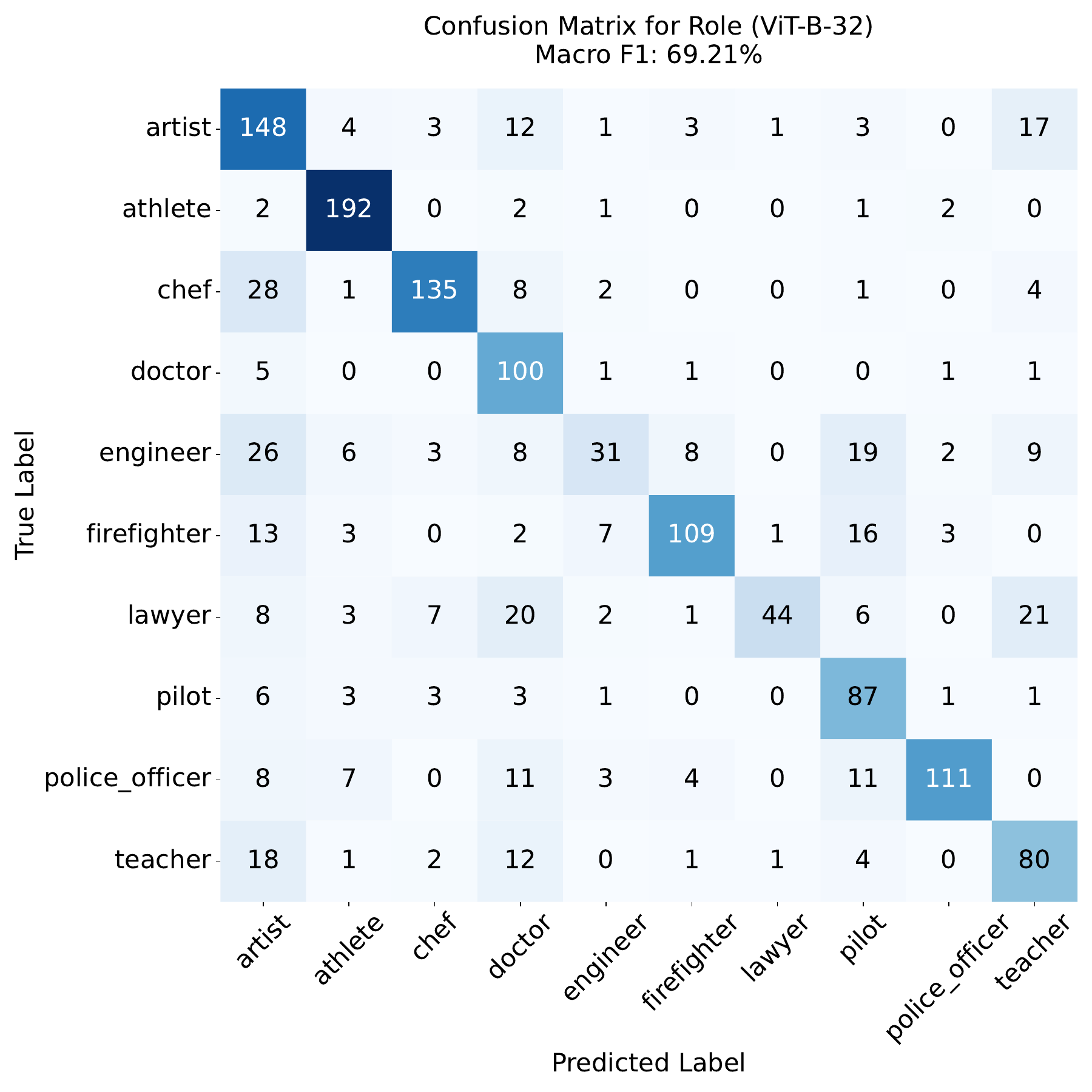}
    \\
    \includegraphics[width=0.325\linewidth]{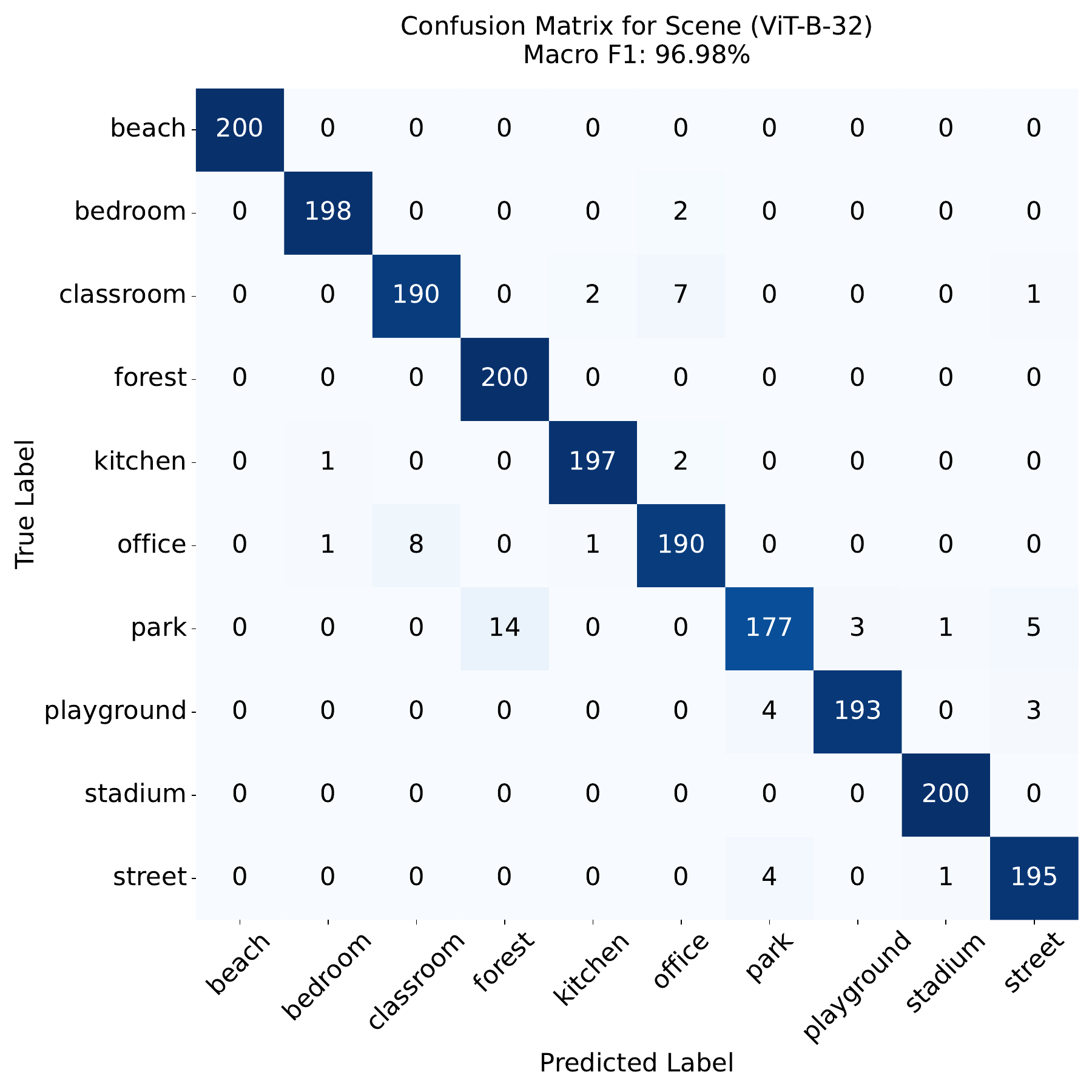}
    \hfill
    \includegraphics[width=0.325\linewidth]{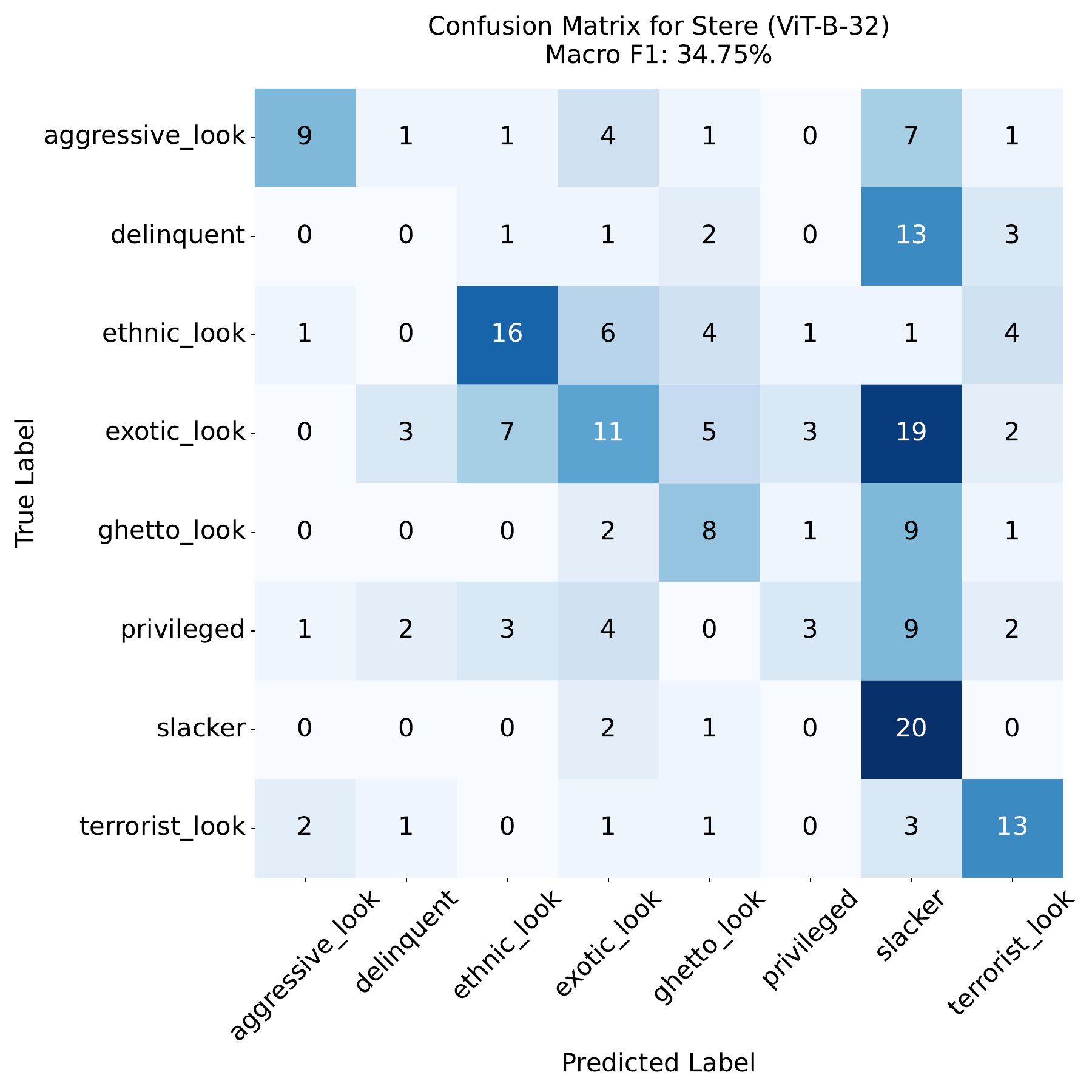}
    \hfill
    \includegraphics[width=0.325\linewidth]{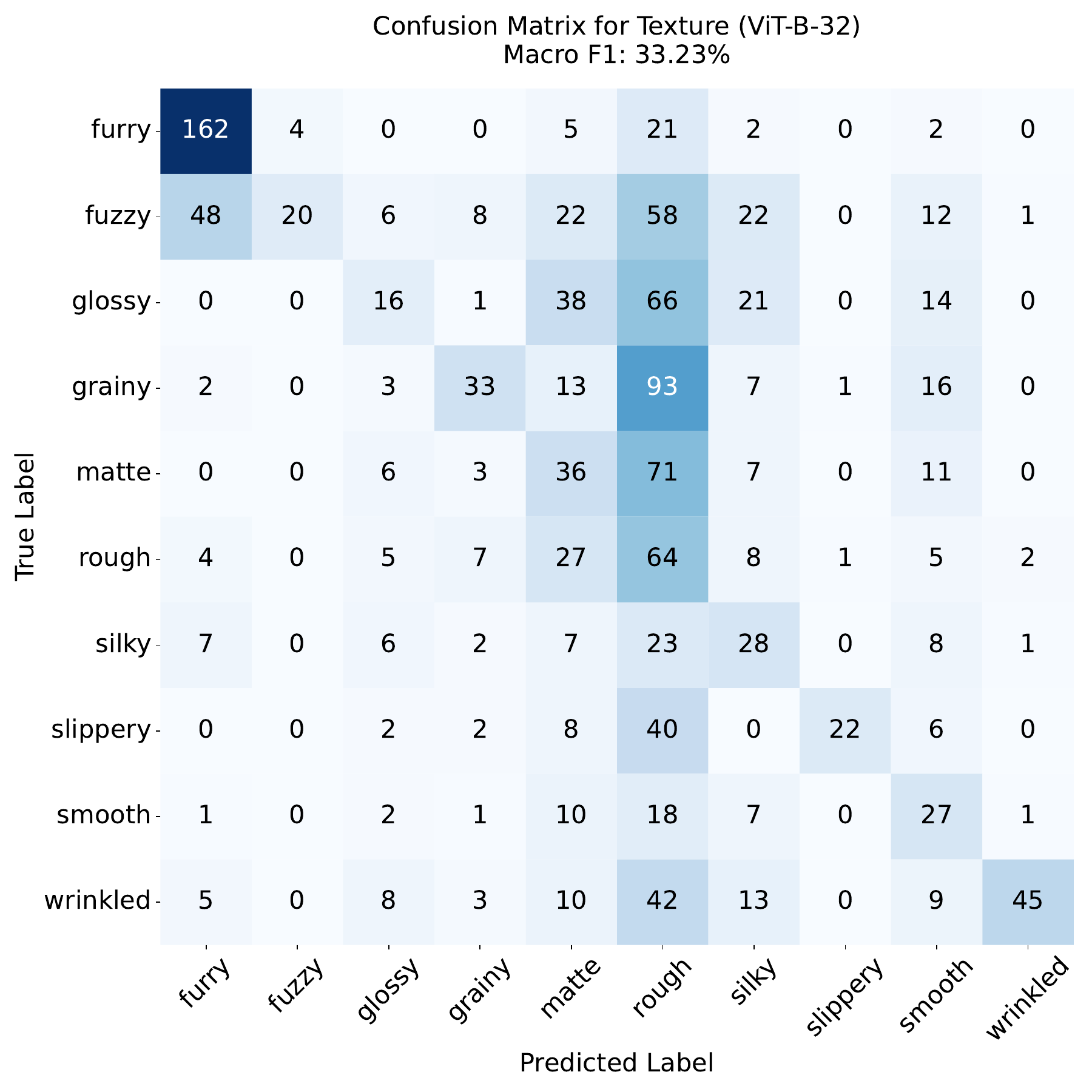}
    \\
    \includegraphics[width=0.325\linewidth]{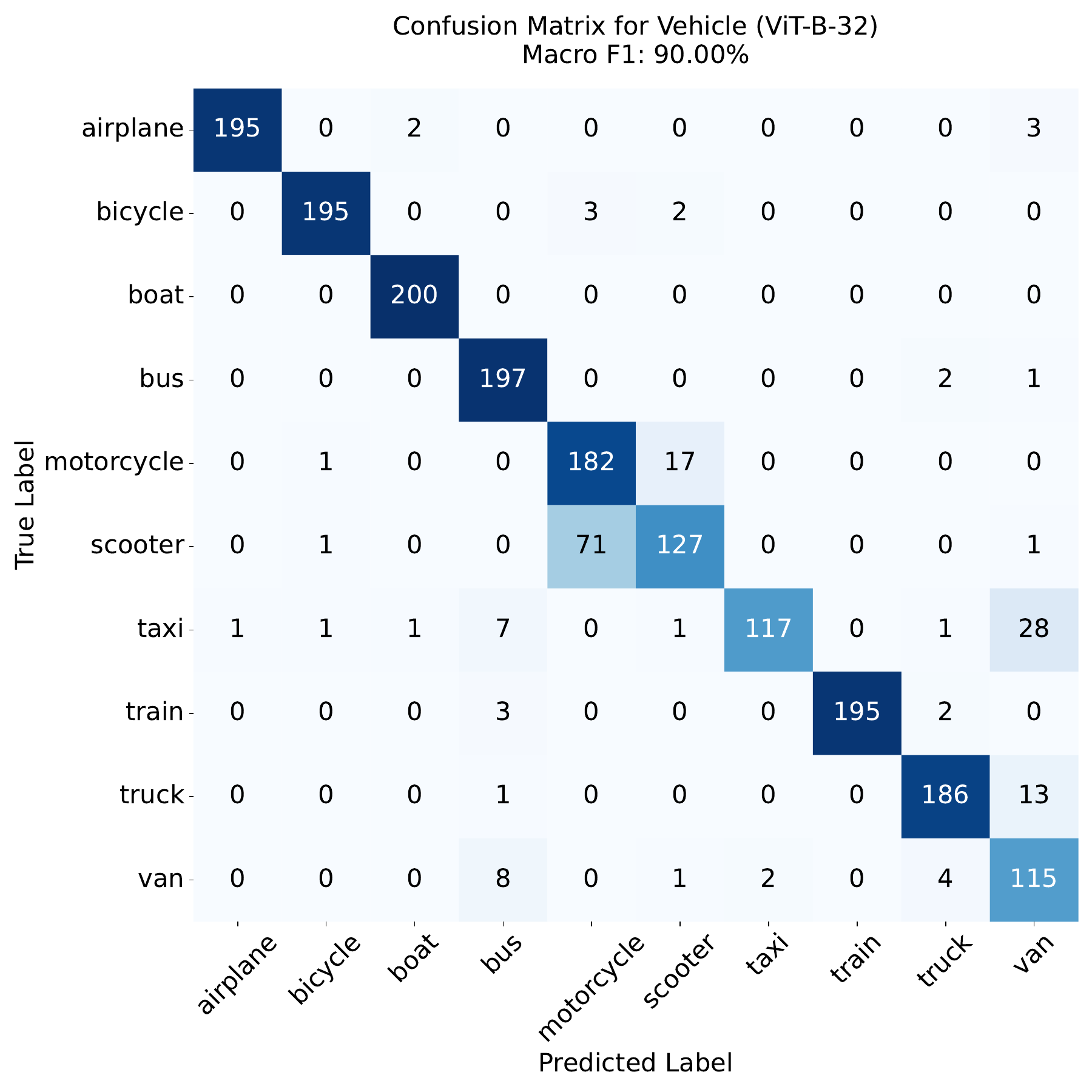}
    \hfill
    \includegraphics[width=0.325\linewidth]{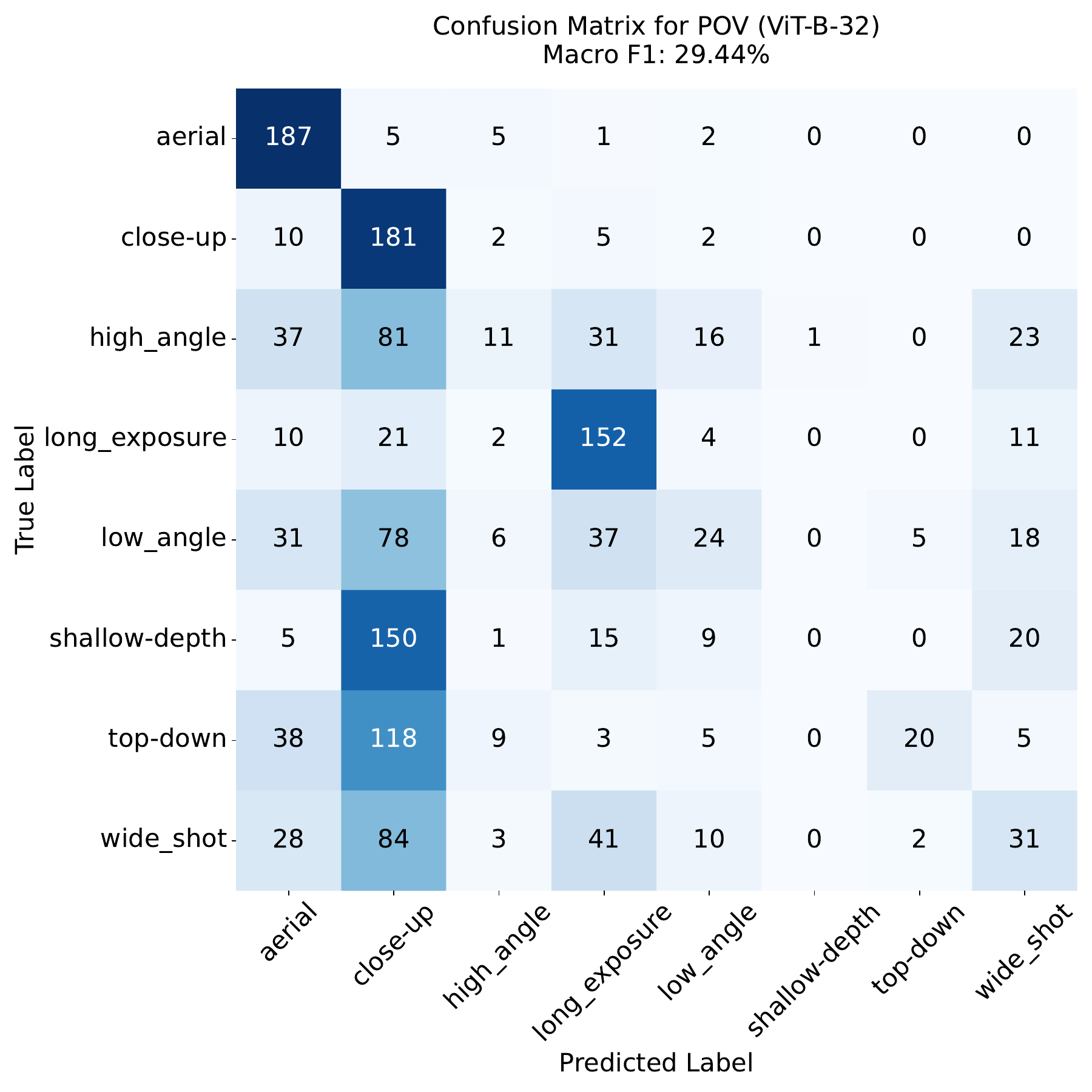}
    \hfill
    \includegraphics[width=0.325\linewidth]{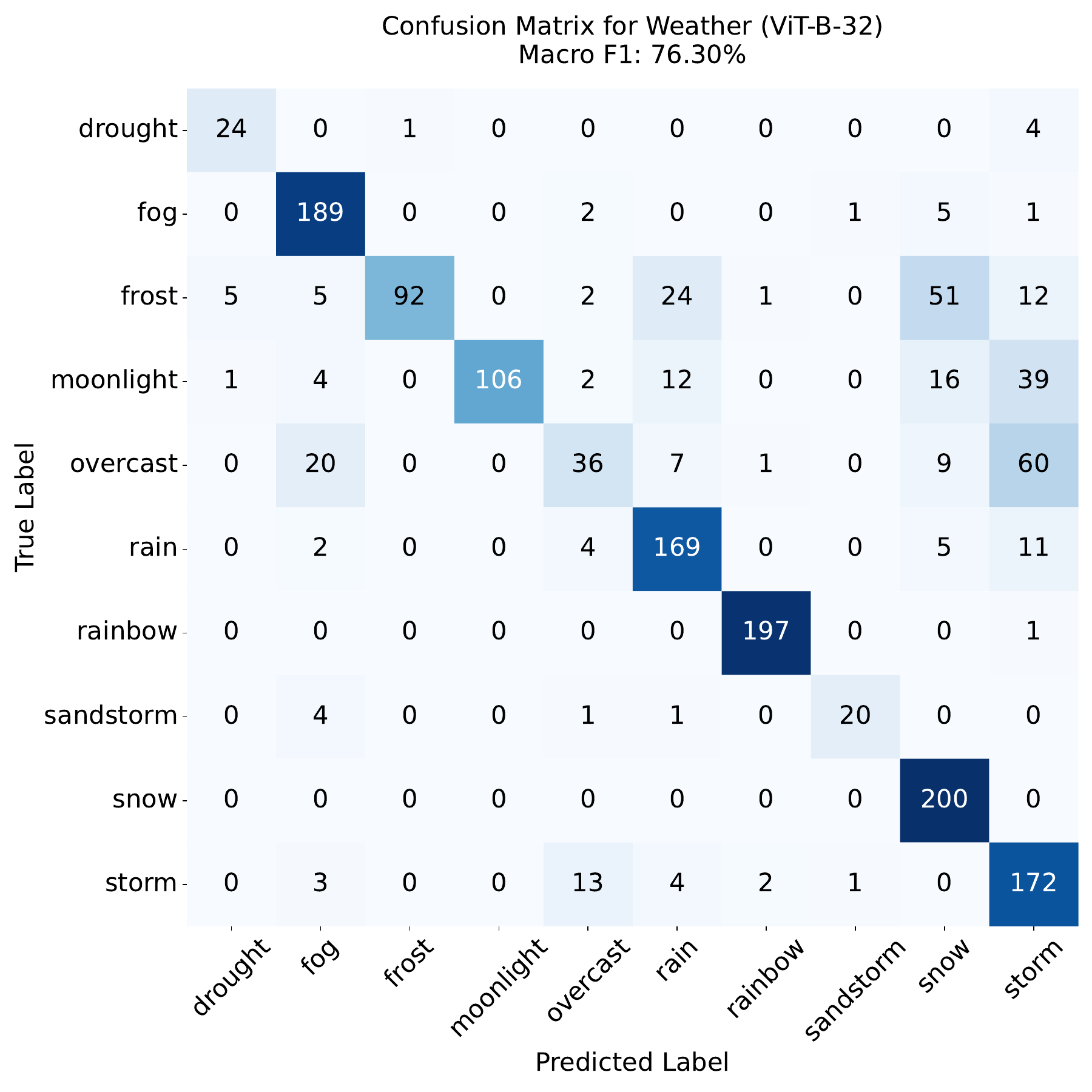}
    \caption{Confusion matrix for zero-shot prediction using OpenCLIP ViT-B-32 of each group.}
    \label{fig:confmat}
\end{figure}

\section{Computation Resources}
\label{sec:computation}

All experiments were conducted on a high-performance computing cluster equipped with 4×NVIDIA A100 GPUs (40 GB each), running CUDA 12.2 and driver version 535.161.07. The system also included an AMD EPYC 7313 16-core processor and 503 GB of RAM. For the numerical simulations, we trained over 120 models in total, requiring approximately 70 GPU-hours across 4 GPUs. On the MPI3D-Complex dataset, we trained 36 models, consuming approximately 27 GPU-hours. For the Causal3DIdent dataset, we trained 42 models, which required roughly 25 GPU-hours across 4 GPUs. Additionally, we generated 100,000 synthetic images for the Causal3DIdent dataset using Blender. Rendering was performed over four days on a separate workstation equipped with an AMD Ryzen 7 7700X 8-core processor (4.50 GHz) and a single NVIDIA RTX 4090 GPU (24 GB).

\clearpage
\section{Discussions}
\label{sec:discussion}
In this section, we reflect on the limitations of the current study, propose future research directions informed by our findings, and discuss broader implications.

\subsection{Limitations and Future Directions}

\paragraph{Estimating dataset biases.} Our study focuses on formalizing the effects of semantic misalignment, but does not directly address the empirical estimation of dataset-specific biases such as selection and perturbation bias. These biases are often latent and difficult to observe, particularly in large-scale, web-curated datasets. Yet, identifying and quantifying them is essential for improving interpretability, data quality, and downstream robustness. This task involves challenges in defining concept taxonomies, determining annotation granularity, and choosing evaluation metrics. Our framework suggests practical strategies: selection bias can be diagnosed via concept coverage statistics over a predefined vocabulary, while perturbation bias may be estimated by comparing dataset captions with those generated by strong image-to-text models, using semantic similarity or factual consistency metrics. We view the development of robust bias identification pipelines, such as those explored by \cite{yarom2023you}, as a valuable direction for future work.

\paragraph{Expanding the taxonomy of misalignment.} Multimodal data presents a wide range of potential misalignments across modalities. In this work, we deliberately focus on semantic misalignment in image-text pairs, a particularly prevalent and influential form of bias in vision-language learning. This type of misalignment is central to current multimodal research due to its relevance in large-scale pretraining paradigms and real-world applications (e.g., CLIP, ALIGN). By explicitly modeling selection and perturbation biases, our framework captures common semantic omissions and distortions that significantly affect representational quality.  Extending the analysis to other forms of misalignment, such as temporal lag in video-language data, modality-specific dropout, or ambiguity arising from multi-entity co-occurrence, poses additional challenges and may require substantially different modeling assumptions and techniques. We view our proposed abstraction as a tractable and principled foundation for understanding semantic misalignment and believe it can work as a building-block for future explorations on this direction.

\paragraph{Modeling cross-modal semantic synergy.}
Multimodal representations often exhibit semantic synergies arising not only from the alignment of shared content but also from emergent meaning generated through cross-modal integration. At a theoretical level, this aligns with ideas in grounded cognition: cognition (and by extension, semantic representation) is rooted in the integration of perceptual, motor, and introspective modalities via \emph{simulation} \cite{barsalou2008grounded}. This parallels our formal distinction between two forms of synergy: i) Some semantics may be deterministically inferable across modalities—such as emotional tone from facial expression, which our latent-space alignment framework readily accommodates. 2) Other higher-order semantics (e.g., sarcasm, irony, humor) emerge through non-additive, nonlinear interactions between modalities and thus require modeling joint-modality latent variables. While our current theory does not explicitly model these emergent semantic phenomena, establishing semantic invariance under structured misalignment is a necessary precursor. Extending our framework to capture structured, emergent cross-modal semantics remains a promising avenue for future research.

\paragraph{Formulating missing data in the latent space.}
Our current framework addresses misalignment arising primarily from fixed selection and perturbation biases in textual annotations. However, real-world datasets, particularly large-scale, user-generated corpora such as LAION-5B \citep{schuhmann2022laion}, often display random and unstructured semantic omissions. Extending our latent variable model to accommodate such random missingness (e.g., missing completely at random or missing not at random) poses both theoretical and practical challenges. Future research should examine the identifiability consequences of randomly missing semantic variables and establish conditions that ensure partial or probabilistic recovery of latent factors remains feasible.

\paragraph{Linearity of multimodal representations.}
Although our theoretical analysis guarantees identifiability of unbiased semantic factors up to general invertible transformations, empirical observations suggest learned representations are often approximately linear with respect to the underlying semantics. This motivates an important open question: under what conditions can identifiability up to a \emph{linear} transformation be rigorously established, without imposing overly restrictive assumptions on data-generating processes or latent structures? Clarifying this issue is not only theoretically significant but also practically beneficial, as linear representations enhance interpretability and simplify downstream applications. Future studies might investigate training objectives, regularization strategies, or architectural inductive biases explicitly designed to encourage linearity. Bridging this empirical regularity with robust theoretical guarantees represents a promising research frontier.

\subsection{Broader Implications}
\label{sec:impacts}

\paragraph{Linguistic relativity and epistemic constraints in multimodal AI.}
Our findings offer computational support for a contemporary perspective on linguistic relativity \citep{whorf2012language}, which posits that language shapes human perception and conceptualization. In multimodal AI, linguistic supervision implicitly determines which aspects of the visual world are foregrounded, and which are omitted. Selection and perturbation biases in textual annotations thus act as \emph{epistemic filters}, constraining the conceptual space that models can represent and generalize over. This reframes dataset design as both an epistemic and normative act: inclusion or omission in annotations encodes a stance on salience, relevance, or appropriateness. Two practical insights follow. First, achieving generalizable representations requires supervision that faithfully captures the intended semantic scope without systematic omissions. Second, curating data to intentionally include or exclude ethically sensitive or socially salient content provides a mechanism for aligning learned representations with human values \citep{bender2018data, crawford2021atlas}. When annotation is treated as an epistemic commitment, dataset design becomes a central tool for shaping both semantic fidelity and ethical alignment.

\paragraph{Human annotation biases as signals of implicit value judgments.} Beyond explicit annotation content, human annotation errors, such as consistent omissions or mislabels, reveal subtle but powerful behavioral signals. These errors are not uniformly random; rather, they concentrate in dimensions that humans intuitively deprioritize under limited attention \citep{itti2001computational}. Core attributes like risk or intentionality are rarely mislabeled, while peripheral or low-salience details are more frequently neglected. Such patterns implicitly encode a \emph{value hierarchy} over semantic factors. This perspective invites a reinterpretation: rather than discarding annotation errors as noise, we can study them as reflections of human value structure. Systematically neglected factors often carry little perceived cost when missed, suggesting lower social or cognitive importance. These regularities form a behavioral prior, informing models not only what remains invariant to environment changes, but what is \emph{valued}. Incorporating such cues enables a richer form of alignment grounded in human preferences, priorities, and ethical sensibilities \citep{gabriel2020artificial}.

\end{appendix}


\end{document}